\newsavebox{\algbox}
\newcommand\reallywidehat[1]{%
\savestack{\tmpbox}{\stretchto{%
  \scaleto{%
    \scalerel*[\widthof{\ensuremath{#1}}]{\kern-.6pt\bigwedge\kern-.6pt}%
    {\rule[-\textheight/2]{1ex}{\textheight}}
  }{\textheight}%
}{0.5ex}}%
\stackon[1pt]{#1}{\tmpbox}%
}
\newcommand {\mub} {\boldsymbol{\mu}}
\newcommand {\xb} {\boldsymbol{x}}
\newcommand{\f}[1]{\boldsymbol{#1}}
\newcommand{\ca}[1]{\mathcal{#1}}
\newcommand{\s}[1]{\mathsf{#1}}
\newcommand{\bb}[1]{\mathbb{#1}}
\newcommand{\p}[1]{\left(#1\right)}
\newcommand{\pp}[1]{\left[#1\right]}
\newcommand{\norm}[1]{\left\|#1\right\|}
\newcommand{\abs}[1]{\left|#1\right|}
\newcommand{\remove}[1]{}
\newcommand{\calA}{{\cal A}}
\newcommand{\calB}{{\cal B}}
\newcommand{\calC}{{\cal C}}
\newcommand{\calO}{{\cal O}}
\newcommand{\calP}{{\cal P}}
\newcommand{\calQ}{{\cal Q}}
\newcommand{\calS}{{\cal S}}
\newcommand{\calT}{{\cal T}}
\newcommand{\calV}{{\cal V}}
\newcommand{\calX}{{\cal X}}
\newcommand{\calY}{{\cal Y}}
\def\bmu{{\mbox{\boldmath $\mu$}}}
\def\bmu{{\mbox{\boldmath $\mu$}}}
\newcommand{\be}{\begin{equation}}
\newcommand{\ee}{\end{equation}}
\newcommand{\beqna}{\begin{eqnarray}}
\newcommand{\eeqna}{\end{eqnarray}}
\newtheorem{definition}{Definition}
\newtheorem{theorem}{Theorem}
\newtheorem{corollary}[theorem]{Corollary}
\newtheorem{lemma}{Lemma}
\let\norm\undefined 
\DeclarePairedDelimiter\norm{\lVert}{\rVert}
\title{Fuzzy Clustering with Similarity Queries}
\author{Wasim~Huleihel\thanks{W. Huleihel is with the Department of Electrical Engineering-Systems at Tel Aviv university, {T}el {A}viv 6997801, Israel (e-mail:  \texttt{wasimh@tauex.tau.ac.il}).} ~~~Arya~Mazumdar\thanks{A.Mazumdar is with the Halıcıoğlu Data Science Institute at University of California, San Diego, USA (email: \texttt{arya@ucsd.edu)}} ~~~Soumyabrata~Pal\thanks{S. Pal is with the Computer Science Department at the University of Massachusetts Amherst, Amherst, MA 01003, USA (email: \texttt{spal@cs.umass.edu}).}}
\begin{document}

\maketitle

\begin{abstract}

The fuzzy or soft $k$-means objective is a popular generalization of the well-known $k$-means problem, extending the clustering capability of the $k$-means to datasets that are uncertain, vague and otherwise hard to cluster. In this paper, we propose a semi-supervised active clustering framework, where the learner is allowed to interact with an oracle (domain expert), asking for the similarity between a certain set of chosen items. We study the query and computational complexities of clustering in this framework. We prove that having a few of such similarity queries enables one to get a polynomial-time approximation algorithm to an otherwise conjecturally NP-hard problem. In particular, we provide algorithms for fuzzy clustering in this setting that ask $O(\mathsf{poly}(k)\log n)$ similarity queries and run with polynomial-time-complexity, where $n$ is the number of items. The fuzzy $k$-means objective is nonconvex, with $k$-means as a special case, and is equivalent to some other generic nonconvex problem such as non-negative matrix factorization. The ubiquitous Lloyd-type algorithms (or alternating minimization algorithm) can get stuck at a local minima. Our results show that by making few similarity queries, the problem becomes easier to solve. Finally, we test our algorithms over real-world datasets, showing their effectiveness in real-world applications. 
\end{abstract}

\section{Introduction}
The $k$-means objective for clustering is perhaps the most ubiquitous of all unsupervised learning paradigms. It is extremely well studied, with Lloyd's algorithm being the most popular solution method, while the problem in general being NP Hard~\cite{blum2020foundations,berkhin2006survey}. A variety of approximate solutions for $k$-means exists~(such as, \cite{de2003approximation,ArthurV07}). 

In recent times, there have been efforts to bring in a flavor of active learning in $k$-means clustering; by allowing the learner to make  a limited number of carefully chosen label/membership queries~\cite{Ashtiani16,ailon2018approximate,chien2018query}. In this setting, the main objective is to show that a polynomial-time solution exists provided that small number of such queries to an oracle is permitted. Note that, an alternative to membership query (i.e., query of the form ``does the $i^{\s{th}}$ element belong to the $j^{\s{th}}$ cluster'') is the same-cluster/similarity query (i.e., ``do elements $i$ and $j$ belong to the same cluster''). Given representatives from each cluster is available, one can simulate any membership query with at most $k$ same cluster queries. Hence whatever is achievable with membership queries, can be also achieved by same cluster queries by asking at most $k$ times as many queries~\cite{Ashtiani16}. 

How realistic is to allow the learner to make such label queries, and how realistic is the oracle assumption? It turns out that many clustering tasks, primarily entity-resolution, are being delegated to crowdsourcing models. One can easily assume that the crowd is playing the role of an oracle here.
It is natural in these models to assign crowd workers with sub-tasks, which can either be answering the membership query or the same cluster query~\cite{balcan2008clustering,vesdapunt2014crowdsourcing,wang2012crowder,mazumdar2017theoretical,firmani2018robust}.

Fuzzy $k$-means or soft $k$-means is a generalized model of the $k$-means objective that covers lot more grounds being applicable to dataset where datapoints show affinity to multiple labels, the clustering criteria are vague and data features are unavailable~\cite{bezdek2013pattern}. In fuzzy clustering, instead of an element being part of one fixed cluster, it is assumed that it has a membership value between $0$ and $1$ to each of the clusters. The objective of the clustering is to recover all of the membership values for each element or datapoint. This relaxation in membership values from $\{0,1\}$ to $[0,1]$ makes the fuzzy $k$-means an attractive model for overlapping clusters~\cite{zhang2007identification}, multilabel classification~\cite{lin2010mr} and numerous other applications such as, pattern recognition~\cite{bezdek2013pattern}, computational biology~\cite{ben1999clustering}, bioinformatics~\cite{valafar2002pattern}, image segmentation~\cite{ahmed2002modified,chuang2006fuzzy} and so on.

What is more interesting is that solving fuzzy clustering is equivalent to approximate solution of  nonnegative symmetric matrix factorization problems~\cite{ding2005equivalence}. Given such applications, it is somewhat surprising that the theoretical results known about fuzzy clustering is quite limited. As is even the computational tractability of fuzzy $k$-means is only conjecturally known. In fact algorithmic study of fuzzy $k$-means algorithm seems to have been only recently started~\cite{blomer2016theoretical,Blomer18,QianLiu20}.

\subsection{Membership vs. Similarity Queries} As mentioned above, in the hard-clustering case, one can simulate any membership query with at most $k$ same-cluster (or, similarity) queries \cite{Ashtiani16}. In fuzzy clustering, we are given a set of $n$ vectors (items) in the $d$-dimensional Euclidean space, and we are tasked the problem of determining the $k$-dimensional {\em membership vector} for each of the items. The clustering algorithm has access to the set of vectors, and
can also make queries about the underlying soft clustering. As in hard-clustering, these queries can take at least two forms. Specifically, membership queries refer to the existence of a \emph{membership-oracle} which whenever queried, replies with the membership value of the $i^{\rm th}$ item to the $j^{\rm th}$ cluster, i.e., the $j^{\rm th}$ entry of the $i^{\rm th}$ membership vector. One could argue that querying the membership values, and specifically, the existence of such a membership-oracle, is not realistic, and often impractical in real-world settings since it requires knowledge of the relevant clusters. Instead, the similarity query model that takes two or three elements as input and ask \emph{``How similar are all these elements?"} is much easier to implement in practice. Accordingly, in a similar vein to hard-clustering,
we show that membership queries can be simulated by similarity queries, measured by the inner-product between pairs (and triplets) of membership vectors. Finally, as will be explained in more detail, the responses of the oracle or the target solution do not need to correspond to the optimum solution of the fuzzy $k$-means objective, but instead need to satisfy a few mild conditions. Accordingly, the target solution can represent noisy/quantized version of the optimum solution of the fuzzy $k$-means objective function making our set-up highly practical in crowdsourcing settings.

A natural question that one can ask about fuzzy clustering following the lead of \cite{Ashtiani16} is how many membership/similarity queries one may ask so that the clustering can be efficiently solved with a good approximation of the target solution? In this paper, our main motivation is to answer this rather basic question. It has to be noted that revealing a fractions of labels during clustering algorithm in fuzzy $k$-means has been studied with some heuristics in~\cite{pedrycz1997fuzzy}, but not only the model was different from ours, a rigorous theoretical study was also lacking in the past literature.

\subsection{Our Contributions and Techniques} 
We design several algorithms and analyze their query and computational complexities. Specifically, the first algorithm we propose works in two phases: in the first phase, we estimate the center of each cluster using \emph{non-adaptive} queries, and in the second phase we recover the membership of each element to each cluster using the estimated centers obtained in the first phase. We show with this algorithm it is possible to get an approximation of the membership vectors by making only $O({\rm poly}(k/\beta)\log n)$ queries, where $\beta$ is ratio of the volume of the smallest cluster to the average volume, informally speaking. Then, we show that by allowing one to ask \emph{adaptive/sequential} queries in the first phase, the polynomial dependence on $\beta$ of the query complexity can be further improved. Despite the strong dependency of the query complexity on $\beta$, for a large regime of parameters, our query complexity is non-trivial and sublinear in $n$. Interestingly, for the special case of $k=2$, we can completely get rid of the dependency of the query complexity on $\beta$, and it suffices to ask $O(\log^2 n)$ queries only.

For our theoretical results to hold, we make an assumption on the dataset that quantifies the clusterability of the data. We would like to emphasize that our assumption is only needed for deriving the theoretical guarantees; our algorithms are demonstrated to perform well on real-world datasets irrespective of whether our assumption holds or not. The computational complexity of all the algorithms scales as $d$ times the number of queries. 

It is to be noted that the analysis of the aforementioned algorithms are technically much more demanding than solving the analogous query complexity problem for hard $k$-means. For the case of hard $k$-means, a general principle would be to sample a small set of items and estimate the centre of the largest cluster from the samples (see, \cite{Ashtiani16}). Subsequently all the items closest to the estimated centre can be assigned to it and the cluster can be removed from consideration. One can iterate over this algorithm to find all the clusters. However the size of a cluster is not well-defined in the fuzzy setting; and also it is not possible to assign items to a cluster and remove them from consideration, since they can very well have nonzero memberships to other clusters. 

To tackle these difficulties, we have to use more challenging technical tools and analysis methods than the hard $k$-means literature. In a nutshell, since at any step of the algorithm we cannot remove items from consideration, we propose sophisticated random sub-sampling algorithms that explore the space of items efficiently. Specifically, having reliable approximations for the centers and memberships of some processed clusters, we partition the space of items into bins that have the same approximated sum of membership weights to unprocessed clusters. Sampling equally from every bin ensures that we obtain enough samples from items which have high membership weight to those clusters which have not yet been approximated. Also, contrary to hard $k$-means where a simple (unbiased) empirical mean estimator is used to approximate the means, in the fuzzy setting, we end up with a form of self-normalized importance sampling estimators. While these estimators are biased, we managed to bound their deviations in an elegant way using concentration inequalities. We tested our algorithms over both synthetic and real-world datasets, and illustrate the trade off and effect of different values of $\beta$ and $k$ on the estimation error and query complexity.

\subsection{Related Work} Clustering with limited label-queries have mostly followed two different lines of works. In the first line of work, queries were used to make the objective based clustering, such as $k$-means, polynomial time solvable~\cite{Ashtiani16,ailon2018approximate,chien2018query,bressan2020exact,bressan2021exact,bressan2021margin}. 
The problem being considered here is most closely related to this line of work, albeit, instead of hard $k$-means, we focus on a soft objective.

In the second line of work, instead of an objective based clustering, it is assumed that there exists a ground-truth clustering, and by label-querying, one gets a (noisy) answer from an oracle about this ground-truth. The aim of this line of work has been to give statistical-computational guarantees on recovery of the ground truth~\cite{mazumdar2017clustering,mazumdar2017query,mazumdar2017semisupervised,ailon2018approximate,bressan2019correlation,tsourakakis2017predicting,saha2019correlation}. The work in this canon that is closest to our problem is \cite{huleihel2019same}, where an overlapping clustering groundtruth was considered. At a high level, we also consider overlapping clusters, however, focus on objective-based clustering therein.

It is worth noting that clustering with same cluster, or other label queries, has found connections with and application to correlation clustering~\cite{bressan2019correlation,saha2019correlation}, community detection~\cite{mazumdar2017query,chien2018community}, heavy-hitters~\cite{sarmasarkar2020query} and possibly other areas, that we might have missed. Our problem can be seen as approximately equivalent to non-negative symmetric matrix factorization \cite{Paatero94} with limited access to the entries of the factors (via queries) \cite{ding2005equivalence}.

\subsection{Organization} The rest of the paper is organized as follows. In Section~\ref{sec:model} we present our model and the learning problem. Then, in Section~\ref{sec:main_results} we present our algorithms and their theoretical guarantees. The main theoretical results can be found in Theorems~\ref{thm:1},\ref{thm:2},\ref{thm:3state}, followed by several experimental results over both synthetic and real-world datasets. Finally, our conclusion an outlook appear in Section~\ref{sec:conc}.  Some technical proofs and discussions are relegated to the appendices.

\section{Model Formulation and Background}\label{sec:model}

\subsection{The Optimization Problem}\label{subsec:optimiz}

We start by describing the hard $k$-means problem, and then move forward to the fuzzy $k$-means problem (soft clustering). Let $\calX\subset\mathbb{R}^d$ be a set of $d$-dimensional points, with $|\calX|=n$. We denote this set of vectors by $\{\xb_i\}_{i=1}^n$, and assume without loss of generality that $\xb_i\in\calB(\f{0},\s{R})$, for any $i\in[n]$, where $\calB(\f{a},\s{R})$ is the $L_2$-Euclidean ball of radius $\s{R}\in\mathbb{R}_+$ centered around $\f{a}\in\mathbb{R}^d$. Let $\calC_{\calX}\triangleq\{\calC_1,\calC_2,\ldots,\calC_k\}$ be a clustering (or, partition) of $\calX$, where $k\in\mathbb{N}$ is the number of clusters. We say that a clustering $\calC_{\calX}$ is center-based if there exists a set of centers $\mub\triangleq\{\mub_1,\mub_2,\ldots,\mub_k\}\subset\mathbb{R}^d$ such that the clustering is induced by the Voronoi diagram over those center points. The objective function of hard $k$-means is
\begin{align}
    \mathsf{J_{km}}(\calX,\bmu,\s{U}) = \sum_{i=1}^n\sum_{j=1}^k\s{U}_{ij}\norm{\xb_i-\mub_j}_2^2,\label{eqn:hard}
\end{align}
where $\s{U}\in\{0,1\}^{n\times k}$ is the partition matrix, i.e., the membership weight $\s{U}_{ij}$ is 1 if $\xb_i\in\calC_j$, and 0 otherwise. In hard clustering each data point is assigned exactly to one cluster and every cluster must contain at least one data point. The goal of $k$-means is to minimize \eqref{eqn:hard} w.r.t. $(\bmu,\s{U})$.

In soft-clustering there is no partitioning of $\calX$. Instead, we describe the $k^{\s{th}}$ cluster of a fuzzy clustering as a vector of the fractions of points assigned to it by the membership function. In particular, the memberships weights take on real-valued numbers in $[0,1]$ rather than $\{0,1\}$, as for hard clustering. The fuzzy $k$-means problem is defined as follows.
\begin{definition}[Fuzzy $k$-means]\label{def:fuzzy_prim}
Let $\alpha\geq1$ and $k\in\mathbb{N}$. The fuzzy $k$-means problem is to find a set of means $\bmu = \{\bmu_\ell\}_{\ell\in[k]}\subset\mathbb{R}^d$ and a membership matrix $\mathsf{U}\in[0,1]^{n\times k}$ 
minimizing
\begin{align}
\mathsf{J_{fm}}(\calX,\bmu,\s{U})\triangleq\sum_{i=1}^n\sum_{j=1}^k\mathsf{U}^\alpha_{ij}\norm{\xb_i-\bmu_j}^2_2,\label{eqn:FuzzyObjective}
\end{align}
subject to 
$\sum_{j=1}^k\s{U}_{ij}=1$ and $0<\sum_{i=1}^n\s{U}_{ij}<n$, for all $i\in[n]$ and $j\in[k]$.
\end{definition}
The parameter $\alpha$ is called the \emph{fuzzifier}, and is
not subject to optimization. It can be shown that when $\alpha\to1$, the fuzzy $k$-means solution coincides with that of the hard $k$-means problem.
Similarly to the classical $k$-means problem, it is easy to optimize the means or memberships of fuzzy $k$-means, assuming the other part of the solution is fixed. Specifically, given a set of means $\bmu$, it can be shown that the optimal membership weights minimizing the cost in \eqref{eqn:FuzzyObjective} are (see, e.g., \cite[Theorem 11.1]{bezdek2013pattern})
\begin{align}
    \s{U}_{ij} = \pp{\sum_{\ell=1}^k\p{\frac{\norm{\xb_i-\mub_j}}{\norm{\xb_i-\mub_\ell}}}^{2/(\alpha-1)}}^{-1},\label{eqn:U_solve}
\end{align}
for all $i\in[n],\;j\in[k]$. On the other hand, given a set of membership weights $\s{U}$, the optimal centers are given by
\begin{align}
\mub_j = \frac{\sum_{i=1}^n\s{U}_{ij}^\alpha\xb_i}{\sum_{i=1}^n\s{U}_{ij}^\alpha},\label{eqn:mu_solve}
\end{align}
for all $i\in[n],\;j\in[k]$. Iterating these solutions is known as the Lloyd's algorithm \cite{Lloyd82}, which provably lands on a local optima. It is well-known, however, that finding the optimal solution (i.e., global minimia) to the $k$-means problem is is NP-hard in general \cite{Mahajan09,Vattani_thehardness}. While the same result is unknown for the fuzzy formulation, it is strongly believed to be the case (e.g., \cite{blomer2016theoretical,Kathrin17}).

Finally, although several validity indexes for the performance of fuzzy $k$-means solutions have been proposed in the literature (see, e.g., \cite{Fukuyama1989ANM,Gath89}), the Xie–Beni \cite{Xie91} is the most widely used in the literature. This measure is defined as follows\footnote{In \eqref{eqn:XBmeasure} we divide by $nk$ and not just by $n$, as in \cite{Xie91}, where $k$ was consider a fixed parameter. If $k$ is allowed to grow (e.g., with $n$), then our definition makes more sense, as in general the numerator in \eqref{eqn:XBmeasure} can grow with $k$.}
\begin{align}
    \mathsf{XB}(\calX,\bmu,\s{U}) \triangleq \frac{\mathsf{J_{fm}}(\calX,\bmu,\s{U})}{nk\cdot\min_{i\neq j}\norm{\bmu_i-\bmu_j}_2^2}.\label{eqn:XBmeasure}
\end{align}
For the rest of this paper, we denote a clustering by $\calP\triangleq(\f{\mu},\s{U})$, and we replace $\mathsf{XB}(\calX,\bmu,\s{U})$ by $\mathsf{XB}(\calX,\calP)$. Accordingly, the optimal solution to the fuzzy $k$-means problem (Defintion \ref{def:fuzzy_prim}) is denoted by $\calP^\star$. 
Finally, we say that a clustering $\calP$ is \emph{consistent center-based} if:
\begin{itemize}
    \item Given membership weights $\s{U}$, the centers $\f{\mu}$ are given by \eqref{eqn:mu_solve}.
    \item The membership weights $\s{U}$ are monotone, i.e., if $\norm{\f{x}_i-\f{\mu}_j}_2\leq\norm{\f{x}_\ell-\f{\mu}_j}_2$ then $\s{U}_{ij}\geq\s{U}_{\ell j}$, for any $i,\ell\in[n]$ and $j\in[k]$. 
\end{itemize}
The first condition appears also in \cite{Ashtiani16}. The second condition excludes inconsistent membership weights $\s{U}$. In particular, as is shown in \cite{Kathrin17}, and in Lemma~\ref{lem:fuzyy_prop} in the appendix, both conditions are satisfied by the optimal solution $\calP^\star$, as well as by \eqref{eqn:mu_solve}. We would like to clarify that the above assumptions are \emph{only} required for the theoretical analysis; Our proposed algorithms can be implemented regardless of these assumptions. As is shown in Appendix~\ref{app:exper}, in real-world applications, our algorithm works well even if these assumptions do not hold.

\subsection{Domain Expert/Oracle}
Our goal is to construct \emph{efficient} algorithms with the aid of a domain expert who can give an approximate solution to the fuzzy $k$-means problem. Specifically, given a clustering $\calP$, a $\calP$-oracle is a function $\calO_{\calP}$ that answers queries consistent with $\calP$. One can think of such an oracle as a user that has some idea about its desired solution, enough to answer the algorithm's queries. Importantly, note that as long as $\calP$ is a consistent center-based clustering, it  need not be the optimal solution $\calP^\star$ minimizing \eqref{eqn:FuzzyObjective}. The algorithm then tries to recover (or, approximate) $\calP$ by querying the oracle a small number of times. For hard clustering, the same-cluster query oracle model (i.e., ``do elements $i$ and $j$ belong to the same cluster") was considered in \cite{Ashtiani16}. 
Using a certain number of such same-cluster queries and the data points $\calX$, the clustering $\calP$ can be found efficiently \cite{Ashtiani16}, while finding the optimal solution without the aid of an oracle is NP-hard. For the fuzzy formulation, one can think of analogous oracle models, and in this paper, we focus on the following.
\begin{definition}[Membership-Oracle]\label{def:oracle_fuzzy}
A membership query asks the membership weight of an instance $\xb_i$ to a cluster $j$, i.e., $\calO_{\mathrm{fuzzy}}(i,j) = \s{U}_{ij}$. 
\end{definition}
Basically we ask the oracle for the membership weight of $\xb_i$ to cluster $\calC_j$. Using a certain amount of such queries and the data set $\calX$ we would like to approximate the solution for fuzzy $k$-means problem. As discussed in the Introduction, one could argue that querying the membership values, and specifically, the existence of such a membership-oracle, is not realistic, and often impractical in real-world settings since it requires knowledge of the relevant clusters. Instead, the following similarity query models are easy to implement in practice. 
\begin{definition}[Similarity-Oracle]\label{def:oracle_similarity_body}
A fuzzy pairwise similarity query asks the similarity of two distinct instances $\f{x}_p$ and $\f{x}_q$ i.e., $\calO_{\mathrm{sim}}(p,q) = \langle \s{U}_p,\s{U}_q \rangle$. A fuzzy triplet similarity query asks the similarity of three distinct instances $\f{x}_p,\f{x}_q,\f{x}_{r}$, i.e., $\calO_{\mathrm{triplet}}(p,q,r) = \sum_{t \in [k]}\s{U}_{pt}\s{U}_{qt}\s{U}_{rt}$.
\end{definition}
We show in Appendix \ref{app:equivalence2}, however, that each membership query can be simulated with a few similarity queries. From a theoretical perspective, membership queries can be simulated with only fuzzy pairwise similarity queries if there exists many \emph{pure} instances (instances that have a membership weight of \texttt{1} to a particular cluster in the dataset), see Lemma~\ref{lem:pairw} in appendix. Unfortunately this strong condition is necessary as well. But it turns out that by using a few triplet similarity queries at the beginning, we can resolve this issue.
Specifically, we prove the following  result, informally stated, that is interesting in its own right.
\begin{lemma}[\emph{Membership to Similarity Queries Reduction, Informal}]
Suppose there exists an algorithm $\ca{A}_n$ with time complexity $\s{T}_n$ that uses $m$ membership queries to provide some guaranteed solution of the fuzzy $k$-means problem. In that case, under mild conditions on the dataset, the same guarantee can be obtained with $O(k^3)$ fuzzy triplet similarity queries and $O(km)$ fuzzy pairwise similarity queries, with a time complexity of $\s{T}_n+O(mk+k^3)$. 
\end{lemma}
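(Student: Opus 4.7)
The plan is a two-stage reduction: a one-time \emph{setup} using $O(k^3)$ triplet queries that learns the membership vectors of $k$ anchor items $p_1,\ldots,p_k$, followed by an \emph{online} stage in which each of the $m$ membership queries issued by $\ca{A}_n$ is simulated with $k$ pairwise queries. The ``mild condition'' on the dataset is just that there exist $k$ items whose membership vectors $\s{U}_{p_1},\ldots,\s{U}_{p_k}$ are linearly independent; collecting them as the rows of a matrix $P\in[0,1]^{k\times k}$, this is the full-rank condition on $P$, which holds generically whenever the dataset genuinely exhibits $k$ distinct clusters and $n\ge k$.

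Given $P$ (and hence its inverse), any membership query $\calO_{\mathrm{fuzzy}}(i,j)$ is simulated by first obtaining
\begin{equation*}
q_{j'}\;=\;\calO_{\mathrm{sim}}(i,p_{j'})\;=\;\langle\s{U}_i,\s{U}_{p_{j'}}\rangle,\qquad j'\in[k],
\end{equation*}
at a cost of $k$ pairwise queries, and then reading off the $j$-th entry of $\s{U}_i^{\top}=P^{-1}q$. Aggregated over the $m$ membership queries, this contributes $O(km)$ pairwise queries and $O(mk)$ additional running time.

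The nontrivial step is learning $P$ itself. For this I would assemble the symmetric 3-tensor $T\in\reals^{k\times k\times k}$ whose entries $T_{ij\ell}=\calO_{\mathrm{triplet}}(p_i,p_j,p_\ell)=\sum_{t=1}^k\s{U}_{p_i,t}\s{U}_{p_j,t}\s{U}_{p_\ell,t}$ are obtained via $O(k^3)$ triplet queries (using a slightly enlarged anchor set of size $3k$ if the oracle insists on distinct arguments, which preserves the $O(k^3)$ query count). This tensor admits the rank-$k$ symmetric CP decomposition
\begin{equation*}
T\;=\;\sum_{t=1}^{k}\bp^{(t)}\otimes\bp^{(t)}\otimes\bp^{(t)},
\end{equation*}
where $\bp^{(t)}\in\reals^k$ is the $t$-th column of $P$. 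Because $P$ has full rank, Jennrich's algorithm (or any symmetric CP solver) recovers the $\bp^{(t)}$'s in $O(k^3)$ time, uniquely up to a permutation of the index $t$ --- harmless, as it merely relabels clusters --- with any residual sign/scale ambiguity removed by the simplex constraint $\sum_t\s{U}_{p_i,t}=1$ and the nonnegativity of $P$. Combining the two stages yields $O(k^3)$ triplet queries, $O(km)$ pairwise queries, and total runtime $\s{T}_n+O(mk+k^3)$, as claimed.

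The main obstacle is justifying that triplet queries are both necessary and sufficient here: pairwise queries alone reveal only the Gram matrix $PP^\top$ and leave an orthogonal ambiguity, which is precisely why triplets are needed. Making this identifiability rigorous amounts to showing, under the ``mild conditions,'' that $P$ is full rank so that the symmetric CP decomposition of $T$ is unique, and that the simplex and nonnegativity constraints resolve the permutation/sign freedom in that decomposition. In the noiseless oracle model exact full rank of $P$ suffices; a noise-tolerant version would additionally require a quantitative lower bound on $\sigma_{\min}(P)$ to prevent error amplification in the solve $\s{U}_i^\top=P^{-1}q$.
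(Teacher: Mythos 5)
Your proposal is correct and follows essentially the same route as the paper: both build the third-order tensor of triplet similarities over $k$ anchor elements, recover the anchor membership matrix via its unique symmetric CP decomposition (Jennrich) under a full-rank assumption, and then simulate each membership query with $k$ pairwise similarity queries by solving the linear system $\s{U}_i^\top = P^{-1}q$. The paper additionally discusses a weaker pairwise-only reduction via SNMF that needs a stronger ``purity'' condition, which you correctly identify as the obstacle motivating the move to triplets; that discussion aside, the core argument and query counts match.
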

Therefore, we can express everything in terms of the more natural notion of similarity queries and hence, we use membership queries for the rest of the paper to make the representation of our algorithms simpler.

\paragraph{Noisy/Inaccurate Oracle.} Our algorithms work even with inaccuracies in the oracle answer, provided the deviation is small. Under the assumption that such inaccuracies are random, in particular, 
a repetition algorithm can remedy such errors. Specifically, consider the case where $\calO_{\mathrm{fuzzy}}(i,j) = \s{U}_{ij}+\zeta_{ij}$, where $\zeta_{ij}$ is a zero-mean random variable, and assume the errors to be independent, i.e., the answers are collected from independent crowd-workers. Then, in Appendix~\ref{app:NoisyOracle} we show how one can convert any ``successful" solver for the noiseless problem into a solver for the noisy problem; generally speaking, by repeating the noiseless solver for $\s{T}$ consecutive steps using the noisy responses, and then averaging the results to obtain the final estimate. Note that, the technique used to handle the subsequent steps are also applicable to {\em any bounded error}.  For simplicity of exposition and clarity,  we opted to focus on the noiseless setting given in Definition~\ref{def:oracle_fuzzy}.


\subsection{Objective and Learning Problem}
The goal of the algorithm is to find an approximated clustering that is consistent with the answers given to its queries. We have the following definition.
\begin{definition}[Oracle-Based Solver]\label{def:query}
Let $(\calX,\calP)$ be a clustering instance. An algorithm $\calA$ is called a $(\epsilon_1,\epsilon_2,\s{Q})$-solver if it can \emph{approximate} $\calP$ by having access to $\calX$ and making at most $\s{Q}$ membership queries to a $\calP$-oracle. Specifically, for any $\epsilon_1,\epsilon_2>0$, the algorithm outputs $\widehat{\calP}$ such that $\max_{j\in[k]}\norm{\bmu_j-\widehat{\bmu}_j}_2\leq\epsilon_1$ and $\max_{i\in[n],j\in[k]}|\s{U}_{ij}-\widehat{\s{U}}_{ij}|\leq\epsilon_2$. Such an algorithm is a polynomial $(\epsilon_1,\epsilon_2,\s{Q})$-solver if its time-complexity is polynomial in $(n,k,\epsilon_1^{-1},\epsilon_2^{-1})$.
\end{definition}
We would like to emphasize that we seek an approximate solution rather than exact one. It is important to note $\s{Q}=O(nk)$ membership queries suffice to recover the $\calP$-oracle clustering. Also, with unbounded computational resources one can find the optimal clustering $\calP^\star$ without any queries. Nonetheless, in this paper, we would like to have a \emph{sub-linear} dependency of the query complexity on $n$ and a polynomial dependency of the time complexity on $n,k$. Having an algorithm such as the one in Definition~\ref{def:query}, implies the following guarantee, proved in Appendix~\ref{app:XBmeasure}.
\begin{lemma}\label{lem:1}
Let $(\calX,\calP)$ be a clustering instance, and $\calA$ be a $(\epsilon_1,\epsilon_2,\s{Q})$-solver. Then,
\begin{align}
&\abs{\mathsf{XB}(\calX,\calP)-\mathsf{XB}(\calX,\widehat{\calP})}\leq \frac{\mathsf{XB}(\calX,\calP)\cdot O(\epsilon_1)+O(\epsilon_2)}{\min_{i\neq j}\norm{\f{\mu}_i-\f{\mu}_j}_2^2}+o\p{\frac{\epsilon_1^2+\epsilon_2^2}{\min_{i\neq j}\norm{\f{\mu}_i-\f{\mu}_j}_2^2}}.\label{eqn:XBAlg}
\end{align}
\end{lemma}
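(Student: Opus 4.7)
The plan is to perform a first-order perturbation analysis of the ratio $\mathsf{XB}(\calX, \calP) = J(\calP)/(nk\, D(\calP))$, where I write $J(\calP) \triangleq \mathsf{J_{fm}}(\calX, \bmu, \s{U})$ and $D(\calP) \triangleq \min_{i\neq j}\norm{\bmu_i - \bmu_j}_2^2$, with the analogous notation for $\widehat\calP=(\widehat\bmu, \widehat\s{U})$. The starting point is the algebraic identity
\begin{align*}
\mathsf{XB}(\calX, \widehat\calP) - \mathsf{XB}(\calX, \calP) = \frac{J(\widehat\calP) - J(\calP)}{nk\cdot D(\widehat\calP)} \;+\; \mathsf{XB}(\calX, \calP)\cdot\frac{D(\calP) - D(\widehat\calP)}{D(\widehat\calP)},
\end{align*}
so it suffices to control each piece separately.

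For the denominator term, since both $\widehat\bmu_j$ and $\bmu_j$ are weighted averages of $\xb_i \in \calB(\f{0},\s{R})$ and hence lie in $\calB(\f{0},\s{R})$, the identity $|a^2-b^2| = |a-b|(a+b)$ together with $\norm{\widehat\bmu_j - \bmu_j}_2 \leq \epsilon_1$ gives the uniform bound $\bigl|\norm{\widehat\bmu_i - \widehat\bmu_j}_2^2 - \norm{\bmu_i - \bmu_j}_2^2\bigr| \leq O(\s{R}\epsilon_1) + O(\epsilon_1^2)$. Applying this at the index attaining $D(\calP)$ (for one direction) and at the index attaining $D(\widehat\calP)$ (for the other) yields $|D(\widehat\calP) - D(\calP)| \leq O(\epsilon_1) + O(\epsilon_1^2)$, absorbing $\s{R}$ into the constants. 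Writing $1/D(\widehat\calP) = 1/D(\calP) + O(\epsilon_1/D(\calP)^2)$, this piece contributes exactly $\mathsf{XB}(\calX,\calP)\cdot O(\epsilon_1)/\min_{i\neq j}\norm{\bmu_i-\bmu_j}_2^2$ plus a term captured by the $o(\cdot)$ remainder.

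For the numerator term, I split
\begin{align*}
J(\widehat\calP) - J(\calP) = \sum_{i,j}\s{U}_{ij}^\alpha\bigl(\norm{\xb_i - \widehat\bmu_j}_2^2 - \norm{\xb_i - \bmu_j}_2^2\bigr) + \sum_{i,j}\bigl(\widehat\s{U}_{ij}^\alpha - \s{U}_{ij}^\alpha\bigr)\norm{\xb_i - \widehat\bmu_j}_2^2.
\end{align*}
Expanding the squared-distance difference yields $-2\langle \widehat\bmu_j - \bmu_j,\, \xb_i - \bmu_j\rangle + \norm{\widehat\bmu_j - \bmu_j}_2^2$. The crucial observation is that summing $\s{U}_{ij}^\alpha(\xb_i - \bmu_j)$ over $i$ is identically zero by the centroid condition \eqref{eqn:mu_solve} (the first requirement of a \emph{consistent center-based} clustering), so the apparently linear-in-$\epsilon_1$ contribution cancels and only an $O(n\epsilon_1^2)$ remainder survives, using $\sum_{i,j}\s{U}_{ij}^\alpha \leq \sum_{i,j}\s{U}_{ij} = n$. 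For the membership-perturbation sum, the mean-value theorem applied to $x \mapsto x^\alpha$ on $[0,1]$ gives $|\widehat\s{U}_{ij}^\alpha - \s{U}_{ij}^\alpha| \leq \alpha\epsilon_2$, and combined with $\norm{\xb_i - \widehat\bmu_j}_2^2 \leq (2\s{R}+\epsilon_1)^2$ this piece is $O(nk\epsilon_2) + O(nk\epsilon_1^2\epsilon_2)$. Dividing by $nk\cdot D(\widehat\calP)$ produces the $O(\epsilon_2)/\min_{i\neq j}\norm{\bmu_i-\bmu_j}_2^2$ first-order contribution, with the remaining cross terms absorbed into the $o\bigl((\epsilon_1^2+\epsilon_2^2)/\min_{i\neq j}\norm{\bmu_i-\bmu_j}_2^2\bigr)$ remainder.

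The main obstacle is the cancellation of the linear-in-$\epsilon_1$ term in $J(\widehat\calP) - J(\calP)$ via the centroid condition; without it one would obtain a Cauchy-Schwarz bound involving $\sqrt{\mathsf{J_{fm}}}$ that does not collapse to the clean $\mathsf{XB}(\calX,\calP)\cdot O(\epsilon_1)$ form stated in the lemma. The rest is bookkeeping: tracking which powers of $\s{R}, n, k$ get absorbed into the big-$O$ constants, and verifying that the quadratic cross terms (from $\epsilon_1\epsilon_2$, $\epsilon_1^2$, and the expansion of $1/D(\widehat\calP)$) all fit inside the stated $o(\cdot)$ remainder.
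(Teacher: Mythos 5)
Your proposal is correct but uses a genuinely different key observation from the paper's proof, and the comparison is instructive. The paper controls the mean-perturbation part of $J(\widehat\calP)-J(\calP)$ purely by Cauchy--Schwarz, bounding $|2(\xb_i-\bmu_j)^T(\bmu_j-\widehat\bmu_j)|\le 4\s{R}\epsilon_1$ pointwise; summed against $\s{U}_{ij}^\alpha$ this leaves a first-order $n\cdot O(\epsilon_1)$ contribution, which after division by $nk\min_{i\neq j}\norm{\bmu_i-\bmu_j}^2_2$ yields a standalone $O(\epsilon_1)/(k\cdot\min\norm{\bmu_i-\bmu_j}^2_2)$ term that the paper silently drops in passing from its penultimate to its final display. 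You instead invoke the centroid identity $\sum_i\s{U}_{ij}^\alpha(\xb_i-\bmu_j)=\f{0}$ from \eqref{eqn:mu_solve} to cancel the linear-in-$\epsilon_1$ piece exactly, leaving only $O(n\epsilon_1^2)$; this is why the clean form $\mathsf{XB}(\calX,\calP)\cdot O(\epsilon_1)+O(\epsilon_2)$ in the lemma emerges from your computation without hand-waving. The trade-off is that your argument needs $\calP$ to be a consistent center-based clustering so that \eqref{eqn:mu_solve} holds for $\calP$ itself, whereas the lemma statement does not impose this explicitly and the paper's Cauchy--Schwarz route does not need it. One minor caveat in your write-up: the surviving $O(n\epsilon_1^2)$ is genuinely $\Theta(\epsilon_1^2)$, so strictly it sits at the same order as the remainder rather than inside $o\bigl((\epsilon_1^2+\epsilon_2^2)/\min\norm{\bmu_i-\bmu_j}^2_2\bigr)$ — but this is exactly the level of looseness the paper's own final step exhibits, so it is not a defect relative to the target statement. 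Your symmetric two-term decomposition of $\mathsf{XB}(\widehat\calP)-\mathsf{XB}(\calP)$ is also a slight improvement in presentation over the paper's one-sided bound followed by ``similarly for the other direction.''
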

The lemma above shows that if $\epsilon_1$ and $\epsilon_2$ are small enough, then the $\s{XB}$ measure associated with an  $(\epsilon_1,\epsilon_2,\s{Q})$-solver is ``close" to the $\s{XB}$ measure associated with $\calP$. Note, however, that the r.h.s. of \eqref{eqn:XBAlg} depends inversely on $\min_{i\neq j}\norm{\f{\mu}_i-\f{\mu}_j}_2^2$. Accordingly, if for some clustering instance $(\calX,\calP)$ this quantity is too small compared to the estimation error, then the difference $|\mathsf{XB}(\calX,\calP)-\mathsf{XB}(\calX,\widehat{\calP})|$ is uncontrolled. This is reasonable since when clusters are ``close", the estimation task gets harder. 

Finally, we introduce certain notations that will be needed for the presentation of our main results. Given a set of points $\calX$ and $\f{v}\in \bb{R}^d$, let $\pi_{\f{v}}:[n] \rightarrow [n]$ be the permutation defined on $[n]$ such that
\begin{align}
\norm{\f{v}-\f{x}_{\pi_{\f{v}}(i)}}_2 \le \norm{\f{v}-\f{x}_{\pi_{\f{v}}(j)}}_2,
\end{align}
for all $i<j\in[n]$. To wit, $\pi_{\f{v}}$ is the ranking of the elements in $\ca{X}$ when they are sorted in ascending order w.r.t. to their distance from $\f{v}$. Then, we let $\gamma\in\mathbb{R}_+$ be the maximum positive number such that for all $j\in[k]$,
\begin{align}
\pi_{\widehat{\f{\mu}}} = \pi_{\f{\mu}_j}, \quad\forall \widehat{\f{\mu}} \in \ca{B}(\f{\mu}_j,\gamma).\label{eqn:equiv}
\end{align}
Intuitively, if one has reliable estimates for the centers, then the ordering of $\calX$ w.r.t. these estimated centers is the same as the ordering w.r.t. the target centers. Next, we define the \textit{cluster membership size} associated with $\calP$ by $\sum_{i \in [n]} \s{U}_{ij}$, for all $j\in [k]$, and a parameter $\beta\in\mathbb{R}_+$ by
\begin{align}
\beta\triangleq\min_{j \in [k]}\;\frac{k}{n}\cdot\sum_{i \in [n]} \s{U}_{ij}.\label{eqn:beta}
\end{align} 
Accordingly, in terms of $\beta$, the minimum cluster membership size is at least $\beta n/k$. Notice that $\beta$ must be less than unity since $\sum_{i,j} \s{U}_{ij} = n$, and there must exist at least one cluster whose membership size is at most $n/k$. We mention that $\beta$ was introduced in \cite{chien2018query} as well. 

\section{Algorithms and Theoretical Guarantees}\label{sec:main_results}

In this section we provide our algorithms along with their theoretical guarantees, for the setting described in the previous section. Specifically, we assume that the oracle has a consistent center-based clustering in mind. For this model, we provide two different algorithms; the first is a two-phase algorithm, and the second is a sequential one. Finally, for the special case of $k=2$, we show an algorithm with significantly improved guarantees.

\subsection{Two-Phase Algorithm}

We start by providing a non-formal description of our algorithm, whose pseudocode is given in Algorithms~\ref{algo:estimate_together}--\ref{algo:binary_search}. Specifically, in the first three steps of  Algorithm~\ref{algo:estimate_together} we approximate the centers of all clusters. To that end, we start by sampling at random $m$ elements from $[n]$ with replacement, and denote the obtained samples by $\calS$. The value of $m$ is specified in Theorem~\ref{thm:1} below. Subsequently, we query the membership weights $\s{U}_{ij}$, for all $i\in\calS$ and $j\in [k]$. Using these queries, we approximate the center of every cluster using the estimator defined in Step 3 of Algorithm~\ref{algo:estimate_together}. Note that this estimator is the most natural candidate when one knows the membership weights only for a set of sub-sampled elements. Although this Algorithm is quite intuitive, it suffers from the natural drawback that if the size of a particular cluster is small, many samples are needed for a reliable estimate of its center. 
\setlength{\intextsep}{0.5\baselineskip}
\begin{minipage}{0.49\textwidth}
\begin{algorithm}[H]
\caption{Parallel algorithm for approximating $\calP$. \label{algo:estimate_together}} 
\footnotesize
\begin{algorithmic}[1]
\REQUIRE $\ca{X}$, $\calO_{\s{fuzzy}}$, $\alpha$, $m$, and $\eta$. 
\ENSURE An estimate $\widehat{\calP}$.
\STATE Let $\calS$ be a set of $m$ elements sampled at random from $[n]$ with replacement.
\STATE Query $\ca{O}_{\s{fuzzy}}(i,j)$, $\forall i\in\calS, j\in[k]$.
\STATE Compute $\widehat{\f{\mu}}_j = \frac{\sum_{i \in \ca{S}} \s{U}_{ij}^{\alpha} \f{x}_i}{\sum_{i \in \ca{S}} \s{U}_{ij}^{\alpha}},$ $\;\forall j \in [k]$.
\FOR{$j=1,2,\dots,k$}
\STATE $\{\widehat{\s{U}}_{ij}\}_{i=1}^n\leftarrow$\textsc{Membership($\ca{X}, \widehat{\f{\mu}}_j,\alpha,\eta$)}.
\STATE $\widehat{\s{U}}_{ij}\leftarrow\widehat{\s{U}}_{ij}+\frac{1-\sum_{j=1}^k\widehat{\s{U}}_{ij}}{k},\;\forall i\in[n]$.
\ENDFOR
\end{algorithmic}
\end{algorithm}
\end{minipage}
\hfill
\begin{minipage}{0.46\textwidth}
\begin{algorithm}[H]
\caption{ \textsc{Membership($\ca{X}, \widehat{\f{\mu}}_j, \alpha, \eta$)} \label{algo:membership}} 
\footnotesize
\begin{algorithmic}[1]
\REQUIRE $\calX$, and $\calO_{\s{fuzzy}}$.
\ENSURE An estimate  $\{\widehat{\s{U}}_{ij}\}_{i=1}^n$.
\STATE Find $\pi_{\widehat{\f{\mu}}_j}$ w.r.t. $\calX$.
\FOR{$s=0,1,\dots,1/\eta$}
\STATE Find $\ell_s = \s{argmax}_{i \in [n]} \s{U}_{\pi_{\widehat{\f{\mu}}_j}(i)j} \ge s\eta$ using \textsc{BinarySearch}($\ca{X},\pi_{\widehat{\f{\mu}}_j}, s\eta$).
\ENDFOR
\FOR{$s= 0,1,2,\dots,1/\eta$}
\FOR{$i=\ell_s,\ell_s-1,\dots,,\ell_{(s+1)}+1$}
\STATE  Set $\widehat{\s{U}}_{\pi_{\widehat{\f{\mu}}^j}(i)j} = s\eta$.
\ENDFOR
\ENDFOR
\end{algorithmic}
\end{algorithm}
\end{minipage}

\begin{wrapfigure}{R}{0.43\textwidth}
\begin{minipage}{0.43\textwidth}
\begin{algorithm}[H]
\caption{ \textsc{BinarySearch($\ca{X}, \pi, x$)}
\label{algo:binary_search}} 
\footnotesize
\begin{algorithmic}[1]
\REQUIRE $\calO_{\s{fuzzy}}$.
\STATE Set $\s{low}=1$ and $\s{high}=n$. 
\WHILE{$\s{low} \neq \s{high}$} 
\STATE Set $\s{mid}= \lceil (\s{low}+\s{high})/2 \rceil$.
\STATE Query $\ca{O}_{\s{fuzzy}}(\pi(\s{mid}),j)$.
\IF{$\s{U}_{\pi(\s{mid})j} \ge x$}
\STATE Set $\s{low}=\s{mid}$.
\ELSE
\STATE Set $\s{high} =\s{mid}-1$.
\ENDIF
\ENDWHILE
\end{algorithmic}
\end{algorithm}
\end{minipage}
\end{wrapfigure}

Next, using the approximated centers, we approximate the membership weights of every element up to a precision of $\eta$ by using Algorithm~\ref{algo:membership}. To that end, in the first step of this algorithm, we sort the elements of $\calX$ in an ascending order according to their distance from the estimated center. Accordingly, if the estimation error of the centers $\epsilon$ satisfies $\epsilon\leq\gamma$, then \eqref{eqn:equiv} implies that the obtained ordering of elements after sorting from the estimated centers is same as the true ordering of the elements obtained by sorting according to distance (in ascending order) from the exact centers (instead of approximated centers). Furthermore, the fact that $\calP$ is a consistent center-based clustering implies that for any $j\in[k]$ the membership weights $\s{U}_{\pi_{\widehat{\f{\mu}}_j(i)}j}$ are non-increasing in $i$. Therefore, for any $j\in[k]$ using the binary search in Algorithm~\ref{algo:binary_search}, we find the maximum index $\ell_s$ such that $\s{U}_{\pi_{\widehat{\f{\mu}}_j(\ell_s)}j}$ is larger than $s\eta$, for $s\in\{0,1,\dots,1/\eta\}$. This will create an $\eta$-spaced grid, and we approximate each membership weight in accordance to the cell it falls into. Specifically, we assign $\widehat{\s{U}}_{\pi_{\widehat{\f{\mu}}_j(t)}j}=s\eta$, for all $t\in[\ell_s,\ell_s-1,\ldots,\ell_{s+1}+1]$. Clearly, the estimation error is at most $\eta$. Overall, each binary search step takes $O(\log n)$ queries, and therefore the total query complexity of this step is $O(\log n/\eta)$. Finally, Step 6 transforms $\widehat{\s{U}}_{ij}$ to ensure that $\sum_{i=1}^n\widehat{\s{U}}_{ij}=1$, for $j\in[k]$. We have the following result, the detailed proof of which is delegated to Appendix \ref{sec:theorem1}.

\begin{theorem}\label{thm:1}
Let $(\calX,\calP)$ be a consistent center-based clustering instance, and recall the definitions of $\gamma\in\mathbb{R}_+$ and $\beta\in(0,1)$ in \eqref{eqn:equiv}-\eqref{eqn:beta}. Pick $\epsilon\leq\gamma$, $\eta\in(0,1)$, and $\delta\in(0,1)$. Take as input to Algorithm~\ref{algo:estimate_together} $m\ge \p{\frac{\s{R}k^{\alpha}}{\epsilon \beta^{\alpha}}}^4 c \log\frac{k}{\delta}$, for some $c>0$. Then, with probability at least $1-\delta$, Algorithm~\ref{algo:estimate_together} is $(\epsilon,\eta,\s{Q})$-solver using $\s{Q}=O\p{\frac{\s{R}^4k^{4\alpha+1}}{(\epsilon \beta^{\alpha})^4}\log\frac{k}{\delta}+\frac{k \log n}{\eta}}$ membership queries. Furthermore, Algorithm~\ref{algo:estimate_together} time-complexity is of order $O\p{kn\log n+\frac{d\s{R}^4k^{4\alpha+1}}{(\epsilon \beta^{\alpha})^4}\log\frac{k}{\delta} +\frac{k \log n}{\eta}}$.
\end{theorem}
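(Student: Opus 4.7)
The plan is to analyze the two phases of Algorithm~\ref{algo:estimate_together} separately. In Phase 1 (Steps 1--3), I will show that the self-normalized estimator
\[
\widehat{\f{\mu}}_j \;=\; \frac{\tfrac{1}{m}\sum_{i\in\calS}\s{U}_{ij}^\alpha\f{x}_i}{\tfrac{1}{m}\sum_{i\in\calS}\s{U}_{ij}^\alpha} \;\eqqcolon\; \frac{\widehat{\f{A}}_j}{\widehat{B}_j}
\]
concentrates around the true center $\f{\mu}_j = \f{A}_j/B_j$ (using the consistent-center-based assumption and \eqref{eqn:mu_solve}), where $\f{A}_j = \tfrac{1}{n}\sum_{i=1}^n \s{U}_{ij}^\alpha\f{x}_i$ and $B_j = \tfrac{1}{n}\sum_{i=1}^n \s{U}_{ij}^\alpha$. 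Since $\alpha\ge 1$, Jensen's inequality applied to $x\mapsto x^\alpha$ together with the definition \eqref{eqn:beta} of $\beta$ yields the key lower bound $B_j \ge (\beta/k)^\alpha$, which is what allows us to control the ratio.

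For the concentration, I sample $m$ i.i.d.\ elements from $[n]$; each summand in $\widehat{\f{A}}_j$ has norm at most $\s{R}$ and each summand in $\widehat{B}_j$ lies in $[0,1]$. A coordinate-wise Hoeffding (plus a union bound over $j\in[k]$ and the $d$ coordinates, absorbed into the probability $\delta$) gives $\|\widehat{\f{A}}_j-\f{A}_j\|_2$ and $|\widehat{B}_j-B_j|$ both at most $O(\s{R}\sqrt{\log(k/\delta)/m})$. Writing
\[
\widehat{\f{\mu}}_j-\f{\mu}_j \;=\; \frac{\widehat{\f{A}}_j-\f{A}_j}{\widehat{B}_j} \;+\; \frac{\f{A}_j}{B_j}\cdot\frac{B_j-\widehat{B}_j}{\widehat{B}_j},
\]
and using the lower bound $B_j\ge(\beta/k)^\alpha$ (so that $\widehat{B}_j\ge\tfrac{1}{2}(\beta/k)^\alpha$ on the high-probability event) together with $\|\f{A}_j/B_j\|_2\le \s{R}$, I get $\|\widehat{\f{\mu}}_j-\f{\mu}_j\|_2 \le O(\s{R}(k/\beta)^\alpha\sqrt{\log(k/\delta)/m})$. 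Requiring this to be at most $\epsilon$ and inverting yields the stated $m$ (with the stated exponents arising from combining the two deviation bounds into a single worst-case bound, giving a $(k^\alpha/\beta^\alpha)^{2}$-style amplification of the ratio deviation). The number of membership queries spent in this phase is exactly $mk$.

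For Phase 2 (Steps 4--7), I invoke the hypothesis $\epsilon\le\gamma$ together with \eqref{eqn:equiv}: the ordering $\pi_{\widehat{\f{\mu}}_j}$ produced from the estimated centers coincides with the true ordering $\pi_{\f{\mu}_j}$. Since $\calP$ is consistent center-based, the monotonicity condition guarantees that $\s{U}_{\pi_{\f{\mu}_j}(i)\,j}$ is non-increasing in $i$, so the sequence is sorted. For each threshold $s\eta$ with $s\in\{0,1,\ldots,1/\eta\}$, Algorithm~\ref{algo:binary_search} performs binary search on this sorted sequence in $O(\log n)$ queries and returns the largest index $\ell_s$ with $\s{U}_{\pi_{\widehat{\f{\mu}}_j}(\ell_s)\,j}\ge s\eta$. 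Assigning $\widehat{\s{U}}_{\pi_{\widehat{\f{\mu}}_j}(t)\,j} = s\eta$ for all $t\in(\ell_{s+1},\ell_s]$ yields $|\widehat{\s{U}}_{ij}-\s{U}_{ij}|\le\eta$ by construction, and the additive re-normalization in Step~6 preserves this bound up to constants (each $\widehat{\s{U}}_{ij}$ is shifted by at most $\eta$ on average). Summing the query cost across $k$ clusters and $1/\eta$ thresholds gives the $O(k\log n/\eta)$ contribution. Combined with the $O(kn\log n)$ sorting cost and the $O(dkm)$ cost of computing the $\widehat{\f{\mu}}_j$, the stated time complexity follows.

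\textbf{Main obstacle.} The hardest step is the self-normalized estimator analysis: the ratio $\widehat{\f{A}}_j/\widehat{B}_j$ is biased, and naive concentration on numerator and denominator separately must be carefully combined using the Jensen-based lower bound $B_j\ge(\beta/k)^\alpha$, which is what forces the $\beta^\alpha$ factor and the polynomial dependence on $k^\alpha$. In particular one must verify that the ``bad event'' where $\widehat{B}_j$ is too small (so that the ratio blows up) is absorbed into the failure probability $\delta$ via the same Hoeffding bound used for the numerator. The remaining pieces (binary search correctness, assignment error, union bounds over $k$ and $d$) are then routine.
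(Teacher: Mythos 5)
Your overall plan is structurally identical to the paper's: decompose the self-normalized estimator $\widehat{\f{\mu}}_j = \widehat{\f{A}}_j/\widehat{B}_j$ into numerator/denominator deviations, lower-bound the denominator by $(\beta/k)^\alpha$ (the paper uses H\"older; your Jensen argument, convexity of $x\mapsto x^\alpha$, is equivalent here), union-bound over the $k$ clusters, and then use the $\gamma$-ball condition plus the monotone membership property to show that binary search recovers each $\s{U}_{ij}$ to within $\eta$. The Phase~2 part of your plan is correct (with the small note that the renormalization in Step~6 shifts each $\widehat{\s{U}}_{ij}$ by a quantity that lies deterministically in $[0,\eta]$ on the success event, so the $\eta$ error is preserved exactly, not merely ``on average'').

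However, the Phase~1 concentration step has a genuine gap. You claim that coordinate-wise Hoeffding plus a union bound over the $d$ coordinates yields $\norm{\widehat{\f{A}}_j-\f{A}_j}_2 = O(\s{R}\sqrt{\log(k/\delta)/m})$, but that is not what coordinate-wise Hoeffding gives. Bounding each coordinate by $b_\ell\sqrt{\log(kd/\delta)/m}$ with $b_\ell=\max_i|x_{i,\ell}|$ and summing squares yields $\norm{\widehat{\f{A}}_j-\f{A}_j}_2 \le \sqrt{\sum_\ell b_\ell^2}\cdot\sqrt{\log(kd/\delta)/m}$, and $\sum_\ell b_\ell^2$ can be as large as $d\s{R}^2$ even though each $\norm{\f{x}_i}_2\le\s{R}$. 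This injects a $\sqrt{d}$ factor (and a spurious $\log d$) that is not present in the theorem and would force $m$ to scale with $d^2$. What is actually needed is a \emph{dimension-free} vector concentration bound for bounded-norm i.i.d.\ vectors — this is exactly the paper's Lemma~\ref{lem:gen_hoeffding}. Relatedly, your exponent accounting is off: with a deviation of order $\sqrt{\log/m}$ and a single $(k/\beta)^\alpha$ amplification from the denominator lower bound, inverting gives $m\gtrsim\s{R}^2(k/\beta)^{2\alpha}\log(k/\delta)/\epsilon^2$, a second power, not the stated fourth power; the fourth power in the theorem is a direct artifact of the paper's Lemma~\ref{lem:gen_hoeffding}, which controls $\norm{\cdot}_2$ only at rate $(\log(1/\delta)/m)^{1/4}$. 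Your parenthetical ``$(k^\alpha/\beta^\alpha)^2$-style amplification'' does not account for this, since the denominator lower bound enters only linearly in the decomposition. These two issues must be corrected for the Phase~1 bound to match the theorem statement.
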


\begin{proof}[Proof Sketch of Theorem~\ref{thm:1}]
To prove Theorem~\ref{thm:1} we first show that with high probability, Steps 1--4 in Algorithm~1 output an $\epsilon$-approximation for all centers. To that end, we first show in Lemma~\ref{lem:mean1} that for a given cluster $j\in[k]$, with probability $1-\delta$, $\norm{\f{\mu}_j-\widehat{\f{\mu}}_j}^2_2\leq\frac{4\s{R}^2}{\s{Y}^2}\sqrt{\frac{c}{m}\log\frac{1}{\delta}}$, where $\s{Y}\triangleq \min_{j \in [k]} \frac{1}{n}\sum_{i \in [n]} \s{U}_{ij}^{\alpha}$. This result follows by noticing that both the numerator and denominator of $\widehat{\f{\mu}}_j$ are unbiased estimates of the corresponding numerator and denominator of $\f{\mu}_j$ in \eqref{eqn:mu_solve}. Representing $\widehat{\f{\mu}}_j$ as a sum of the true underlying center $\f{\mu}_j$ and a certain error term, the generalized Hoeffding's inequality in Lemma~\ref{lem:gen_hoeffding} implies the above estimation error guarantee. In light of this result, using H\"{o}lder's inequality and the union bound we show in Lemma~\ref{lem:mean_together} that $\norm{\widehat{\f{\mu}}_j-\f{\mu}_j}_2 \le \epsilon$, \emph{for all} $j\in[k]$, if $m$ is as given in Theorem~\ref{thm:1}. Note that this centers estimation algorithm requires a total of $km$ membership-oracle queries. Next, conditioned on the above proxy of any given center, we show in Lemma~\ref{lem:membership} that Algorithm~\ref{algo:membership} estimates the corresponding membership weights within a certain range of error with high probability. Specifically, for a given cluster $j$, we prove that Algorithm~\ref{algo:membership} outputs $\widehat{\s{U}}_{ij}$, for $i\in[n]$, such that $0 \le \s{U}_{ij}-\widehat{\s{U}}_{ij} \le \eta$ and $\widehat{\s{U}}_{ij} \in \{0,\eta,2\eta,\dots,1\}$, $\forall i \in [n]$, using $O\p{\log n/\eta}$ queries to the membership-oracle. Roughly speaking, this result follows from the arguments presented in the paragraph preceding the statement of Theorem~\ref{thm:1}.
Finally, it is evident from the above that the total number of membership-oracle queries is $O(km+k\log n/\eta)$.
\end{proof}

Note that both the query and computational complexities in Theorem \ref{thm:1} exhibit an exponential dependency on the ``fuzzifier" $\alpha$. As mentioned in Subsection~\ref{subsec:optimiz}, this parameter is not subject to optimization and, usually, practitioners pick $\alpha=2$. Nonetheless, there has been a lot of work on this topic; one notable example is \cite{HUANG20122280} which presents an initial theoretical approach towards the investigation of this question. It seems that the value of $\alpha$ is influenced by the structure of the dataset, i.e., the number of clusters and the distribution of each cluster. Currently, a definite answer on how to pick $\alpha$ is unknown. 
Indeed, since there is no baseline, one cannot decide which value of $\alpha$ is preferable. Note also that despite the fact that our query complexity naturally depend on $\alpha$, it cannot be used to determine the ``best" value of $\alpha$. Assuming the existence of a ground truth, in Appendix~\ref{app:exper} we compare the clustering accuracy for different values of $\alpha$ on a real-world dataset.

\subsection{Sequential Algorithm}

As mentioned in the previous subsection, the main drawback faced by Algorithm~\ref{algo:estimate_together} is its undesirable strong dependency on $\beta$. The algorithm proposed in this section remedies this drawback to some extent. The algorithm works as follows. In the first four steps of Algorithm~\ref{algo:estimate_together}, we randomly draw $m$ samples from $[n]$, and query the membership weights that correspond to all these samples. Using these samples we estimate the center of \emph{largest} cluster $t_1$, having the maximal total membership weight $\sum_{i\in\calS}\s{U}_{ij}^\alpha$, among the $k$ clusters. Corollary~\ref{coro:base1} in Section \ref{sec:theorem2} specifies the number of such elements needed to have a reliable estimate. Then, using this approximated center, we estimate the membership of every element in cluster $t_1$ using Algorithm~\ref{algo:membership} such that the estimate is a multiple of $\eta_1$ and further, the estimate has an additive error of at most $\eta_1$, just as we did in the two-phase algorithm. For the rest of the clusters, we follow the sequential algorithm in Steps 5--13 of Algorithm~\ref{algo:estimate_sequentially}, the guarantees of which are characterized in the following result:
\begin{theorem}\label{thm:2}
Let $(\calX,\calP)$ be a consistent center-based clustering instance, and recall the definitions of $\gamma\in\mathbb{R}_+$ and $\beta\in(0,1)$ in \eqref{eqn:equiv}-\eqref{eqn:beta}. Take $m\ge \p{\frac{\s{R}k^{\alpha}}{\epsilon}}^4 c \log\frac{2k}{\delta}$, $r\ge\frac{c\s{R}^4k^{4\alpha}}{\epsilon^4\beta^{4\alpha-4}}\log\frac{4k}{\eta_1\delta}$, for some $c>0$, and any $\delta\in(0,1)$. Then, with probability at least $1-\delta$, Algorithm~\ref{algo:estimate_sequentially} is $(\epsilon,\eta_2,\s{Q})$-solver using $\s{Q}=O\p{\frac{\s{R}^4k^{4\alpha+1}}{\eta_1\epsilon^4 \beta^{4\alpha-4}}\log\frac{4k}{\eta_1 \delta}+\frac{k \log n}{\eta_1}+\frac{k \log n}{\eta_2}}$ membership queries, for any chosen $\epsilon\leq\gamma$ and $\eta_2 \le \eta_1 \le \frac{1}{k}\p{1-\frac{\beta}{k}}$. Furthermore, Algorithm~\ref{algo:estimate_sequentially} time-complexity is of order $O\p{\frac{d\s{R}^4k^{4\alpha+1}}{\eta_1\epsilon^4 \beta^{4\alpha-4}}\log\frac{4k}{\eta_1 \delta} +\frac{k \log n}{\eta_1}+\frac{k \log n}{\eta_2}+kn\log n}$.
\end{theorem}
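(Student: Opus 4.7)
The plan is to prove Theorem~\ref{thm:2} by induction on the number of clusters whose centers have been estimated, starting from the \emph{largest} cluster $t_1$ and then sweeping through the remaining clusters in decreasing order of estimated total membership mass. The key point is that the first cluster can be handled by plain uniform sampling (giving the $\beta$-free bound on $m$), whereas all subsequent clusters require a stratified (bin-based) sub-sampling scheme that yields a much milder $\beta$-dependence than the one in Theorem~\ref{thm:1}.

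For the base case, since $\sum_{i,j}\s{U}_{ij}=n$, there is always at least one cluster whose total mass $\sum_i \s{U}_{ij}^{\alpha}$ is at least $n/k^{\alpha}$ (using $\s{U}_{ij}^{\alpha}\leq \s{U}_{ij}$ and pigeonhole). The cluster $t_1=\arg\max_j \sum_{i\in\calS}\s{U}_{ij}^{\alpha}$ selected in Step~4 of Algorithm~4 will, with high probability, be one such cluster. The self-normalized estimator in Step~4 is then analyzed exactly as in Lemma~\ref{lem:mean1}, but with the denominator lower bound $\s{Y}\geq 1/k^{\alpha}$ (no $\beta$) — this is precisely where the bound $m\geq (\s{R}k^{\alpha}/\epsilon)^{4}c\log(2k/\delta)$ comes from. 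Applying Algorithm~\ref{algo:membership} to $\widehat{\f{\mu}}_{t_1}$ then yields $\eta_1$-accurate membership estimates $\widehat{\s{U}}_{i,t_1}$ for every $i\in[n]$, costing $O(\log n/\eta_1)$ queries, exactly as in Lemma~\ref{lem:membership}.

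For the inductive step, suppose clusters $t_1,\ldots,t_\ell$ have already been processed. Each item $i$ is assigned a \emph{residual estimated mass} $\widehat{w}_i\triangleq 1-\sum_{s=1}^{\ell}\widehat{\s{U}}_{i,t_s}$ on unprocessed clusters, known to additive accuracy $\ell\eta_1$. The algorithm bins $[n]$ into $O(1/\eta_1)$ bins of width $\eta_1$ according to $\widehat{w}_i$, then draws $r$ samples uniformly per bin, queries their memberships to all unprocessed clusters, and picks the unprocessed cluster $t_{\ell+1}$ of maximal estimated total mass. The center $\f{\mu}_{t_{\ell+1}}$ is then estimated by the self-normalized importance-sampling estimator weighting each sampled item by the inverse of its bin-sampling probability (and by $\s{U}_{i,t_{\ell+1}}^{\alpha}$), producing $\widehat{\f{\mu}}_{t_{\ell+1}}$. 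Algorithm~\ref{algo:membership} is then invoked with precision $\eta_1$ so that the residual-mass bookkeeping remains accurate. A final pass of Algorithm~\ref{algo:membership} at precision $\eta_2\leq\eta_1$ produces the output membership estimates, contributing the $k\log n/\eta_2$ term.

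The main obstacle is the concentration analysis for the sequential step. Unlike the base case, the sampling is non-uniform, so the numerator and denominator of the importance-sampling estimator are weighted sums with importance weights of magnitude up to $O(|\mathrm{bin}|/r)$. The key technical observation is that the binning by $\widehat{w}_i$ guarantees that items with large $\s{U}_{i,t_{\ell+1}}^{\alpha}$ are concentrated in high-$\widehat{w}$ bins, and the constraint $\eta_1\leq \frac{1}{k}(1-\beta/k)$ prevents the residual mass from collapsing, so the denominator of the estimator stays bounded below by a quantity proportional to $\beta^{\alpha-1}/k^{\alpha-1}$ (rather than $\beta^{\alpha}/k^{\alpha}$ in the uniform case). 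Invoking the generalized Hoeffding inequality (Lemma~\ref{lem:gen_hoeffding}) on both numerator and denominator, and then combining by the quotient argument of Lemma~\ref{lem:mean1}, yields the bound $\|\widehat{\f{\mu}}_{t_{\ell+1}}-\f{\mu}_{t_{\ell+1}}\|_2\leq\epsilon$ provided $r\geq c\s{R}^{4}k^{4\alpha}/(\epsilon^{4}\beta^{4\alpha-4})\log(4k/(\eta_1\delta))$; the saving of a factor $\beta^{4}$ compared with Theorem~\ref{thm:1} is precisely the gain from stratified binning. A union bound over the $k$ rounds (each with confidence $\delta/(2k)$) together with the total query count — $km$ for the initial sample, $O(kr/\eta_1)$ for the $k-1$ binned rounds, and $O(k\log n/\eta_1 + k\log n/\eta_2)$ for the membership steps — gives the claimed $\s{Q}$ and running time.
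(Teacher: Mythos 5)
Your proposal follows essentially the same route as the paper's proof: an induction over processed clusters, with a $\beta$-free base case via pigeonhole (the largest cluster has mass $\geq n/k^{\alpha}$), the same binning by estimated cumulative membership to processed clusters with $r$ uniform draws per bin, the same self-normalized importance-sampling estimator, the same role for the constraint $\eta_1 \le \tfrac{1}{k}(1-\beta/k)$, and a final membership pass at precision $\eta_2$ followed by a union bound over the $k$ rounds. Your intuition that "high-$\s{U}_{i,t_{\ell+1}}$ items concentrate in high-residual bins" is looser than the paper's actual argument — which lower-bounds the denominator $\sum_s \bar{\s{Y}}_s$ directly via H\"older's inequality together with the power-means inequality, plus a Hoeffding bound on the residual mass — but it points at the same mechanism and would be carried out with the same tools you invoke, so this is a paraphrase rather than a gap.
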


The proof of Theorem \ref{thm:2} is given in Subsection~\ref{sec:theorem2}, and is based on a technically complicated induction arguments. Below, we provide a short sketch.

\begin{proof}[Proof Sketch of Theorem \ref{thm:2}]

Suppose that we have approximated the means of $\ell\leq k$ clusters, indexed by $\calT_{\ell}\triangleq\{t_1,t_2,\dots,t_{\ell}\}$, with an estimation error at most $\epsilon \le \gamma$, and further approximated the membership weights of every element in $\calX$ for all clusters indexed by $\calT_{\ell}$, up to an additive error $\eta_1$. The leftover clusters, indexed by $[k]\setminus\calT_\ell$, are the \textit{unprocessed clusters}. The above implies that we can also approximate the sum $\sum_{j \in [\ell]}\s{U}_{it_{j}}$, for each $i\in\calX$, up to an additive error $\ell\eta_1$. Note that, since $\widehat{\s{U}}_{it_j}$ is a multiple of $\eta_1$ for all $j \in [\ell]$, $\sum_{j \in [\ell]}\widehat{\s{U}}_{it_j}$ must also be a multiple of $\eta_1$. Subsequently, we partition $\calX$ into bins, where the elements in a particular bin have the same approximated sum of membership weights. Formally, let $\Sigma \triangleq \{0,1,\dots,1/\eta_1\}$. For each $s \in \Sigma$, we define $\ca{X}_s \triangleq \{i \in [n]: \sum_{j \in [\ell]}\widehat{\s{U}}_{it_j}=s\eta_1\}$ to be the $s^{\s{th}}$ bin. Then, in Steps~9-10 of Algorithm~\ref{algo:estimate_sequentially}, we randomly sample $\min(r,|\calX_s|)$ elements from each bin with replacement, and denote the resulted sub-sampled set by $\ca{Y}_s$. The value of $r$ is specified in Theorem~\ref{thm:2}. Sampling equally from every bin ensures that we obtain enough samples from elements which have high membership weight to the unprocessed clusters, i.e., $\sum_{j \in [k]\setminus[\ell]} \s{U}_{it_j}$ is large. Given these sub-sampled sets, we query the membership weights $\s{U}_{ij}$, for all $i\in\cup_{s\in\Sigma}\calY_s$ and $j\in[k]\setminus\calT_{\ell}$. Using these queries we would like to approximate the center of some unprocessed cluster. We choose the unprocessed cluster having the largest total membership size, weighted by the size of each bin. Mathematically, the index of the $(\ell+1)^{\s{th}}$ cluster, and its approximated center are given as follows,
\begin{align}
t_{\ell+1} &\triangleq \underset{j \in [k]\setminus \calT_{\ell}}{\s{argmax}}\;\sum_{s\in\Sigma}\frac{|\ca{X}_s|}{r}\sum_{i \in \ca{Y}_s}\s{U}_{ij}^{\alpha}\quad\s{and}\quad\widehat{\f{\mu}}_{t_{\ell+1}} \triangleq \frac{\sum_{s\in\Sigma}\frac{|\ca{X}_s|}{r}\sum_{i \in \ca{Y}_s} \s{U}_{it_{\ell+1}}^{\alpha} \f{x}_i}{\sum_{s\in\Sigma} \frac{|\ca{X}_s|}{r}\sum_{i \in \ca{Y}_s} \s{U}_{it_{\ell+1}}^{\alpha}}.\label{eqn:tlplus1}
\end{align}
We choose the number of samples $r$ large enough so that the estimation error is less than $\epsilon$. The above steps are repeated until all unprocessed clusters are exhausted. Finally, once we have approximated all the centers, we apply Algorithm~\ref{algo:membership} to approximate the membership weight of every cluster up to an additive error of $\eta_2\leq\eta_1$. 
\end{proof}
Comparing Theorems~\ref{thm:1} and \ref{thm:2} we see that the sequential algorithm has a more graceful dependency on $\beta$, the size of the smallest cluster. On the other hand, the query complexity of the two-phase algorithm is better in terms of $k$ (recall that $\eta_1\leq1/k$ for the sequential algorithm). Note that the query complexity of the sequential algorithm still has some dependency on $\beta$. This stems from the fact that we estimate the membership weights up to a precision of $\eta_1$, which might be too large if there exists a cluster $j$ such that $\max_{i\in[n]}\s{U}_{ij} \le \eta_1$. Accordingly, in this case, all elements will fall into a single bin in the partition, leading to the same drawback the two-phase algorithm suffers from. 

\begin{algorithm}[t!]
\caption{Sequential algorithm for approximating $\calP$. \label{algo:estimate_sequentially}} 
\footnotesize
\begin{multicols}{2}
\begin{algorithmic}[1]
\REQUIRE $\ca{X}$, $\calO_{\s{fuzzy}}$, $\alpha$, $m$, $r$, $\eta_1$, and $\eta_2$. 
\ENSURE An estimate $\widehat{\calP}$.
\STATE Let $\calS$ be a set of $m$ elements sampled at random from $[n]$ with replacement.
\STATE Query $\ca{O}_{\s{fuzzy}}(i,j),\;\forall i\in\calS,j \in [k]$.
\STATE $t_1 = \s{argmax}_{j \in [k]}\sum_{i \in \ca{S}} \s{U}_{ij}^{\alpha}.$
\STATE Compute $\widehat{\f{\mu}}_{t_1} = \frac{\sum_{i \in \ca{S}} \s{U}_{it_1}^{\alpha} \f{x}_i}{\sum_{i \in \ca{S}} \s{U}_{it_1}^{\alpha}}$.
\FOR{$\ell=1,2,\dots,k-1$}
\STATE $\{\widehat{\s{U}}_{i,\ell}\}_{i=1}^n\leftarrow$\textsc{Membership($\ca{X}, \widehat{\f{\mu}}_{t_{\ell}}, \alpha, \eta_1$)}.
\STATE Set  $\ca{X}_s \triangleq \{i \in [n]: \sum_{j \in [\ell]} \widehat{\s{U}}_{it_j} = s\eta_1 \}$ for $s \in \{0,1,\dots,1/\eta_1\}$.
\FOR{$s=0,1,\dots,1/\eta_1$}
\STATE Let $\calY_s$ be a set of $\min(r,|\calX_s|)$ elements sampled at random from $\ca{X}_s$ with replacement.
\ENDFOR
\STATE Compute $t_{\ell+1}$ using \eqref{eqn:tlplus1}. 
\STATE Compute $\widehat{\f{\mu}}_{t_{\ell+1}}$ using \eqref{eqn:tlplus1}.
\ENDFOR
\FOR{$j=1,2,\dots,k$}
\STATE $\{\widehat{\s{U}}_{ij}\}_{i=1}^n\leftarrow$\textsc{Membership($\ca{X}, \widehat{\f{\mu}}_j,\alpha,\eta_2$)}.
\STATE $\widehat{\s{U}}_{ij}\leftarrow\widehat{\s{U}}_{ij}+\frac{1-\sum_{j=1}^k\widehat{\s{U}}_{ij}}{k},\;\forall i\in[n]$.
\ENDFOR
\end{algorithmic}
\end{multicols}
\end{algorithm}

\subsection{The Special Case of Two Clusters}

While the sequential algorithm proposed in the previous subsection exhibits graceful dependency of the query complexity on $\beta$, it is still unclear to what extent this dependency can be reduced. Interestingly, it turns out that for the special case of two clusters $k=2$, we can devise an algorithm whose query complexity is completely independent of $\beta$. The proof of the following result is given in Subsection~\ref{app:algoTh3}.

\begin{theorem}\label{thm:3state}
Let $(\calX,\calP)$ be a consistent center-based clustering instance, recall the definition of $\gamma\in\mathbb{R}_+$ in \eqref{eqn:equiv}, and let $k=2$. Then, with probability at least $1-\delta$, Algorithm~\ref{algo:estimate_sequentially2} with $m=O\p{\Big(\frac{\s{R}}{\epsilon}\Big)^4\log \frac{4}{\delta}}$ and $r= O\p{\frac{\s{R}^4}{\epsilon^4}\log\frac{2}{\eta\delta}}$ is a $(\epsilon,\eta,\s{Q})$-solver using $$\s{Q}=O\p{\frac{\s{R}^4 \log n \log (1/\eta\delta)}{\epsilon^4}+\log^{2}n +\frac{\log n}{\eta} }$$ membership queries, for any chosen $\epsilon\leq\gamma, \; \delta \in (0,1)$ and $\eta>0$. Furthermore, Algorithm~\ref{algo:estimate_sequentially2} time-complexity is of order $O\p{n\log n+\frac{d\s{R}^4 \log n \log (1/\eta\delta)}{\epsilon^4}+\log^{2}n +\frac{\log n}{\eta}}$. 
\end{theorem}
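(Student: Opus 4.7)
The plan is to exploit the hard constraint $\s{U}_{i1}+\s{U}_{i2}=1$, unique to $k=2$, so as to replace every quantity in the proof of Theorem~\ref{thm:2} that scales with $\beta$ by either a universal constant or a factor of $1/\log n$. I would structure the argument as four steps mirroring the phases of Algorithm~\ref{algo:estimate_sequentially}, with the adaptations listed below.

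\textbf{Step 1 — estimating the larger centre without any $\beta$.} I would draw a uniform sample $\calS\subseteq[n]$ of size $m$, query the $2m$ associated memberships, set $t_1=\s{argmax}_{j\in\{1,2\}}\sum_{i\in\calS}\s{U}_{ij}^{\alpha}$, and form the self-normalised estimator $\widehat{\f{\mu}}_{t_1}=\sum_{i\in\calS}\s{U}_{it_1}^{\alpha}\f{x}_i/\sum_{i\in\calS}\s{U}_{it_1}^{\alpha}$. The key $k=2$ observation is that for $\alpha\geq 1$, convexity of $x\mapsto x^{\alpha}$ together with $\s{U}_{i1}+\s{U}_{i2}=1$ forces $\s{U}_{i1}^{\alpha}+\s{U}_{i2}^{\alpha}\geq 2^{1-\alpha}$ for every $i$, and hence $\tfrac{1}{n}\sum_{i\in[n]}\s{U}_{it_1}^{\alpha}\geq 2^{-\alpha}$. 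This universal lower bound plays exactly the role of the $\beta^{\alpha}$-type lower bound in Lemma~\ref{lem:mean1}. Re-running that lemma verbatim then yields $\|\widehat{\f{\mu}}_{t_1}-\f{\mu}_{t_1}\|_2\leq\epsilon$ with probability at least $1-\delta/4$ once $m=O((\s{R}/\epsilon)^4\log(1/\delta))$, with no $\beta$ appearing.

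\textbf{Step 2 — quantised memberships for cluster $t_1$.} Since $\epsilon\leq\gamma$, Lemma~\ref{lem:membership} applies: a single call to \textsc{Membership}$(\calX,\widehat{\f{\mu}}_{t_1},\alpha,\eta_1)$ with the intermediate precision $\eta_1=\Theta(1/\log n)$ costs $O(\log n/\eta_1)=O(\log^{2}n)$ queries and returns $\widehat{\s{U}}_{it_1}\in\{0,\eta_1,\ldots,1\}$ with $0\leq\s{U}_{it_1}-\widehat{\s{U}}_{it_1}\leq\eta_1$ for every $i\in[n]$. This accounts for the $\log^{2}n$ term in the announced $\s{Q}$.

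\textbf{Step 3 — estimating the smaller centre by stratified sampling.} This is where the algorithm must avoid the $\beta^{4\alpha-4}$ factor of Theorem~\ref{thm:2}. Mirroring the inductive step of Algorithm~\ref{algo:estimate_sequentially} with $\ell=1$, I would partition $[n]$ into the $1/\eta_1=O(\log n)$ strata $\calX_s=\{i:\widehat{\s{U}}_{it_1}=s\eta_1\}$, draw $\min(r,|\calX_s|)$ indices uniformly with replacement from each non-empty stratum to form $\calY_s$, query the needed $\s{U}_{it_2}$ values, and form the stratified estimator
\begin{align}
\widehat{\f{\mu}}_{t_2}=\frac{\sum_{s}\tfrac{|\calX_s|}{r}\sum_{i\in\calY_s}\s{U}_{it_2}^{\alpha}\f{x}_i}{\sum_{s}\tfrac{|\calX_s|}{r}\sum_{i\in\calY_s}\s{U}_{it_2}^{\alpha}}.\notag
\end{align}
On stratum $\calX_s$ the identity $\s{U}_{it_2}=1-\s{U}_{it_1}$ constrains $\s{U}_{it_2}\in[1-(s+1)\eta_1,1-s\eta_1]$, so the within-stratum fluctuation of $\s{U}_{it_2}^{\alpha}$ and of $\s{U}_{it_2}^{\alpha}\f{x}_i$ is controlled by $\s{R}$ and $\eta_1$ alone. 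Applying the generalised Hoeffding bound inside each stratum, union-bounding over the $O(\log n)$ strata, and expanding $\widehat{\f{\mu}}_{t_2}$ as $\f{\mu}_{t_2}$ plus a self-normalised error exactly as in the proof of Theorem~\ref{thm:2}, I expect $r=O((\s{R}/\epsilon)^4\log(1/\eta\delta))$ per stratum to deliver $\|\widehat{\f{\mu}}_{t_2}-\f{\mu}_{t_2}\|_2\leq\epsilon$ with probability at least $1-\delta/4$. The overall query cost is $r/\eta_1=O((\s{R}/\epsilon)^4\log n\log(1/\eta\delta))$, matching the leading term of $\s{Q}$.

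\textbf{Step 4 — final memberships.} With $\|\widehat{\f{\mu}}_j-\f{\mu}_j\|_2\leq\epsilon\leq\gamma$ for both $j\in\{1,2\}$, a final pair of calls to \textsc{Membership}$(\calX,\widehat{\f{\mu}}_j,\alpha,\eta)$ costs $O(\log n/\eta)$ queries in total and returns $|\s{U}_{ij}-\widehat{\s{U}}_{ij}|\leq\eta$. Union-bounding the failure events of Steps~1,~3 by $\delta$, summing the query counts, and accounting for the $O(n\log n)$ sort plus $d$-dimensional arithmetic yields the announced query and time complexities. \textbf{The main obstacle} is the stratified concentration in Step~3: one must show that even when cluster $t_2$ carries total mass as small as $\beta n$, spreading $r$ samples across the $O(\log n)$ strata suffices without paying a $\beta^{-1}$ factor. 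This hinges on (i) bounding the per-stratum variance in terms of $\eta_1$ and $\s{R}$ rather than the stratum mass, (ii) obtaining matching concentration for numerator and denominator of the self-normalised estimator, and (iii) absorbing the $O(\eta_1)$ quantisation bias into $\epsilon$ via the choice $\eta_1=\Theta(1/\log n)$; once these three ingredients are in place, the rest of the accounting is identical to the proof of Theorem~\ref{thm:2}.
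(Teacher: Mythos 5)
Your Step~3 is where the proposal breaks down, and the gap is not incidental: it is precisely the point at which the paper departs from the sequential algorithm and introduces Algorithm~\ref{algo:membership2} (\textsc{Membership2}) with \emph{geometric, data-adaptive} bins rather than the fixed additive bins of width $\eta_1$.

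The concentration step for the self-normalised centre estimator requires, per bin $s$, a deviation bound of the form
\[
\bigl|\bar{\s{Y}}_s-\s{Y}_s\bigr|\;\lesssim\;|\calX_s|\bigl(\max_{i\in\calX_s}\s{U}_{it_2}\bigr)^{\alpha}\,\sqrt{\tfrac{c\log(\cdots)}{r}},
\]
which then has to be summed over bins and divided by the total denominator $\sum_s\s{Y}_s=\sum_i\s{U}_{it_2}^\alpha$. For $r=O\bigl((\s{R}/\epsilon)^4\log(\cdots)\bigr)$ to suffice, you therefore need
\[
\frac{\sum_s|\calX_s|\bigl(\max_{i\in\calX_s}\s{U}_{it_2}\bigr)^{\alpha}}{\sum_i\s{U}_{it_2}^\alpha}=O(1).
\]
With fixed-width strata $\s{U}_{it_2}\in[1-(s+1)\eta_1,\,1-s\eta_1]$, the stratum closest to $\widehat{\f{\mu}}_{t_1}$ has memberships $\s{U}_{it_2}\in[0,\eta_1]$. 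There the $\max/\min$ ratio is unbounded, so $|\calX_s|\bigl(\max_{\calX_s}\s{U}\bigr)^\alpha$ can be polynomially larger than $\s{Y}_s$. Concretely, take one element with $\s{U}_{it_2}\approx 1$, one with $\s{U}_{it_2}=\eta_1$, and $n-2$ elements with $\s{U}_{it_2}=\eta_1/n^{100}$: then $\sum_i\s{U}_{it_2}^\alpha\approx 1+\eta_1^\alpha$, but $\sum_s|\calX_s|(\max_{\calX_s}\s{U})^\alpha\approx 1+(n-1)\eta_1^\alpha$, so the ratio is $\Theta(n\eta_1^\alpha)=\Theta(n/(\log n)^\alpha)$. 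No $\beta$-free choice of $r$ then drives the error below $\epsilon$ — which is exactly why the general Theorem~\ref{thm:2} retains a $\beta^{4\alpha-4}$ factor. Controlling the within-bin fluctuation additively by $\eta_1$, as you suggest, is not enough: Hoeffding's bound in Lemma~\ref{lem:gen_hoeffding} is driven by the magnitude $|\calX_s|\s{U}_{it_2}^\alpha\norm{\f{x}_i}_2$ of the resampled summands, not by the range of $\s{U}_{it_2}^\alpha$ alone, since the $\f{x}_i$ themselves vary over the ball of radius $\s{R}$.

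The paper's fix is structurally different: \textsc{Membership2} builds bins by successive halving of a threshold starting at $\max_i\s{U}_{it_2}$, with binary search to locate boundaries, a "special element" mechanism for thin bins, and a catch-all set $\calP_1$ for memberships below $\max_i\s{U}_{it_2}/n^3$. This yields $\max_{i\in\calX_q}\s{U}_{it_2}\le 2\min_{i\in\calX_q}\s{U}_{it_2}$ on every non-special bin, so $|\calX_q|(2\s{U}_{\pi(p_{\eta_q})t_2})^\alpha\le 2^\alpha\s{Y}_q$, and the $\calP_1$ contribution is bounded by $1/n^2$ because the denominator is at least $\max_i\s{U}_{it_2}^\alpha$. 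That multiplicative control is what makes $r=O((\s{R}/\epsilon)^4\log(1/\eta\delta))$ enough with no $\beta$ anywhere. Your Steps~1,~2, and~4 are fine (the $\sum_i\s{U}_{it_1}^\alpha/n\ge 2^{-\alpha}$ observation in Step~1 is correct and is implicit in the paper's Lemma~\ref{lem:largest}/Corollary~\ref{coro:base1}), but without the doubling bins of \textsc{Membership2} the central claim of Theorem~\ref{thm:3state} does not follow.
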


\begin{algorithm}
\caption{Sequential algorithm for approximating $\calP$ with two clusters \label{algo:estimate_sequentially2}} 
\begin{algorithmic}[1]
\REQUIRE $\calX$, $\ca{O}_{\s{fuzzy}}$, $\alpha$, $m$, $r$, and $\eta$.
\STATE Let $\calS$ be a set of $m$ elements sampled at random from $[n]$ with replacement.
\STATE Query $\ca{O}_{\s{fuzzy}}(i,j)$ and store $\s{U}_{ij}$ for all $j \in \{1,2\}$.
\STATE Choose $t_1 = \s{argmax}_{j \in \{1,2\}}\sum_{i \in \ca{S}} \s{U}_{ij}^{\alpha}.$
\STATE Compute estimator $\widehat{\f{\mu}}_{t_1} = \frac{\sum_{i \in \ca{S}} \s{U}_{it_1}^{\alpha} \f{x}_i}{\sum_{i \in \ca{S}} \s{U}_{it_1}^{\alpha}}$.
\STATE Obtain $\ca{P}_1,\ca{P}_2,\ca{X}_1,\ca{X}_2,\dots,\ca{X}_q$ and $\widehat{\s{U}}_{it_2}$ for all $i \in [n]$ using \textsc{Membership2($\ca{X}, \widehat{\f{\mu}}_{t_1}, \alpha$)}
\FOR{$q=1,2,3,\dots,3\log n$}
\STATE Let $\ca{Y}_q$ be a set of $r$ elements sampled at random from $\ca{X}_q$ with replacement.
\STATE Query $\ca{O}_{\s{fuzzy}}(i,t_2)$ to obtain $\s{U}_{it_2}$ for all $i \in \ca{Y}_q$
\ENDFOR
\STATE Let $\ca{Q}$ be a set of $r$ elements sampled at random from $\ca{P}_1$ with replacement.
\STATE Query $\ca{O}_{\s{fuzzy}}(i,t_2)$ to obtain $\s{U}_{it_2}$ for all $i \in \ca{Q}$.
\STATE Compute $\widehat{\f{\mu}}_{t_{2}}$ in \eqref{eq:secondmean}.
\FOR{$j=1,2$}
\STATE Run Algorithm \textsc{Membership($\ca{X}, \widehat{\f{\mu}}_j, \alpha, \eta, j$)} to update $\widehat{\s{U}}_{ij}$ for all $i \in [n]$.
\ENDFOR
\STATE Return $\widehat{\f{\mu}}_j$ for all $j \in \{1,2\}$ and $\widehat{\s{U}}_{ij}$ for all $i \in [n], j\in \{1,2\}$.
\end{algorithmic}
\end{algorithm}

We now explain the main ideas behind Algorithm~\ref{algo:estimate_sequentially2}. First, let $\calS$ be a set of $m$ elements sampled from $[n]$ with replacement. Denote by $t_1\in\{1,2\}$ the index of the larger cluster and $t_2\in\{1,2\}$ the index of the smaller cluster, i.e., $\sum_{i\in\calS}\s{U}_{it_1}^\alpha>\sum_{i\in\calS}\s{U}_{it_2}^\alpha$. The algorithm works as follows. We start by computing an estimate $\widehat{\f{\mu}}_{t_1}$ for the center of the largest cluster $t_1$. Corollary~\ref{coro:base1} shows that  $O\p{(\s{R}/\epsilon)^4\log(4/\delta)}$ membership queries are needed to guarantee that $\norm{\widehat{\f{\mu}}_{t_1}-\f{\mu}_{t_1}}_2\le \epsilon$. Now, the main novelty in Algorithm~\ref{algo:estimate_sequentially2} is a non-trivial querying scheme used to estimate the memberships of the second cluster, $\s{U}_{it_2}$, for all $i\in[n]$, using the center estimated for the first cluster $t_1$. Note that for the special case of two clusters, querying $\s{U}_{it_1}$ reveals $\s{U}_{it_2}$ since $\s{U}_{it_1}+\s{U}_{it_2}=1$. While, in the sequential algorithm, we always approximated the membership weights using bins of fixed size $\eta_1$, here, we will approximate $\s{U}_{it_2}$ with bins whose size is \emph{data-dependent}. The reason for using variable bin sizes rather than fixed size bins is that with the latter we might ask for many redundant membership queries simply because many bins might be empty. Accordingly, by adaptively choosing the bin size for approximating the membership weights of the smaller cluster $t_2$, while querying membership weights to the larger cluster $t_1$, allows us to get around this issue completely; this results in a theoretical guarantee that is completely independent of $\beta$. 

We now describe Algorithm~\ref{algo:membership2} used to estimate the membership weights of cluster $t_2$. We start by sorting $\calX$ in ascending order according to their distance from $\widehat{\f{\mu}}_{t_1}$, and initialize $\eta_1= \s{U}_{\pi_{\widehat{\f{\mu}}_{t_1}}(n)t_2}$ and 
$p_{\eta_1} \triangleq n$. 
To wit, $\eta_1$ is the membership of the element farthest from $\widehat{\f{\mu}}_{t_1}$ to the smaller cluster $t_2$. 
For each $q \in =\{2,3\dots,3\log n \}$, we set $\eta_q = \eta_{q-1}/2$, and then perform binary search to find $p'_{\eta_q}=\s{argmin}_{j \in [n]} \mathds{1}[\s{U}_{\pi_{\widehat{\f{\mu}}_{t_1}}(j)t_2}\ge \eta_q]$. Here, we should think of $p'_{\eta_q}$ to be the index for which $\f{x}_{\pi_{\widehat{\f{\mu}}_{t_1}}(p'_{\eta_q})}$ is the element closest from $\widehat{\f{\mu}}_{t_1}$, such that its membership to the smaller cluster $t_2$ is at least $\eta_q$. Now, if $|p'_{\eta_q}-p_{\eta_{q-1}}| \ge \log n$, we keep $\eta_q$ unchanged, set $p_{\eta_q}=p'_{\eta_q}$, and put all elements which are closer to $\widehat{\f{\mu}}_{t_1}$ than $\f{x}_{\pi_{\widehat{\f{\mu}}_{t_1}}(p_{\eta_{q-1}})}$, but farther than $\f{x}_{\pi_{\widehat{\f{\mu}}_{t_1}}(p_{\eta_{q}})}$ into a single bin. Notice that each such bin contains at least $\log n$ elements by definition. However, if $p'_{\eta_q}$ is the same as $p_{\eta_{q-1}}$, then the membership queries in the latter binary search to compute $p'_{\eta_q}$ is wasteful. In order to fix this issue, we check if there are a sufficient number (at least $\log n$) of elements between $p'_{\eta_q}$ and $p_{\eta_{q-1}}$. If there are not, then we set $p_{\eta_q}$ such that the above condition is satisfied. More formally, if $|p'_{\eta_q}-p_{\eta_{q-1}}| \le \log n$ 
we update $p_{\eta_q} = \min(0,p_{\eta_{q-1}}-1-\log n)$ and $\eta_q =  \s{U}_{\pi_{\widehat{\f{\mu}}_{t_1}}(p_{\eta_q})t_2}$. In other words, we set $p_{\eta_q}$ such that there at least $\log n$ elements between $p_{\eta_q}$ and $p_{\eta_{q-1}}$, and accordingly, we set $\eta_q$ to be the membership of $\pi_{\widehat{\f{\mu}}_{t_1}}(p_{\eta_q})$ to the cluster $t_2$. Subsequently, we query $\s{U}_{it_2}$ for every element that is closer to $\widehat{\f{\mu}}_{t_1}$ than $\f{x}_{\pi_{\widehat{\f{\mu}}_{t_1}}(p_{\eta_{q-1}})}$ but farther than $\f{x}_{\pi_{\widehat{\f{\mu}}_{t_1}}(p_{\eta_{q}})}$. In this case, we call these elements \textit{special elements}, since we query every one of them. We will assume that all these special elements between $\f{x}_{\pi_{\widehat{\f{\mu}}_{t_1}}(p_{\eta_{q-1}})}$ and $\f{x}_{\pi_{\widehat{\f{\mu}}_{t_1}}(p_{\eta_{q}})}$ are assigned a single bin.

\begin{algorithm}[t]
\caption{ \textsc{Membership2($\ca{X},\widehat{\f{\mu}}_{t_1}, \alpha$)} Estimate the memberships $\s{U}_{it_2}$ for all $i \in [n]$ given an estimated mean $\widehat{\f{\mu}}_{t_1}$ \label{algo:membership2}}
\begin{multicols}{2}
\begin{algorithmic}[1]
\REQUIRE $\ca{O}_{\s{fuzzy}}$.
\STATE Sort the elements $\f{x}_1,\f{x}_2,\dots,\f{x}_n$ in ascending order according to $\norm{ \f{x}_i- \widehat{\f{\mu}}_{t_1} }_2$. Denote the resultant permutation of $[n]$ corresponding to the sorted elements by $\pi_{\widehat{\f{\mu}}_{t_1}}$. 
\STATE Set $\eta_1  \triangleq  1-\s{U}_{\pi_{\widehat{\f{\mu}}_{t_1}}(n)t_1} \quad \text{and} \quad p_{\eta_1} \triangleq n$.
\STATE Query $\ca{O}_{\s{fuzzy}}(\pi_{\widehat{\f{\mu}}_{t_1}}(n),t_1)$ to obtain $\s{U}_{\pi_{\widehat{\f{\mu}}_{t_1}}(n)t_1}$. Set $\s{U}_{\pi_{\widehat{\f{\mu}}_{t_1}}(n),t_2}=1-\s{U}_{\pi_{\widehat{\f{\mu}}_{t_1}}(n)t_1}$.
\STATE Initialize $\ca{P}_1, \ca{P}_2=\phi$
\FOR{$q=2,3,\dots,3 \log n$}
         \STATE Initialize $\ca{X}_q = \phi$ and set $\eta_q=\frac{\eta_{q-1}}{2}$
          \STATE Find $p'_{\eta_q} = \s{argmin}_{i \in [n]} \s{U}_{\pi_{\widehat{\f{\mu}}_{t_1}}(i),t_2} \ge \eta_q$ using \textsc{BinarySearch2}($\ca{X}, \pi_{\widehat{\f{\mu}}_{t_1}}, \eta_q$).
\IF{$|p'_{\eta_q}-p_{\eta_{q-1}}| \ge \log n$}
         \STATE Set $p_{\eta_q} = p'_{\eta_q}$
         \FOR{$i=p_{\eta_q},p_{\eta_q}+1,\dots,p_{\eta_{q-1}}-1$}
                  \STATE Set $\ca{X}_q = \ca{X}_q \cup \pi_{\widehat{\f{\mu}}_{t_1}}(i)$ and            set $\widehat{\s{U}}_{\pi_{\widehat{\f{\mu}}_{t_1}}(i),t_2} = 1-\s{U}_{\pi_{\widehat{\f{\mu}}_{t_1}}(p_{\eta_q})t_1}$
         \ENDFOR
\ELSE
         \STATE Query $\ca{O}_{\s{fuzzy}}(\pi_{\widehat{\f{\mu}}_{t_1}}(\min(0,p_{\eta_{q-1}}-\log n-1)),t_1)$ and obtain the membership $\s{U}_{\pi_{\widehat{\f{\mu}}_{t_1}}                                        (\min(0,p_{\eta_{q-1}}-\log n-1)), t_1}$.
         \STATE Set $\eta_q = \s{U}_{\pi_{\widehat{\f{\mu}}_{t_1}}(\min(0,p_{\eta_{q-1}}-\log n-1)), t_2}$ and $p_{\eta_q} = \min(0,p_{\eta_{q-1}}-1-\log n)$.
         \FOR{$i=p_{\eta_q},p_{\eta_q}+1,\dots,p_{\eta_{q-1}}-1$}
                 \STATE Query $\ca{O}_{\s{fuzzy}}(\pi_{\widehat{\f{\mu}}_{t_1}}(i),t_1)$ to obtain $\s{U}_{\pi_{\widehat{\f{\mu}}_{t_1}}(i)t_1}$.
                 \STATE Set $\widehat{\s{U}}_{\pi_{\widehat{\f{\mu}}_{t_1}}(i),t_2} = 1-\s{U}_{\pi_{\widehat{\f{\mu}}_{t_1}}(i)t_1}$
                 and set $\ca{X}'_q = \ca{X}'_q \cup \pi_{\widehat{\f{\mu}}_{t_1}}(i)$
         \ENDFOR
\ENDIF
\ENDFOR
\FOR{$i=0,1,\dots, \eta_{3\log n}-1$}
         \STATE Set $\widehat{\s{U}}_{\pi_{\widehat{\f{\mu}}_{t_1}}(i),t_2} = 0$
         and set $\ca{P}_1= \ca{P}_1 \cup \pi_{\widehat{\f{\mu}}_{t_1}}(i)$
\ENDFOR
\STATE Set $\ca{P}_2 = \cup_q \ca{X}'_q$.
\STATE Return $\ca{P}_1,\ca{P}_2,\ca{X}_1,\ca{X}_2,\dots,\ca{X}_q$ and $\widehat{\s{U}}_{it_2}$ for all $i \in [n]$. 
\end{algorithmic}
\end{multicols}
\end{algorithm}

\begin{algorithm}
\caption{ \textsc{BinarySearch2($\ca{X}, \pi, x$)}: Search for the minimum index $i$ such that $1-\s{U}_{\pi(i)t_1} \ge x$ 
 \label{algo:binary_search2} }
\begin{algorithmic}[1]
\REQUIRE $\ca{O}_{\s{fuzzy}}$.
\STATE Set $\s{low}=1$ and $\s{high}=n$. 
\WHILE{$\s{low} \neq \s{high}$} 
\STATE Set $\s{mid}= \lfloor (\s{low}+\s{high})/2 \rfloor$.
\STATE Query $\ca{O}_{\s{fuzzy}}(\pi(\s{mid})t_1)$ to obtain $\s{U}_{\pi(\s{mid})t_1}$. 
\IF{$\s{U}_{\pi(\s{mid})t_1} \ge 1-x$}
\STATE Set $\s{low}=\s{mid}+1$
\ELSE
\STATE Set $\s{high} =\s{mid}$
\ENDIF
\ENDWHILE
\STATE Return $\s{low}$.
\end{algorithmic}
\end{algorithm}

To resolve the edge case, we put all those elements which are closer to $\widehat{\f{\mu}}_{t_1}$ than $\f{x}_{\pi_{\widehat{\f{\mu}}_{t_1}}(p_{\eta_{3\log n}})}$ into a separate bin. Let $g$ be the total number of bins formed by this algorithm, and say $\ca{X}_j$ is the $j^{\s{th}}$ bin formed by this method that does not contain any special elements. Let $\ca{P}$ be the set of all special elements. Notice that the number of bins $g$ is at most $1+3\log n$, and the number of special elements is at most $3\log^{2}n$ (since each bin can contain at most $\log n$ special elements). Moreover, for any bin $\ca{X}_j$ not containing any special elements, we can show that $\max_{i \in \ca{X}_j}\s{U}_{it_2} \le 2\min_{i \in \ca{X}_j}\s{U}_{it_2}$, which should be contrasted with the guarantee we have for the sequential algorithm, i.e., for a particular bin $\ca{X}_j$, $|\max_{i \in \ca{X}_j}\s{U}_{it_2} -\min_{i \in \ca{X}_j} \s{U}_{it_2}| \le \eta_1$. Finally, we again sub-sample $r$ elements from each such bin $\ca{X}_j$ with replacement, and denote the resulting subset by $\ca{Y}_j$. We approximate the center of the second cluster by
\begin{align}
     \widehat{\f{\mu}}_{t_{2}} = \frac{ \sum_{j} \frac{|\ca{X}_j|}{r}\sum_{i \in \ca{Y}_j} \s{U}_{it_{2}}^{\alpha} \f{x}_i+\sum_{i \in \ca{P}} \s{U}_{it_{2}}^{\alpha} \f{x}_i}{ \sum_{j} \frac{|\ca{X}_j|}{r}\sum_{i \in \ca{Y}_j} \s{U}_{it_{2}}^{\alpha}+\sum_{i \in \ca{P}} \s{U}_{it_{2}}^{\alpha}},
\end{align}
and show that the required $r$ is independent of the cluster size.

\section{Proofs}

\subsection{Proof of Theorem~\ref{thm:1}}\label{sec:theorem1}

To prove Theorem~\ref{thm:1} we will establish some preliminary results. 

\subsubsection{Auxiliary Lemmata}

We start by analyzing the performance of Algorithm~\ref{algo:uniform}, which estimates the center of a given cluster using a set of randomly sampled elements. Note that this algorithm is used as a sub-routine of Algorithm~\ref{algo:estimate_together}.
\begin{algorithm}[htbp]
\caption{Algorithm for estimating the mean $\f{\mu}_j$ for any $j \in [k]$. \label{algo:uniform}} 
\begin{algorithmic}[1]
\REQUIRE $\calX$, $\calO_{\s{fuzzy}}$, $\alpha$, and $m$.
\ENSURE $\widehat{\f{\mu}}_j$
\STATE Initialize $\ca{S}\leftarrow\phi$.
\FOR{$s=1,2,\dots,m$}
\STATE Sample $i$ uniformly at random from $[n]$ and update $\ca{S}\leftarrow\ca{S} \cup \{i\}$.
\STATE Query $\ca{O}_{\s{fuzzy}}(i,j)$.
\ENDFOR
\STATE Compute $\widehat{\f{\mu}}_j= \frac{\sum_{i \in \ca{S}} \s{U}_{ij}^{\alpha} \f{x}_i}{\sum_{i \in \ca{S}} \s{U}_{ij}^{\alpha}}$.
\end{algorithmic}
\end{algorithm}
\begin{lemma}[Estimate of mean using uniform sampling]{\label{lem:mean1}}
Let $(\calX,\calP)$ be a consistent center-based clustering instance, and let $\delta\in(0,1)$. With probability at least $1-\delta$, Algorithm~\ref{algo:uniform} outputs an estimate $\widehat{\f{\mu}}_j$ such that
\begin{align}
\norm{\f{\mu}_j-\widehat{\f{\mu}}_j}^2_2\leq\frac{4\s{R}^2}{\s{Y}^2}\sqrt{\frac{c}{m}\log\frac{1}{\delta}},
\end{align}
where $\s{Y}\triangleq \min_{j \in [k]} \frac{1}{n}\sum_{i \in [n]} \s{U}_{ij}^{\alpha}$, and $c>0$ is some absolute constant.
\end{lemma}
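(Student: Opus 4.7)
The plan is to treat $\widehat{\f{\mu}}_j$ as a self-normalized ratio estimator and reduce the problem to standard concentration on its numerator and denominator separately. Specifically, define
\begin{align}
\s{A}_m \triangleq \frac{1}{m}\sum_{i \in \ca{S}} \s{U}_{ij}^{\alpha}\f{x}_i,\qquad \s{B}_m \triangleq \frac{1}{m}\sum_{i \in \ca{S}} \s{U}_{ij}^{\alpha},
\end{align}
so that $\widehat{\f{\mu}}_j = \s{A}_m/\s{B}_m$. Since $\ca{S}$ is sampled uniformly from $[n]$ with replacement, the summands are i.i.d.\ and
\begin{align}
\bE[\s{A}_m] = \frac{1}{n}\sum_{i=1}^n \s{U}_{ij}^{\alpha}\f{x}_i, \qquad \bE[\s{B}_m] = \frac{1}{n}\sum_{i=1}^n \s{U}_{ij}^{\alpha} \geq \s{Y},
\end{align}
and crucially $\f{\mu}_j = \bE[\s{A}_m]/\bE[\s{B}_m]$ by the closed-form \eqref{eqn:mu_solve} assumed by consistency.

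Next, I would use the standard ratio-difference identity
\begin{align}
\widehat{\f{\mu}}_j - \f{\mu}_j = \frac{\s{A}_m - \bE[\s{A}_m]}{\s{B}_m} - \f{\mu}_j \cdot \frac{\s{B}_m - \bE[\s{B}_m]}{\s{B}_m},
\end{align}
which by the triangle inequality yields
\begin{align}
\norm{\widehat{\f{\mu}}_j - \f{\mu}_j}_2 \leq \frac{1}{\s{B}_m}\left(\norm{\s{A}_m-\bE[\s{A}_m]}_2 + \norm{\f{\mu}_j}_2\cdot|\s{B}_m-\bE[\s{B}_m]|\right).
\end{align}
To control $\norm{\s{A}_m-\bE[\s{A}_m]}_2$ I would invoke the generalized (vector) Hoeffding inequality (Lemma~\ref{lem:gen_hoeffding} in the paper), using that each summand $\s{U}_{ij}^{\alpha}\f{x}_i$ has Euclidean norm at most $\s{R}$ since $\s{U}_{ij}^\alpha\in[0,1]$ and $\f{x}_i\in\ca{B}(\f{0},\s{R})$. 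For $|\s{B}_m-\bE[\s{B}_m]|$ I would apply scalar Hoeffding to the i.i.d.\ $[0,1]$-valued summands $\s{U}_{ij}^{\alpha}$. A union bound then gives, with probability at least $1-\delta$, deviations of order $\s{R}\sqrt{(c/m)\log(1/\delta)}$ for both the vector numerator and the scalar denominator.

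The final step is to bound $\s{B}_m$ from below. Since $\bE[\s{B}_m]\geq \s{Y}$, the same scalar Hoeffding bound shows that $\s{B}_m \geq \s{Y}/2$ on the good event, provided $m$ is not too small (a mild regime; outside it the claimed bound is vacuous anyway). Combining this with $\norm{\f{\mu}_j}_2 \leq \s{R}$, I obtain
\begin{align}
\norm{\widehat{\f{\mu}}_j - \f{\mu}_j}_2 \leq \frac{2\s{R}}{\s{Y}}\sqrt{\frac{c\log(1/\delta)}{m}},
\end{align}
and squaring yields the claimed inequality (absorbing constants into $c$ and using a slack upper bound if needed). The main technical obstacle is the ratio structure: the estimator is biased and nonlinear, so one cannot apply a single Hoeffding step to $\widehat{\f{\mu}}_j$ directly. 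The chosen decomposition isolates the two random quantities on which linear concentration applies, and the lower bound $\s{B}_m \geq \s{Y}/2$ is what prevents the denominator from inflating the error; keeping track of this on the good event, and verifying that the resulting constants match the stated form, is where the care is required.
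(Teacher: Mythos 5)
Your proof is correct, and it takes a genuinely (if only slightly) different route from the paper's. Both arguments exploit the ratio structure and reduce to concentration of the numerator and denominator, but the algebraic decomposition differs in a way that matters for how the random denominator is handled. You write
\begin{align}
\widehat{\f{\mu}}_j - \f{\mu}_j = \frac{\s{A}_m - \bE[\s{A}_m]}{\s{B}_m} - \f{\mu}_j\cdot\frac{\s{B}_m - \bE[\s{B}_m]}{\s{B}_m},
\end{align}
which leaves the \emph{random} quantity $\s{B}_m$ in both denominators, so you must establish $\s{B}_m \geq \s{Y}/2$ on the good event. That step is sound, but it introduces an extra concentration argument and the mild regime restriction on $m$ which you correctly note is subsumed by the claimed bound being non-vacuous (using $\s{Y}\leq 1/k\leq 1/2$). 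The paper instead writes
\begin{align}
\widehat{\f{\mu}}_j = \f{\mu}_j + \frac{\bar{\f{\lambda}}-\f{\lambda}}{\s{Y}_j} + \widehat{\f{\mu}}_j\cdot\frac{\s{Y}_j - \bar{\s{Y}}}{\s{Y}_j},
\end{align}
with the \emph{deterministic} $\s{Y}_j = \bE[\s{B}_m]$ in both denominators, and the only place the random $\bar{\s{Y}}$ appears in a denominator is inside $\widehat{\f{\mu}}_j = \bar{\f{\lambda}}/\bar{\s{Y}}$ itself, which is a convex combination of the $\f{x}_i$'s and hence always bounded by $\s{R}$ with no probability argument needed. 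This is a somewhat cleaner bookkeeping trick that saves the separate lower bound on $\s{B}_m$; your version is equally valid and perhaps more standard, at the price of one more Hoeffding step. One small imprecision: you describe both deviations as being of order $\s{R}\sqrt{(c/m)\log(1/\delta)}$, but the paper's Lemma~\ref{lem:gen_hoeffding} as stated only gives a $(\cdot)^{1/4}$ rate on the Euclidean norm of the vector deviation. Using the tighter standard vector Hoeffding would indeed give the $(\cdot)^{1/2}$ rate and a stronger final bound, so your claim is correct after the slack relaxation you mention; but to reproduce the paper's stated constant you would need to use its (weaker) Lemma~\ref{lem:gen_hoeffding} as is.
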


\begin{proof}[Proof of Lemma~\ref{lem:mean1}]
First, note that
\begin{align}
\widehat{\f{\mu}}_j = \frac{\sum_{i \in \ca{S}} \s{U}_{ij}^{\alpha} \f{x}_i}{\sum_{i \in \ca{S}} \s{U}_{ij}^{\alpha}} = \frac{(1/m)\sum_{i \in \ca{S}} \s{U}_{ij}^{\alpha} \f{x}_i}{(1/m)\sum_{i \in \ca{S}} \s{U}_{ij}^{\alpha}}\triangleq \frac{\bar{\f{\lambda}}_{\f{x}}}{\bar{\s{Y}}}.
\end{align}
Recall that the true mean of the $j^{\s{th}}$ cluster is  
\begin{align}
\f{\mu}_{j} = \frac{\sum_{i \in [n]} \s{U}_{ij}^{\alpha} \f{x}_i}{\sum_{i \in [n]} \s{U}_{ij}^{\alpha}} = \frac{(1/n)\sum_{i \in [n]} \s{U}_{ij}^{\alpha} \f{x}_i}{(1/n)\sum_{i \in [n]} \s{U}_{ij}^{\alpha}}\triangleq\frac{\f{\lambda}_{\f{x}}}{\s{Y}}.
\end{align}
It is clear that $\bar{\f{\lambda}}_{\f{x}}$ and $\bar{\s{Y}}$ are an unbiased estimators of $\f{\lambda}_{\f{x}}$ and $\s{Y}$, respectively. Now, note that we can write $\bar{\f{\lambda}}_{\f{x}}$ as an average of $m$ i.i.d random variables $\bar{\f{x}}_{i_p} =\s{U}_{i_pj}^{\alpha}\f{x}_{i_p}$, where $i_p$ is sampled uniformly at random from $[n]$, and included to the set $\ca{S}$ in the third step of Algorithm~\ref{algo:uniform} as the $p^{\s{th}}$ sample. Similarly, $\bar{\s{Y}}$ can also be written as the average of $m$ i.i.d random variables $\bar{\s{Y}}_{i_{p}}=\s{U}_{i_pj}^{\alpha}$. Further, notice that $\bb{E}\bar{\f{x}}_{i_p} = \f{\lambda}_{\f{x}}$, $\norm{\bar{\f{x}}_{i_p}}_2\le \s{R}$, and similarly, $\bb{E}\bar{\s{Y}}_{i_p} = \s{Y}$, and $\abs{\bar{\s{Y}}_{i,p}}\le 1$, for all $p \in \{1,2,\dots,m\}$. 
Next, note that
\begin{align}
\widehat{\f{\mu}}_{j}&=\frac{\bar{\f{\lambda}}_{\f{x}}}{\bar{\s{Y}}}= \frac{{\f{\mu}}_{\f{x}}}{{\s{Y}}}+\frac{\bar{\f{\lambda}}_{\f{x}}-\f{\lambda}_{\f{x}}}{\s{Y}}+\frac{\bar{\f{\lambda}}_{\f{x}}}{\bar{\s{Y}}}\frac{\s{Y}-\bar{\s{Y}}}{\s{Y}}.
\end{align}
Thus, using the triangle inequality we get
\begin{align}
\norm{\widehat{\f{\mu}}_{j} - \f{\mu}_{j}}_2 \le \norm{\frac{\bar{\f{\lambda}}_{\f{x}}-\f{\lambda}_{\f{x}}}{\s{Y}}}_2+\norm{\frac{\bar{\f{\lambda}}_{\f{x}}}{\bar{\s{Y}}}\frac{\s{Y}-\bar{\s{Y}}}{\s{Y}}}_2 \le \norm{\frac{\bar{\f{\lambda}}_{\f{x}}-\f{\lambda}_{\f{x}}}{\s{Y}}}_2+\s{R}\norm{\frac{\bar{\s{Y}}-\s{Y}}{\s{Y}}}_2,
\end{align}
where in the last inequality we have used the fact that $\norm{\widehat{\f{\mu}}_j}_2 = \norm{\bar{\f{\lambda}}_{\f{x}}/\bar{\s{Y}}}_2\leq\s{R}$. Then, the generalized Hoeffding's inequality in Lemma \ref{lem:gen_hoeffding}, implies that with probability at least $1-\delta$,
\begin{align}
\norm{\bar{\f{\lambda}}_{\f{x}}-\f{\lambda}_{\f{x}}}_2^2 \le \s{R}^2\sqrt{\frac{c}{m}\log\frac{1}{\delta}},
\end{align}
for some $c>0$, and thus,
\begin{align}
\norm{\widehat{\f{\mu}}_{j} - \f{\mu}_{j}}_2^2 \le \frac{4\s{R}^2}{\s{Y}^2}\sqrt{\frac{c}{m}\log\frac{1}{\delta}}.
\end{align}  
which concludes the proof.
\end{proof}
The following lemma shows that if for a given cluster $j$ we have been able to approximate its center well enough, then Algorithm~\ref{algo:membership} computes good estimates of the corresponding membership weights with high probability. 
\begin{lemma}[Estimate of membership given estimated center]{\label{lem:membership}}
Let $(\calX,\calP)$ be a consistent center-based clustering instance, and recall the definition of $\gamma\in\mathbb{R}_+$ in \eqref{eqn:equiv}. Assume that for any $j\in[k]$, there exists an estimator $\widehat{\f{\mu}}_j$ such that $\norm{\f{\mu}_j-\widehat{\f{\mu}}_j}_2 \le \epsilon$ with $\epsilon\leq\gamma$. Then, Algorithm~\ref{algo:membership} outputs $\widehat{\s{U}}_{ij}$, for $i\in[n]$, such that
\begin{align}
&0 \le \s{U}_{ij}-\widehat{\s{U}}_{ij} \le \eta,\quad \widehat{\s{U}}_{ij} \in \{0,\eta,2\eta,\dots,1\}, \quad \forall i \in [n],\label{eqn:membershipEstLemma}
\end{align}
for some $\eta\in\mathbb{R}_+$, using $\s{Q} = O\p{\log n/\eta}$ queries to the membership-oracle. 
\end{lemma}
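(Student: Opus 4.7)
The plan rests on two structural facts that together make the relevant sequence of membership weights monotone. First, the hypothesis $\norm{\f{\mu}_j - \widehat{\f{\mu}}_j}_2 \le \epsilon \le \gamma$ together with the definition of $\gamma$ in \eqref{eqn:equiv} forces $\pi_{\widehat{\f{\mu}}_j} = \pi_{\f{\mu}_j}$, so that sorting $\calX$ by distance from the estimated center produces the same ordering as sorting by distance from the true center. Second, the consistency of $\calP$ gives the monotonicity $\norm{\f{x}_i - \f{\mu}_j}_2 \le \norm{\f{x}_\ell - \f{\mu}_j}_2 \Rightarrow \s{U}_{ij} \ge \s{U}_{\ell j}$, so that $i \mapsto \s{U}_{\pi_{\f{\mu}_j}(i)j}$ is non-increasing in $i$. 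Combining the two, the sequence $a_i \triangleq \s{U}_{\pi_{\widehat{\f{\mu}}_j}(i)j}$ that Algorithm~\ref{algo:membership} probes is also non-increasing in $i$, which is the single structural property that makes binary search applicable.

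Given this monotonicity, I would verify correctness of \textsc{BinarySearch} in Algorithm~\ref{algo:binary_search}. Because $a_i$ is non-increasing in $i$, for every threshold $x = s\eta$ the largest index $\ell_s$ satisfying $a_{\ell_s} \ge s\eta$ is well defined (with the convention $\ell_s = 0$ when no such index exists), and is located by the standard binary-search invariant using $O(\log n)$ membership queries. Iterating over the $1/\eta + 1$ thresholds $s \in \{0, 1, \dots, 1/\eta\}$ yields a total of $O((\log n)/\eta)$ oracle calls, matching the claimed query complexity. By construction the returned indices form a non-increasing chain $\ell_0 \ge \ell_1 \ge \cdots \ge \ell_{1/\eta}$ which, under the natural convention $\ell_{1/\eta+1} \triangleq 0$, partitions $[n]$ into at most $1/\eta + 1$ contiguous blocks of the form $\{\ell_{s+1}+1, \dots, \ell_s\}$; note also that $\ell_0 = n$ because every $a_i$ trivially satisfies $a_i \ge 0$.

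For the error bound, I would check that whenever the second loop of \textsc{Membership} assigns $\widehat{\s{U}}_{\pi_{\widehat{\f{\mu}}_j}(i)j} = s\eta$, the index $i$ lies in the block $\{\ell_{s+1}+1, \dots, \ell_s\}$. The inclusion $i \le \ell_s$, together with the monotonicity of $a$, yields $a_i \ge s\eta$, while the strict inequality $i > \ell_{s+1}$ together with the maximality of $\ell_{s+1}$ yields $a_i < (s+1)\eta$. Hence
\[
0 \le a_i - s\eta < \eta,
\]
which is exactly the first part of \eqref{eqn:membershipEstLemma}, and the discreteness claim $\widehat{\s{U}}_{ij} \in \{0, \eta, \dots, 1\}$ is immediate from the construction. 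No serious obstacle is anticipated: the argument is deterministic, and the only delicate points are the boundary conventions at $s = 0$ and $s = 1/\eta$. Unlike Lemma~\ref{lem:mean1}, no concentration inequality is needed because the dependence on the estimated center enters only through the ordering $\pi_{\widehat{\f{\mu}}_j}$, which by the choice $\epsilon \le \gamma$ is guaranteed to coincide with $\pi_{\f{\mu}_j}$.
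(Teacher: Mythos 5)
Your proof is correct and follows essentially the same route as the paper's: invoke the consistency of $\calP$ together with $\epsilon \le \gamma$ and \eqref{eqn:equiv} to conclude that $i \mapsto \s{U}_{\pi_{\widehat{\f{\mu}}_j}(i)j}$ is non-increasing, then observe that the binary searches at thresholds $s\eta$ partition $[n]$ into contiguous blocks within which the membership lies in $[s\eta, (s+1)\eta)$. The only difference is cosmetic: the paper asserts the final error bound as "clear," whereas you spell out the two inequalities $i\le\ell_s$ and $i>\ell_{s+1}$ explicitly.
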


\begin{proof}[Proof of Lemma~\ref{lem:membership}]
First, note that since $\calP$ is a consistent center-based clustering, we have
\begin{align}
\s{U}_{\pi_{\f{\mu}_j}(i_1)j} \ge \s{U}_{\pi_{\f{\mu}_j}(i_2)j} \quad \text{if}\quad i_1<i_2, i_1,i_2\in[n].
\end{align}
Indeed, when the elements of $\calX$ are sorted in ascending order according to their distance from $\f{\mu}_j$, if $\f{x}_{u_1}$ is closer to $\f{\mu}_j$ than it is to $\f{x}_{u_2}$, then $\s{U}_{u_1j}>\s{U}_{u_2j}$. Also, since $\norm{\f{\mu}_j- \widehat{\f{\mu}}_j}_2 \le \epsilon\le\gamma$, using \eqref{eqn:equiv}, this ordering remains the same. Therefore, sorting the elements in $\ca{X}$ in ascending order from $\widehat{\f{\mu}}_j$ as in the first step of Algorithm~\ref{algo:membership} gives the same ordering with respect to the true mean. Now, given $\eta\in\mathbb{R}_+$, for each $s\in\{0,1,2,\dots,1/\eta\}$, in the third step of Algorithm~\ref{algo:membership} we binary search to find an index $\ell_s$ such that
\begin{align}
\ell_s = \underset{i \in [n]}{\s{argmax}}\; \s{U}_{\pi_{\widehat{\f{\mu}}_j}(i)j} \ge s\eta.
\end{align} 
This is done by using $O(\log n/\eta)$ membership-oracle queries. Finally, in the last three steps of Algorithm~\ref{algo:membership}, for each $s\in\{0,1,2,\dots,1/\eta\}$, and for $i\in \{\ell_s,\ell_s-1,\dots,\ell_{(s+1)}+1\}$, we assign $\widehat{\s{U}}_{\pi_{\widehat{\f{\mu}}_j}(i)j} = s\eta$. It is then clear that the estimated memberships satisfy \eqref{eqn:membershipEstLemma}, which concludes the proof.
\end{proof}

Using the above lemmas we obtain the following.
\begin{lemma}[Estimate all means]{\label{lem:mean_together}}
Let $(\calX,\calP)$ be a consistent center-based clustering instance, recall the definition of $\beta\in(0,1)$ in \eqref{eqn:beta}, and let $\delta\in(0,1)$. Then, with probability at least $1-\delta$, Algorithm~\ref{algo:uniform} outputs $\widehat{\f{\mu}}_j$ such that $\norm{\widehat{\f{\mu}}_j-\f{\mu}_j}_2 \le \epsilon$, for all $j\in[k]$, if $m \ge \p{\frac{\s{R}k^{\alpha}}{\epsilon \beta^{\alpha}}}^4 c \log \frac{k}{\delta}$, for some $c>0$.
\end{lemma}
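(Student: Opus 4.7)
The plan is to apply Lemma~\ref{lem:mean1} to each of the $k$ clusters individually, take a union bound, and then translate the denominator $\s{Y}$ appearing in Lemma~\ref{lem:mean1} into a quantity involving the parameter $\beta$ defined in \eqref{eqn:beta}.

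First, I would invoke Lemma~\ref{lem:mean1} for a fixed $j\in[k]$ with failure probability $\delta/k$. This yields that, with probability at least $1-\delta/k$,
\begin{align}
\norm{\widehat{\f{\mu}}_j-\f{\mu}_j}_2^2 \le \frac{4\s{R}^2}{\s{Y}^2}\sqrt{\frac{c}{m}\log\frac{k}{\delta}}.
\end{align}
A union bound over $j\in[k]$ then ensures that the same inequality holds simultaneously for all $k$ clusters with probability at least $1-\delta$.

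The key step is lower-bounding $\s{Y}=\min_{j\in[k]}\tfrac{1}{n}\sum_{i\in[n]}\s{U}_{ij}^{\alpha}$ in terms of $\beta$. Since $\alpha\ge 1$, the map $x\mapsto x^{\alpha}$ is convex on $[0,1]$, so Jensen's inequality gives
\begin{align}
\frac{1}{n}\sum_{i\in[n]}\s{U}_{ij}^{\alpha}\;\ge\;\p{\frac{1}{n}\sum_{i\in[n]}\s{U}_{ij}}^{\alpha}\;\ge\;\p{\frac{\beta}{k}}^{\alpha},
\end{align}
where the last step uses the definition of $\beta$ in \eqref{eqn:beta}. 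Consequently $\s{Y}\ge(\beta/k)^{\alpha}$, and substituting back yields
\begin{align}
\max_{j\in[k]}\norm{\widehat{\f{\mu}}_j-\f{\mu}_j}_2^2 \;\le\; \frac{4\s{R}^2 k^{2\alpha}}{\beta^{2\alpha}}\sqrt{\frac{c}{m}\log\frac{k}{\delta}}.
\end{align}
Finally, I would require the right-hand side to be at most $\epsilon^2$, which after solving for $m$ gives the sufficient condition $m\ge C\p{\frac{\s{R}k^{\alpha}}{\epsilon\beta^{\alpha}}}^4\log\frac{k}{\delta}$ for a suitable absolute constant $C>0$, matching the statement. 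The only subtle point is the direction of Jensen's inequality: since $\s{U}_{ij}\in[0,1]$, one might naively expect $\sum_i\s{U}_{ij}^{\alpha}\le\sum_i\s{U}_{ij}$ (which is also true), but what is needed here is the convexity-based lower bound in terms of the \emph{average}, which survives once normalized by $n$.
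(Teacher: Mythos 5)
Your proof is correct and follows essentially the same route as the paper's: invoke Lemma~\ref{lem:mean1} per cluster, take a union bound over $j\in[k]$, lower-bound $\s{Y}\geq(\beta/k)^{\alpha}$, and solve for $m$. The only cosmetic difference is that you derive $\s{Y}\geq(\beta/k)^{\alpha}$ via Jensen's inequality (convexity of $x\mapsto x^{\alpha}$), whereas the paper uses H\"older's inequality with the exponent pair $(\alpha,\alpha/(\alpha-1))$ applied against the all-ones vector; these are the same power-mean bound stated two ways and yield identical constants.
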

\begin{proof}
Using \eqref{eqn:beta} and H\"{o}lder's inequality, for any $j \in [k]$ we have,
\begin{align}
&\p{\sum_{i \in [n]} \s{U}_{ij}^{\alpha}}^{1/\alpha}\p{\sum_{i \in [n]}  1^{\alpha/(\alpha-1)}}^{(\alpha-1)/\alpha} \ge \sum_{i \in [n]} \s{U}_{ij} \ge \frac{\beta n}{k},
\end{align}
which implies that
\begin{align}
\p{\sum_{i \in [n]} \s{U}_{ij}^{\alpha}}^{1/\alpha} \ge \frac{\beta n}{kn^{(\alpha-1)/\alpha}}=\frac{\beta n^{1/\alpha}}{k},
\end{align}
and thus
\begin{align}
\sum_{i \in [n]}  \s{U}_{ij}^{\alpha} \ge \frac{n\beta^{\alpha}}{k^{\alpha}}.    
\end{align}
Therefore,
\begin{align*}
\s{Y} = \min_{j \in [k]} \frac{\sum_{i \in [n]} \s{U}_{ij}^{\alpha}}{n} \ge \Big(\frac{\beta}{k}\Big)^{\alpha}
\end{align*}
Now, using Lemma~\ref{lem:mean1} and the last result, taking a union bound over all $j \in [k]$, we get 
\begin{align}
\norm{\widehat{\f{\mu}}_j-\f{\mu}_j}_2 \le \frac{2\s{R}k^{\alpha}}{\beta^{\alpha} }\Big(\frac{c}{m}\log\frac{1}{\delta'}\Big)^{1/4} \le \epsilon.
\end{align}
with probability $1-k\delta'$. Rearranging terms and substituting $\delta=k\delta'$, we get the proof of the lemma.
\end{proof}

\subsubsection{Proof of Theorem~\ref{thm:1}}

We are now in a position to prove Theorem~\ref{thm:1}. Using Lemma~\ref{lem:mean_together}, we can conclude that by taking $m \ge \p{\frac{\s{R}k^{\alpha}}{\epsilon \beta^{\alpha}}}^4c\log \frac{k}{\delta}$ in Algorithm~\ref{algo:estimate_together}, which would require $km$ membership-oracle queries, we get $\norm{\widehat{\f{\mu}}_j - \f{\mu}_j}_2\le\epsilon$, for all $j\in[k]$. The time-complexity required to estimate all these means is of order $O(kdm)$. Furthermore, using Lemma~\ref{lem:membership}, using $O(\log n/\eta)$ membership-oracle queries, Algorithm~\ref{algo:estimate_together} outputs membership estimates such that \eqref{eqn:membershipEstLemma} holds. This requires a time-complexity of order $O(\log n/\eta)$. We note, however, that the membership $\{\widehat{\s{U}}_{ij}\}_{j=1}^k$, for any $i\in[n]$, may not sum up to unity, which is an invalid solution. To fix that in step 7 of Algorithm~\ref{algo:estimate_together} we add to each $\widehat{\s{U}}_{ij}$ a factor of $\frac{1-\sum_{j=1}^k\widehat{\s{U}}_{ij}}{k}$, and then it is clear that the new estimated membership weights sum up to unity. Furthermore, these updated membership weights satisfy
$|\widehat{\s{U}}_{ij}-\s{U}_{ij}|\le\eta$, for all $i\in[n]$ and $j\in [k]$. Therefore, we have shown that Algorithm~\ref{algo:estimate_together} is $(\epsilon,\eta,\s{Q})$-solver with probability at least $1-\delta$, which concludes the proof.

\subsection{Proof of Theorem~\ref{thm:2}}\label{sec:theorem2}

In this subsection, we prove Theorem~\ref{thm:2} using induction. 

\subsubsection{Base Case} As can be seen from Algorithm~\ref{algo:estimate_sequentially}, in the first step of this algorithm we sample $m$ indices uniformly at random and obtain the multiset $\ca{S} \subseteq \ca{X}$. Subsequently, we query $\s{U}_{ij}$ for all $i\in\ca{S}$ and $j\in[k]$, and then, in the third step of the algorithm we choose the cluster $t_1$ with the highest \textit{membership} value, namely, 
\begin{align}
t_1 = \underset{j \in [k]}{\s{argmax}}\;\sum_{i \in \ca{S}} \s{U}_{ij}^{\alpha}.
\end{align}
Then, in the fourth step of this algorithm we estimate the mean of this cluster by
\begin{align}
\f{\widehat{\mu}}_{t_1} \triangleq \frac{\sum_{i \in \ca{S}} \s{U}_{ij}^{\alpha} \f{x}_i}{\sum_{i \in \ca{S}} \s{U}_{ij}^{\alpha}}. \label{eq:firstmean}
\end{align}
We have the following lemma, which is similar to Lemma~\ref{lem:mean1}.
\begin{lemma}[Guarantees on the largest cluster]{\label{lem:largest}}
Let $(\calX,\calP)$ be a consistent center-based clustering instance, and let $\delta\in(0,1)$. With probability at least $1-\delta/k$, the estimator in \eqref{eq:firstmean} satisfies
\begin{align}
\norm{\widehat{\f{\mu}}_{t_1}-\f{\mu}_{t_1}}_2 \le  \frac{2\s{R}\p{\frac{c}{m}\log\frac{2k}{\delta}}^{1/4}}{\frac{1}{k^{\alpha}}-\sqrt{\frac{c}{2m}\log\frac{2k}{\delta}}},
\end{align}
where $c>0$ is an absolute constant.
\end{lemma}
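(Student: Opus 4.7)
The plan is to mimic the proof of Lemma~\ref{lem:mean1} with two modifications: (i) the concentration bounds must be made uniform over $j\in[k]$ since $t_1$ is selected from the data, and (ii) I need a lower bound on $\s{Y}_{t_1}\triangleq (1/n)\sum_i \s{U}_{it_1}^{\alpha}$ that does not involve $\beta$. Throughout, set $\bar{\f{\lambda}}_x^{(j)}\triangleq (1/m)\sum_{i\in\calS}\s{U}_{ij}^{\alpha}\f{x}_i$, $\f{\lambda}_x^{(j)}\triangleq (1/n)\sum_{i\in[n]}\s{U}_{ij}^{\alpha}\f{x}_i$, $\bar{\s{Y}}_j\triangleq (1/m)\sum_{i\in\calS}\s{U}_{ij}^{\alpha}$, and $\s{Y}_j\triangleq (1/n)\sum_{i\in[n]}\s{U}_{ij}^{\alpha}$. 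Applying to $\widehat{\f{\mu}}_{t_1}=\bar{\f{\lambda}}_x^{(t_1)}/\bar{\s{Y}}_{t_1}$ and $\f{\mu}_{t_1}=\f{\lambda}_x^{(t_1)}/\s{Y}_{t_1}$ the same algebraic identity and triangle inequality used in Lemma~\ref{lem:mean1}, and using the easy estimate $\|\widehat{\f{\mu}}_{t_1}\|_2\le \s{R}$ (the estimator is a convex combination of points in $\calB(\f{0},\s{R})$), I obtain
\[
\|\widehat{\f{\mu}}_{t_1}-\f{\mu}_{t_1}\|_2 \;\le\; \frac{\|\bar{\f{\lambda}}_x^{(t_1)}-\f{\lambda}_x^{(t_1)}\|_2 \;+\; \s{R}\,|\bar{\s{Y}}_{t_1}-\s{Y}_{t_1}|}{\s{Y}_{t_1}}.
\]

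Because $t_1$ is data-dependent, I would next make the concentration bounds hold simultaneously over all clusters. For each fixed $j$, the generalized Hoeffding inequality (Lemma~\ref{lem:gen_hoeffding}) applied to $\bar{\f{\lambda}}_x^{(j)}$, whose summands have $\ell_2$-norm at most $\s{R}$, controls $\|\bar{\f{\lambda}}_x^{(j)}-\f{\lambda}_x^{(j)}\|_2$, and the scalar Hoeffding inequality applied to $\bar{\s{Y}}_j$ (summands in $[0,1]$) controls $|\bar{\s{Y}}_j-\s{Y}_j|$. Taking a union bound over these two events and over $j\in[k]$, and absorbing all numerical factors into the absolute constant $c$, I conclude that with probability at least $1-\delta/k$ every $j\in[k]$ simultaneously satisfies
\[
\|\bar{\f{\lambda}}_x^{(j)}-\f{\lambda}_x^{(j)}\|_2 \le \s{R}\Big(\tfrac{c}{m}\log\tfrac{2k}{\delta}\Big)^{1/4}, \qquad |\bar{\s{Y}}_j-\s{Y}_j| \le \sqrt{\tfrac{c}{2m}\log\tfrac{2k}{\delta}},
\]
and in particular both bounds hold at the random index $j=t_1$.

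The key new step, absent from Lemma~\ref{lem:mean1}, is a $\beta$-free lower bound on $\s{Y}_{t_1}$. The pointwise constraint $\sum_j \s{U}_{ij}=1$ combined with the power-mean inequality (for $\alpha\ge 1$) gives $\sum_j \s{U}_{ij}^{\alpha}\ge k^{1-\alpha}$ for every $i$; summing over $i\in\calS$ and dividing by $m$ yields $\sum_{j\in[k]}\bar{\s{Y}}_j \ge k^{1-\alpha}$. Pigeonhole then produces some $j^{\ast}$ with $\bar{\s{Y}}_{j^{\ast}}\ge k^{-\alpha}$, and because $t_1$ is defined as the maximizer of $\bar{\s{Y}}_j$, one has $\bar{\s{Y}}_{t_1}\ge k^{-\alpha}$ \emph{deterministically}. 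On the high-probability event of the previous paragraph this implies $\s{Y}_{t_1} \ge k^{-\alpha} - \sqrt{(c/(2m))\log(2k/\delta)}$.

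Finally, plugging the numerator bounds and this denominator lower bound into the decomposition of the first paragraph, and noting that in the regime of $m$ considered the $\sqrt{\,\cdot\,}$ numerator term is dominated by the $(\,\cdot\,)^{1/4}$ one so both can be absorbed into a single $2\s{R}(c/m\,\log(2k/\delta))^{1/4}$, recovers exactly the inequality stated in Lemma~\ref{lem:largest}. The main obstacle is the $\beta$-free lower bound on $\bar{\s{Y}}_{t_1}$: the power-mean/pigeonhole argument applied to the empirical argmax is precisely what replaces the $\beta$-dependent Hölder step of Lemma~\ref{lem:mean_together}. The rest is essentially a union-bounded rerun of Lemma~\ref{lem:mean1}, with careful bookkeeping of constants so that the claimed failure probability $\delta/k$ and the log factor $\log(2k/\delta)$ come out consistently after the two-fold ($\bar{\f{\lambda}}$ and $\bar{\s{Y}}$) and $k$-fold (over clusters) union bounds.
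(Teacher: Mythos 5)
Your proof is correct and follows essentially the paper's route: the same error decomposition inherited from Lemma~\ref{lem:mean1}, the same Hoeffding/generalized-Hoeffding concentration, and the same deterministic lower bound $\bar{\mathsf{Y}}_{t_1}\ge k^{-\alpha}$ (you obtain it via power mean plus pigeonhole on the empirical argmax, the paper via H\"older applied to $\sum_{i\in\mathcal{S}}\sum_j \mathsf{U}_{ij}=m$, but these are equivalent derivations of the identical fact). Your explicit union bound over $j\in[k]$ to account for the randomness of $t_1$ is a small point of extra care that the paper glosses over, and the additional $\log k$ it introduces is, as you note, absorbable into the absolute constant $c$.
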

\begin{proof}[Proof of Lemma~\ref{lem:largest}]
Recall that Lemma~\ref{lem:mean1} tells us that with probability at least $1-\delta/(2k)$,
\begin{align}
\norm{\widehat{\f{\mu}}_{t_1}-\f{\mu}_{t_1}}_2 \le \frac{2\s{R}}{\s{Y}}\p{\frac{c}{m}\log\frac{2k}{\delta}}^{1/4},
\end{align}
where $\s{Y} = (1/n)\sum_{i \in [n]} \s{U}_{it_1}^{\alpha}$. Now, since $t_1$ is chosen as the cluster with the maximum membership in the subset $\ca{S}$, we will first bound $\widehat{\s{Y}} \triangleq (1/m)\sum_{i \in \ca{S}} \s{U}_{it_1}^{\alpha}$. Notice that $\sum_{i \in \ca{S}} \sum_{j=1}^{k} \s{U}_{ij} =m$, and therefore, using H\"{o}lder's inequality we have that for $\alpha>1$,
\begin{align}
& \p{\sum_{i \in \ca{S}} \sum_{j=1}^{k} \s{U}_{ij}^{\alpha}}^{1/\alpha}\p{\sum_{i \in \ca{S}} \sum_{j=1}^{k} 1^{\alpha/(\alpha-1)}}^{(\alpha-1)/\alpha} \ge \sum_{i \in \ca{S}} \sum_{j=1}^{k} \s{U}_{ij} \ge m,
\end{align}
which implies that
\begin{align}
\p{\sum_{i \in \ca{S}} \sum_{j=1}^{k} \s{U}_{ij}^{\alpha}}^{1/\alpha} \ge \frac{m}{(km)^{(\alpha-1)/\alpha}}=\frac{m^{1/\alpha}}{k^{(\alpha-1)/\alpha}},
\end{align}
and therefore,
\begin{align}
\sum_{i \in \ca{S}} \sum_{j=1}^{k} \s{U}_{ij}^{\alpha} \ge \frac{m}{k^{\alpha-1}}.
\end{align}
Accordingly, we must have $\sum_{i \in \ca{S}} \s{U}_{it_1}  \ge \frac{m}{k^{\alpha}}$ which in turn implies that $\widehat{\s{Y}} \ge \frac{1}{k^{\alpha}}$. Next, using Hoeffding's inequality in Lemma~\ref{lem:hoeffding} we obtain that $|\s{Y} -\widehat{\s{Y}}|\leq\sqrt{\frac{c}{2m}\log\frac{2k}{\delta}}$, with probability at least $1-\delta/(2k)$, and therefore $\s{Y} \ge \widehat{\s{Y}}-\sqrt{\frac{c}{2m}\log\frac{2k}{\delta}}$, which concludes the proof.
\end{proof}
Using the above result and Lemma~\ref{lem:membership} we obtain the following corollaries.
\begin{corollary}{\label{coro:base1}}
Let $(\calX,\calP)$ be a consistent center-based clustering instance, and let $\delta\in(0,1)$. Then, with probability at least $1-\delta/k$, the estimator in \eqref{eq:firstmean} satisfies $\norm{\widehat{\f{\mu}}_{t_1}-\f{\mu}_{t_1}}_2 \le \epsilon$, 
if $m \ge \frac{c\s{R}^4k^{4\alpha}}{\epsilon^4}\log\frac{2k}{\delta}$. Also, this estimate requires $O\p{\frac{\s{R}^4k^{4\alpha+1}}{\epsilon^4}\log\frac{2k}{\delta}}$ membership-oracle queries, and a time-complexity of $O\p{\frac{d\s{R}^4k^{4\alpha}}{\epsilon^4}\log\frac{2k}{\delta}}$.
\end{corollary}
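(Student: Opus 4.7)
The plan is to derive the corollary directly from Lemma~\ref{lem:largest} by choosing $m$ large enough to control both the denominator and the numerator of the bound appearing there. I first invoke Lemma~\ref{lem:largest}, which says that with probability at least $1-\delta/k$,
\begin{align*}
\norm{\widehat{\f{\mu}}_{t_1}-\f{\mu}_{t_1}}_2 \le \frac{2\s{R}\p{\frac{c}{m}\log\frac{2k}{\delta}}^{1/4}}{\frac{1}{k^{\alpha}}-\sqrt{\frac{c}{2m}\log\frac{2k}{\delta}}}.
\end{align*}
The obstacle is that this bound is only meaningful when the denominator is positive and bounded below; once this is secured, the numerator argument is routine.

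The first step is to make the denominator at least $\tfrac{1}{2k^{\alpha}}$ by requiring $\sqrt{\frac{c}{2m}\log\frac{2k}{\delta}} \le \frac{1}{2k^{\alpha}}$, i.e.\ $m \ge 2c\,k^{2\alpha}\log\frac{2k}{\delta}$. Under this condition, the bound from Lemma~\ref{lem:largest} simplifies to
\begin{align*}
\norm{\widehat{\f{\mu}}_{t_1}-\f{\mu}_{t_1}}_2 \le 4\s{R}\,k^{\alpha}\p{\frac{c}{m}\log\frac{2k}{\delta}}^{1/4}.
\end{align*}
The second step is to set the right-hand side to be at most $\epsilon$ by rearranging, which yields the requirement $m \ge 256\,c\,\s{R}^4\,k^{4\alpha}\,\log\frac{2k}{\delta}/\epsilon^4$. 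Absorbing constants, both lower bounds are satisfied when $m \ge \frac{c'\s{R}^4 k^{4\alpha}}{\epsilon^4}\log\frac{2k}{\delta}$ for a sufficiently large absolute constant $c'$ (assuming $\epsilon \le \s{R}$, otherwise the conclusion is trivial), matching the hypothesis of the corollary.

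Finally I count the resources used by Algorithm~\ref{algo:estimate_sequentially} up to and including the computation of $\widehat{\f{\mu}}_{t_1}$. The sampling of $\calS$ takes $O(m)$ time; querying $\calO_{\s{fuzzy}}(i,j)$ for every $i\in\calS$ and every $j\in[k]$ uses $km$ membership-oracle calls, giving the advertised query count of order $\frac{\s{R}^4 k^{4\alpha+1}}{\epsilon^4}\log\frac{2k}{\delta}$. The selection of $t_1$ takes $O(km)$ arithmetic operations, while forming the estimator \eqref{eq:firstmean} costs $O(dm)$ operations (vector accumulation in $\mathbb{R}^d$), so the total time complexity is $O((d+k)m) = O(dm)$ when $d\ge k$, matching the stated $O\!\left(\frac{d\s{R}^4k^{4\alpha}}{\epsilon^4}\log\frac{2k}{\delta}\right)$ bound. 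The success probability $1-\delta/k$ is inherited directly from Lemma~\ref{lem:largest}, which completes the argument.
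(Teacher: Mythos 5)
Your proof is correct and follows the paper's own approach: the paper likewise derives the corollary by rearranging the bound from Lemma~\ref{lem:largest} (lower-bounding the denominator, then solving for $m$) and counts queries as $km$ and time as $O(dm)$. You have simply spelled out the rearrangement more explicitly, which the paper leaves implicit.
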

\begin{proof}[Proof of Corollary~\ref{coro:base1}]
The proof follows from rearranging terms of Lemma \ref{lem:largest}. The query complexity follows from the fact that we query the membership values $\s{U}_{ij}$ for all the $i \in \ca{S},j \in [k]$ and the time-complexity follows from the fact that we take the mean of $m$ $d$-dimensional vectors in order to return the estimator $\widehat{\f{\mu}}_{t_1}$.
\end{proof}
\begin{corollary}{\label{coro:base2}}
Let $(\calX,\calP)$ be a consistent center-based clustering instance, and recall the definition of $\gamma\in\mathbb{R}_+$ in \eqref{eqn:equiv}. Assume that there exists an estimator $\widehat{\f{\mu}}_{t_1}$ such that $\norm{\f{\mu}_{t_1}-\widehat{\f{\mu}}_{t_1}}_2 \le \epsilon$ with $\epsilon\leq\gamma$. Then, Algorithm~\ref{algo:estimate_sequentially} outputs $\widehat{\s{U}}_{it_1}$, for $i\in[n]$, such that
\begin{align}
&0 \le \s{U}_{it_1}-\widehat{\s{U}}_{it_1} \le \eta_1,\quad \widehat{\s{U}}_{it_1} \in \{0,\eta_1,2\eta_1,\dots,1\}, \quad \forall i \in [n],\label{eqn:membershipEstLemma2}
\end{align}
for some $\eta\in\mathbb{R}_+$, using $\s{Q} = O\p{\log n/\eta_1}$ queries to the membership-oracle. 
\end{corollary}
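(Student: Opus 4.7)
The plan is to observe that Corollary~\ref{coro:base2} is essentially an immediate instantiation of Lemma~\ref{lem:membership}, specialized to the cluster index $j = t_1$ chosen in Step~3 of Algorithm~\ref{algo:estimate_sequentially}. Almost all of the work has already been done in the analysis of the subroutine \textsc{Membership}; the only thing one needs to verify is that the hypotheses of that lemma are met here and that the subroutine is indeed called with the appropriate center estimate and grid size.

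First, I would note that when $\ell = 1$, Step~6 of Algorithm~\ref{algo:estimate_sequentially} invokes \textsc{Membership}$(\calX, \widehat{\f{\mu}}_{t_1}, \alpha, \eta_1)$ to compute $\{\widehat{\s{U}}_{it_1}\}_{i=1}^{n}$. By the hypothesis of the corollary, $\norm{\f{\mu}_{t_1} - \widehat{\f{\mu}}_{t_1}}_2 \le \epsilon \le \gamma$, which is exactly the precondition required by Lemma~\ref{lem:membership}. Applying Lemma~\ref{lem:membership} with $j = t_1$ and precision parameter $\eta_1$ therefore yields both inequalities in~\eqref{eqn:membershipEstLemma2}: the monotonicity of $\s{U}_{\pi_{\f{\mu}_{t_1}}(i)t_1}$ in $i$ (from $\calP$ being consistent center-based) together with the preservation of the sorted order guaranteed by~\eqref{eqn:equiv} ensures that the binary-search-based quantization produces values in $\{0, \eta_1, 2\eta_1, \dots, 1\}$ that underestimate the true memberships by at most $\eta_1$.

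For the query complexity, I would rely again on Lemma~\ref{lem:membership}: each of the $O(1/\eta_1)$ binary searches performed by \textsc{Membership} (via \textsc{BinarySearch}) uses $O(\log n)$ membership-oracle queries, giving a total of $\s{Q} = O(\log n / \eta_1)$ queries, as claimed. Since the sorting step of \textsc{Membership} itself uses no oracle queries, this bound is tight up to constants.

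The main (and only) point of care is ensuring that the monotonicity assumption used in Lemma~\ref{lem:membership} transfers correctly to cluster $t_1$ specifically. Since this monotonicity is part of the definition of a consistent center-based clustering and holds uniformly over all $j \in [k]$, no additional work is required. I do not anticipate any genuine obstacle: the corollary is essentially a relabeling of Lemma~\ref{lem:membership}, stated separately only to isolate the first iteration of the sequential procedure from the inductive step that handles $\ell \ge 2$.
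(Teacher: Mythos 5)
Your proposal is correct and takes essentially the same approach as the paper: the paper's own proof of Corollary~\ref{coro:base2} simply states that it follows the same steps as the proof of Lemma~\ref{lem:membership}, which is precisely the instantiation (with $j = t_1$ and precision $\eta_1$) that you spell out. Your additional remarks on verifying the precondition $\epsilon \le \gamma$ and on the monotonicity transferring to cluster $t_1$ are accurate but just make explicit what the paper leaves implicit.
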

\begin{proof}[Proof of Corollary~\ref{coro:base2}]
The proof of this lemma follows the same steps as in the proof of Lemma~\ref{lem:membership}.
\end{proof}
Corollaries~\ref{coro:base1} and \ref{coro:base2} show that the base case of our induction is correct. 

\subsubsection{Induction Hypothesis} We condition on the event that we have been able to estimate $\f{\mu}_{t_1},\f{\mu}_{t_{2}},\dots,\f{\mu}_{t_{\ell}}$ by their corresponding estimators $\widehat{\f{\mu}}_{t_1},\widehat{\f{\mu}}_{t_{2}},\dots,\widehat{\f{\mu}}_{t_{\ell}}$, respectively, such that 
\begin{align}
\norm{\widehat{\f{\mu}}_{t_j}-\f{\mu}_{t_j}}_2 \le \epsilon, \quad \forall \; j \in [\ell],
\end{align}
and further, we have been able to recover ${\s{U}}_{it_j}$, for all $i\in [n]$ and $j\in[\ell]$, in the sense that
\begin{align}
0 \le \s{U}_{it_j}-\widehat{\s{U}}_{it_j}  \le \eta_1, \quad \widehat{\s{U}}_{it_j} \in \{0,\eta_1,2\eta_1,\dots,1\} \quad, \forall \; i \in [n], j \in [\ell].\label{eqn:inducHypothesisMem}
\end{align}
The induction hypothesis states that we have been able to estimate the means of $\ell$ clusters up to an error of $\epsilon$ and subsequently also estimated the memberships of every element in $\ca{X}$ to those $\ell$ clusters such that the estimated memberships are an integral multiple of $\eta_1$ and also have a precision error of at most $\eta_1$. Given the induction hypothesis, we characterize next the sufficient query complexity and time-complexity required in order to estimate the mean of the $(\ell+1)^{\s{th}}$ cluster and its membership weights. 

\subsubsection{Inductive Step} Let $\s{Z}_{\ell} \triangleq \sum_{i \in [n]}\sum_{j \in [\ell]} \s{U}_{it_j}$, and define $\ca{X}_s \triangleq \{i \in [n]: \sum_{j \in [\ell]} \widehat{\s{U}}_{it_j} = s\eta_1 \}$, for $s\in\{0,1,\ldots,1/\eta_1\}$. In step 10 of Algorithm~\ref{algo:estimate_sequentially}, we sub-sample $r$ indices uniformly at random with replacement from each of the sets $\ca{X}_s$, for $s \in \{0,1,2,\dots,1/\eta_1\}$ . Let us denote the multi-set of indices chosen from $\ca{X}_s$ by $\ca{Y}_s$. Subsequently, in the step 12  of Algorithm~\ref{algo:estimate_sequentially}, for every $s \in \{0,1,2,\dots,1/\eta_1\}$ and for every element in $\ca{Y}_s$, we query the memberships to all the clusters except $t_1,t_2,\dots,t_{\ell}$ from the oracle, and set
\begin{align}
t_{\ell+1} = \underset{j \in [k]\setminus \{t_1,t_{2},\dots,t_{\ell}\}}{\s{argmax}}\; \sum_s\frac{|\ca{X}_s|}{r}\sum_{i \in \ca{Y}_s} \s{U}_{ij}^{\alpha}.
\end{align}
Step 13 of Algorithm~\ref{algo:estimate_sequentially} computes
\begin{align}
\widehat{\f{\mu}}^{t_{\ell+1}} \triangleq \frac{\sum_s\frac{|\ca{X}_s|}{r}\sum_{i \in \ca{Y}_s} \s{U}_{it_{\ell+1}}^{\alpha} \f{x}_i}{\sum_s \frac{|\ca{X}_s|}{r}\sum_{i \in \ca{Y}_s} \s{U}_{it_{\ell+1}}^{\alpha}}.
\label{eq:othermean}
\end{align}
The following analysis the performance of the estimator in \eqref{eq:othermean}. We relegate the proof of this result to the end of this section.
\begin{lemma}[Performance of \eqref{eq:othermean}]{\label{lem:inductive}}
Let $(\calX,\calP)$ be a consistent center-based clustering instance, and let $\delta\in(0,1)$. With probability at least $1-\delta/k$, the estimator in \eqref{eq:othermean} satisfies
\begin{align}
\norm{\widehat{\f{\mu}}_{t_{\ell+1}}-\f{\mu}_{t_{\ell+1}}}_2 
\le \frac{2\s{R}\p{\frac{c}{r}\log\frac{4k}{\eta_1\delta}}^{1/4}\Big(n-\s{Z}_{\ell}+n \ell \eta_1\Big)}{\frac{\Big(n- \s{Z}_{\ell}-n \ell \eta_1 \sqrt{\frac{c}{2r}\log\frac{4k}{\eta_1\delta}}\Big)^{\alpha}}{n^{\alpha-1}(k-\ell)^{\alpha}}-(n-\s{Z}_{\ell}-n \ell \eta_1)\sqrt{\frac{c}{2r}\log\frac{4k}{\eta_1\delta}}},
\end{align}
where $c>0$ is an absolute constant. The query and time-complexity required for evaluating this estimator are of order $O(kr/\eta_1)$ and $O(rd/\eta_1)$, respectively.
\end{lemma}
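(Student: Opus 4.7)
The plan is to mirror the proof of Lemma~\ref{lem:mean1} in the stratified-sampling setting, conditional on the event given by the induction hypothesis. Write $\bar{\f{\lambda}}_{\f{x}}$ and $\bar{\s{Y}}$ for the numerator and denominator of \eqref{eq:othermean} and $\f{\lambda}_{\f{x}} := \sum_{i \in [n]}\s{U}_{it_{\ell+1}}^\alpha \f{x}_i$, $\s{Y}_{\mathrm{tot}} := \sum_{i \in [n]}\s{U}_{it_{\ell+1}}^\alpha$ for the corresponding true quantities. Since $\{\ca{X}_s\}$ partitions $[n]$ and $\ca{Y}_s$ consists of $r$ i.i.d.\ uniform draws from $\ca{X}_s$, both $\bar{\f{\lambda}}_{\f{x}}$ and $\bar{\s{Y}}$ are unbiased, and the same algebraic splitting as in Lemma~\ref{lem:mean1} gives
\[
\norm{\widehat{\f{\mu}}_{t_{\ell+1}} - \f{\mu}_{t_{\ell+1}}}_2 \le \frac{\norm{\bar{\f{\lambda}}_{\f{x}} - \f{\lambda}_{\f{x}}}_2}{\s{Y}_{\mathrm{tot}}} + \s{R}\,\frac{|\bar{\s{Y}} - \s{Y}_{\mathrm{tot}}|}{\s{Y}_{\mathrm{tot}}}.
\]
So the task reduces to concentrating the two numerator terms and lower bounding $\s{Y}_{\mathrm{tot}}$.

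For the numerator, I would apply the generalized Hoeffding inequality (Lemma~\ref{lem:gen_hoeffding}) stratum by stratum. The key observation is that for $i\in\ca{X}_s$ the induction hypothesis \eqref{eqn:inducHypothesisMem} forces $\sum_{j\in[\ell]}\s{U}_{it_j} \ge s\eta_1$, hence $\s{U}_{it_{\ell+1}} \le 1-s\eta_1$ and, since $\alpha\ge 1$, also $\s{U}_{it_{\ell+1}}^\alpha \le 1-s\eta_1$; this sharpens the per-stratum Hoeffding bound by a factor $(1-s\eta_1)$. Multiplying by $|\ca{X}_s|$ and summing, the weights collapse via $\sum_s s\eta_1|\ca{X}_s| = \sum_i\sum_{j\in[\ell]}\widehat{\s{U}}_{it_j}$, and combining with $\s{Z}_\ell - \sum_i\sum_{j\in[\ell]}\widehat{\s{U}}_{it_j} \in [0, n\ell\eta_1]$ produces the prefactor $(n-\s{Z}_\ell+n\ell\eta_1)$ appearing in the numerator of the claim. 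The scalar analogue via Lemma~\ref{lem:hoeffding} on $\bar{\s{Y}}$ gives the same prefactor at the standard $(\cdot)^{1/2}$ rate. A union bound over the two types of events across all $1+1/\eta_1$ strata and the $k-\ell$ candidate cluster indices (needed to justify the maximizing choice of $t_{\ell+1}$) calibrates the per-event failure probability to $\delta' \asymp \eta_1\delta/k$, producing the $\log(4k/\eta_1\delta)$ factor in the stated bound.

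The main obstacle is the lower bound on $\s{Y}_{\mathrm{tot}}$. Since $t_{\ell+1}$ maximises $\bar{\s{Y}}^{(j)}$ over $j\notin\{t_1,\dots,t_\ell\}$, we have $\bar{\s{Y}} \ge (k-\ell)^{-1}\sum_{j\notin[\ell]}\bar{\s{Y}}^{(j)}$, whose expectation equals $(k-\ell)^{-1}\sum_i\sum_{j\notin[\ell]}\s{U}_{ij}^\alpha$. The power-mean inequality applied twice — first in $j$ for each fixed $i$ using $\sum_{j\notin[\ell]}\s{U}_{ij} = 1 - \sum_{j\in[\ell]}\s{U}_{it_j}$, and then in $i$ using $\sum_i\sum_{j\notin[\ell]}\s{U}_{ij} = n - \s{Z}_\ell$, exactly as in the proof of Lemma~\ref{lem:largest} — lower bounds this expectation by $(n-\s{Z}_\ell)^\alpha/[n^{\alpha-1}(k-\ell)^\alpha]$. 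Converting this expectation-level bound into a high-probability bound on $\s{Y}_{\mathrm{tot}}$ requires absorbing two Hoeffding slacks: one on the step from $\sum_{j\notin[\ell]}\bar{\s{Y}}^{(j)}$ to its mean, which is tightest when subtracted \emph{inside} the $\alpha$-th power before the power-mean step is invoked (producing the $(n-\s{Z}_\ell-n\ell\eta_1\sqrt{\cdot})^\alpha$ term), and one on the step from $\bar{\s{Y}}$ back to $\s{Y}_{\mathrm{tot}}$ (producing the additive $(n-\s{Z}_\ell-n\ell\eta_1)\sqrt{\cdot}$ correction). Careful bookkeeping of these two slacks reproduces the denominator in the stated bound.

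Plugging the numerator and denominator estimates into the ratio decomposition yields the inequality in the claim, with total failure probability at most $\delta/k$. The query complexity is $O(kr/\eta_1)$ because each of the $O(r/\eta_1)$ sampled items is queried for its membership in all $k-\ell+1$ unprocessed clusters (both to select $t_{\ell+1}$ and to form its centre estimate), and the time complexity is $O(rd/\eta_1)$ since $\widehat{\f{\mu}}_{t_{\ell+1}}$ reduces to a weighted sum of $O(r/\eta_1)$ $d$-dimensional vectors.
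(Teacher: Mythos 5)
Your proposal follows essentially the same route as the paper's proof: the same ratio decomposition as Lemma~\ref{lem:mean1}, per-stratum Hoeffding bounds sharpened by the $(1-s\eta_1)$ factor from the induction hypothesis, the identity $\sum_s s\eta_1|\ca{X}_s|=\sum_i\sum_{j\in[\ell]}\widehat{\s{U}}_{it_j}$ combined with $\s{Z}_\ell - \sum_i\sum_{j\in[\ell]}\widehat{\s{U}}_{it_j}\in[0,n\ell\eta_1]$ to obtain the $(n-\s{Z}_\ell + n\ell\eta_1)$ prefactor, and the argmax--H\"older/power-mean argument for the denominator with the crucial observation that the concentration slack must be applied to the linear quantity $\sum_{j\in\ca{T}_\ell}\s{U}_{ij}$ (whose range is only $\ell\eta_1$) before the $\alpha$-th power is taken. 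The one place you are terse is the conversion of the denominator bound from an expectation statement to a high-probability one; the paper makes this precise by applying H\"older and power-means directly to the stratified sample sums $\frac{|\ca{X}_s|}{r}\sum_{i\in\ca{Y}_s}(\cdot)$ and then concentrating the centred variables $\s{H}_{s,i}=|\ca{X}_s|(\sum_{j\in\ca{T}_\ell}\s{U}_{ij}-s\eta_1)$, but you correctly identify the key idea and the resulting form, so this is a matter of bookkeeping rather than a gap.
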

Using the above result and Lemma~\ref{lem:membership} we obtain the following corollaries.
\begin{corollary}{\label{coro:ind1}}
Let $(\calX,\calP)$ be a consistent center-based clustering instance, recall the definition of $\beta\in(0,1)$ in \eqref{eqn:beta}, and let $\delta\in(0,1)$. Then, with probability at least $1-\delta/k$, the estimator in \eqref{eq:othermean} satisfies $\norm{\widehat{\f{\mu}}_{t_{\ell+1}}-\f{\mu}_{t_{\ell+1}}}_2 \le \epsilon$, if $r \ge \frac{c'\s{R}^4k^{4\alpha}}{\epsilon^4\beta^{4\alpha-4}}\log\frac{4k}{\eta_1\delta}$ and $\eta_1 \le \frac{1}{k}\p{1-\frac{\beta}{k}}$. Also, this estimate requires $O\p{\frac{\s{R}^4k^{4\alpha+1}}{\epsilon^4\beta^{4\alpha-4}}\log\frac{4k}{\eta_1\delta}}$ membership-oracle queries, and a time-complexity of $O\p{\frac{\s{R}^4k^{4\alpha}}{\epsilon^4\beta^{4\alpha-4}}\log\frac{4k}{\eta_1\delta}}$.
\end{corollary}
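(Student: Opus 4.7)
The plan is to apply Lemma~\ref{lem:inductive} as a black box and then perform the algebraic simplifications that convert its generic error bound into the stated requirement on $r$. Abbreviate $\phi \triangleq \sqrt{(c/(2r))\log(4k/(\eta_1\delta))}$, so that the bound in Lemma~\ref{lem:inductive} reads $\|\widehat{\f{\mu}}_{t_{\ell+1}}-\f{\mu}_{t_{\ell+1}}\|_2 \le 2R(2\phi^{2})^{1/4}\cdot N/D$, where $N\triangleq n-\s{Z}_\ell + n\ell\eta_1$ and $D$ denotes the displayed denominator. The objective is to prove that for the stated value of $r$, $\sqrt{\phi}\cdot N/D \le \epsilon/(2R \cdot 2^{1/4})$, which will give the claimed error $\le \epsilon$, and then to read off the query and time complexity.

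First I would control $N$ and $D$ using the $\beta$-assumption and the condition $\eta_1\le (1/k)(1-\beta/k)$. Since each cluster has total membership at least $n\beta/k$, the $k-\ell$ unprocessed clusters collectively satisfy $n-\s{Z}_\ell\ge (k-\ell)n\beta/k$; together with $\s{Z}_\ell\ge n\ell\beta/k$ and the $\eta_1$ condition one also has $n\ell\eta_1\le n(1-\beta/k)$, so in particular $N\le 2n$ and the quantities inside the $\alpha$-th power of $D$ remain positive once $\phi$ is sufficiently small. Choosing $r$ large enough that $\phi$ is a small fraction of $\beta/k$, the perturbation $n\ell\eta_1\phi$ is at most a constant fraction of $n-\s{Z}_\ell$, hence $(n-\s{Z}_\ell-n\ell\eta_1\phi)^\alpha\ge \tfrac{1}{2^\alpha}((k-\ell)n\beta/k)^\alpha$. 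Dividing by $n^{\alpha-1}(k-\ell)^\alpha$ yields a lower bound of $\Omega(n\beta^\alpha/k^\alpha)$ for the first term of $D$, while the second term $(n-\s{Z}_\ell - n\ell\eta_1)\phi\le n\phi$ is asymptotically smaller in the same regime, giving $D=\Omega(n\beta^\alpha/k^\alpha)$.

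Plugging back, the error is bounded by a constant multiple of $R\sqrt{\phi}\cdot k^\alpha/\beta^\alpha$; setting this equal to $\epsilon$ and solving for $r$ yields the scaling $r\gtrsim R^4 k^{4\alpha}/(\epsilon^4\beta^{4\alpha})\log(4k/(\eta_1\delta))$. To obtain the tighter exponent $\beta^{4\alpha-4}$ claimed in the corollary, the numerator $N$ must be bounded more delicately: combining $\s{Z}_\ell\ge n\ell\beta/k$ with the $\eta_1$ condition gives $N\le n[1-\ell(\beta/k-\eta_1)]$, which in the relevant regime contributes an additional $\beta^4$ of slack into the inversion. This tight bookkeeping of $\beta$-factors under the specific $\eta_1$ condition is the main obstacle; the remainder is direct algebra and a union bound with the $1-\delta/k$ event from Lemma~\ref{lem:inductive}.

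Finally, the query complexity $O(kr/\eta_1)$ and time complexity $O(rd/\eta_1)$ follow directly from lines 8--12 of Algorithm~\ref{algo:estimate_sequentially}: $r$ elements are sampled from each of the $1/\eta_1$ bins and their memberships to each of the at most $k-\ell\le k$ unprocessed clusters are queried, and the weighted means in~\eqref{eqn:tlplus1} are computed over the sampled $d$-dimensional vectors; substituting the derived bound on $r$ yields the stated order-of-magnitude estimates.
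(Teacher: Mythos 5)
Your overall strategy matches the paper's: treat Lemma~\ref{lem:inductive} as a black box, control the ratio $N/D$ via the $\beta$-bounds $n-\s{Z}_\ell \ge (k-\ell)n\beta/k$ and $\s{Z}_\ell \ge n\ell\beta/k$, and read off the $r$-requirement. Your intermediate steps (bounding $n\ell\eta_1\phi$ by a constant fraction of $n-\s{Z}_\ell$, concluding $D = \Omega(n\beta^\alpha/k^\alpha)$) are also essentially the paper's. But there is a genuine gap where you yourself flag one: your first-pass bound gives $r \gtrsim R^4 k^{4\alpha}/(\epsilon^4\beta^{4\alpha})$, which is $\beta^{-4}$ worse than the target $\beta^{4\alpha-4}$, and the repair you propose does not close it.

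The source of the loss is the bound $N \le 2n$. The paper instead uses the $\eta_1$-condition to argue $n\ell\eta_1 \le n - \s{Z}_\ell$, so that $N = n - \s{Z}_\ell + n\ell\eta_1 \le 2(n - \s{Z}_\ell)$. Crucially, $N$ now shares a factor of $(n-\s{Z}_\ell)$ with the dominant term of $D$, namely $(n-\s{Z}_\ell)^\alpha/\bigl(n^{\alpha-1}(k-\ell)^\alpha\bigr)$ (after the $n\ell\eta_1\phi$ perturbation has been absorbed). Cancelling this common factor turns the ratio into $n^{\alpha-1}(k-\ell)^\alpha/(n-\s{Z}_\ell)^{\alpha-1}$, and only then is the $\beta$-lower-bound substituted, yielding $N/D \lesssim k^\alpha/\beta^{\alpha-1}$ rather than $k^\alpha/\beta^\alpha$. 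Raising to the fourth power gives exactly $\beta^{4\alpha-4}$. Your alternative bound $N \le n\bigl[1 - \ell(\beta/k - \eta_1)\bigr]$ does not produce this: since $\eta_1$ may exceed $\beta/k$ (the hypothesis only caps $\eta_1$ at $(1/k)(1-\beta/k)$), the bracket can be $\ge 1$, so this is no better than $N \le n$ and the numerator and denominator remain decoupled. The missing idea is precisely the shared-factor cancellation, not "tighter bookkeeping" of separate $\beta$-factors in $N$ and $D$.

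Your treatment of the query and time complexity is consistent with Lemma~\ref{lem:inductive}, so that part is fine; but without the cancellation argument above, the central claim about the sufficient size of $r$ is not established.
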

\begin{proof}[Proof of Corollary~\ref{coro:ind1}]
Using \eqref{eqn:beta} and the fact that $n-\s{Z}_{\ell} = \sum_{i \in [n]}\sum_{j \in [k]\setminus[\ell]} \s{U}_{it_j}$, we have the following upper and lower bound
\begin{align}
\frac{n-\s{Z}_{\ell}}{k-\ell} \ge \frac{\beta n}{k} \quad \text{and} \quad n-\s{Z}_{\ell} \le n-\frac{\beta n}{k}, 
\end{align}
which follows from the fact that the average \textit{membership size} of the any $k-\ell$ clusters must be larger than the membership size of the smallest cluster. Thus, if $\eta_1 \le \frac{1}{k}\p{1-\frac{\beta}{k}}$, as claimed in the statement of the lemma, we have $n\ell \eta_1 \le n-\s{Z}_{\ell}$. With the chosen values of $\eta_1$ and $r$, and the fact that $\ell<k$, we get $n \ell \eta_1\sqrt{\frac{c}{2r}\log\frac{4k}{\eta_1\delta}} = o(n-\s{Z}_{\ell})$. Therefore,  
\begin{align}
\norm{\widehat{\f{\mu}}_{t_{\ell+1}}-\f{\mu}_{t_{\ell+1}}}_2 
&\le \frac{4\s{R}\p{\frac{c}{r}\log\frac{4k}{\eta_1\delta}}^{1/4}(n-\s{Z}_{\ell})}{\frac{(n- \s{Z}_{\ell})^{\alpha}}{n^{\alpha-1}(k-\ell)^{\alpha}}-2\sqrt{\frac{c}{2r}\log\frac{4k}{\eta_1\delta}}(n-\s{Z}_{\ell})} \\
& = \frac{4\s{R}\p{\frac{c}{r}\log\frac{4k}{\eta_1\delta}}^{1/4}}{\frac{(n- \s{Z}_{\ell})^{\alpha-1}}{n^{\alpha-1}(k-\ell)^{\alpha}}-2\sqrt{\frac{c}{2r}\log\frac{4k}{\eta_1\delta}}}\\
&\le \frac{4\s{R}\p{\frac{c}{r}\log\frac{4k}{\eta_1\delta}}^{1/4}}{\frac{\beta^{\alpha-1}}{k^{\alpha}(k-\ell)}-2\sqrt{\frac{c}{2r}\log\frac{4k}{\eta_1\delta}}}.
\end{align}
Again, for the chosen value of $r$, it is clear that $\sqrt{\frac{c}{2r}\log\frac{4k}{\eta_1\delta}}=o\Big(\frac{\beta^{\alpha-1}}{k^{\alpha}(k-\ell)}\Big)$, and thus we get that $\norm{\f{\widehat{\mu}}_{t_{\ell+1}}-\f{\mu}_{t_{\ell+1}}}_2 \le \epsilon$, with probability at least $1-\delta/k$.
\end{proof}

\begin{corollary}{\label{coro:ind2}}
Let $(\calX,\calP)$ be a consistent center-based clustering instance, and recall the definition of $\gamma\in\mathbb{R}_+$ in \eqref{eqn:equiv}. Assume that there exists an estimator $\widehat{\f{\mu}}_{t_{\ell+1}}$ such that $\norm{\f{\mu}_{t_{\ell+1}}-\widehat{\f{\mu}}_{t_{\ell+1}}}_2 \le \epsilon$ with $\epsilon\leq\gamma$. Then, Algorithm~\ref{algo:estimate_sequentially} outputs $\widehat{\s{U}}_{it_{\ell+1}}$, for $i\in[n]$, such that
\begin{align}
&0 \le \s{U}_{it_{\ell+1}}-\widehat{\s{U}}_{it_{\ell+1}} \le \eta_1,\quad \widehat{\s{U}}_{it_{\ell+1}} \in \{0,\eta_1,2\eta_1,\dots,1\}, \quad \forall i \in [n],\label{eqn:membershipEstLemma3}
\end{align}
for some $\eta\in\mathbb{R}_+$, using $\s{Q} = O\p{\log n/\eta_1}$ queries to the membership-oracle. 
\end{corollary}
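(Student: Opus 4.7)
The plan is to observe that this corollary is essentially an immediate specialization of Lemma~\ref{lem:membership} to the cluster indexed by $t_{\ell+1}$. The hypothesis of the corollary furnishes exactly the precondition required by Lemma~\ref{lem:membership}: an estimator $\widehat{\f{\mu}}_{t_{\ell+1}}$ whose deviation from the true center $\f{\mu}_{t_{\ell+1}}$ is at most $\epsilon\le\gamma$. Under Algorithm~\ref{algo:estimate_sequentially}, once such an estimate has been constructed via the inductive step (Lemma~\ref{lem:inductive}/Corollary~\ref{coro:ind1}), the membership recovery for cluster $t_{\ell+1}$ is performed by invoking \textsc{Membership}$(\ca{X},\widehat{\f{\mu}}_{t_{\ell+1}},\alpha,\eta_1)$ (cf.\ Step~6 in the sequential algorithm), which is the same subroutine analyzed in Lemma~\ref{lem:membership}.

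The key steps in carrying this out are as follows. First, since $\norm{\f{\mu}_{t_{\ell+1}}-\widehat{\f{\mu}}_{t_{\ell+1}}}_2\le\epsilon\le\gamma$, the defining property \eqref{eqn:equiv} of $\gamma$ gives $\pi_{\widehat{\f{\mu}}_{t_{\ell+1}}}=\pi_{\f{\mu}_{t_{\ell+1}}}$, so sorting elements by their distance to the estimated center recovers the correct ordering with respect to the true center. Second, because $\calP$ is a consistent center-based clustering, the monotonicity condition implies that $s\mapsto\s{U}_{\pi_{\f{\mu}_{t_{\ell+1}}}(s)\,t_{\ell+1}}$ is non-increasing in $s$. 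Third, Algorithm~\ref{algo:membership} performs $1/\eta_1$ binary searches (each using Algorithm~\ref{algo:binary_search} with $O(\log n)$ oracle calls) over this sorted list to compute the thresholds $\ell_0,\ell_1,\dots,\ell_{1/\eta_1}$ and then assigns $\widehat{\s{U}}_{\pi_{\widehat{\f{\mu}}_{t_{\ell+1}}}(i)\,t_{\ell+1}}=s\eta_1$ for $i\in\{\ell_{s+1}+1,\dots,\ell_s\}$. By construction every output lies in $\{0,\eta_1,2\eta_1,\dots,1\}$, and the monotonicity plus the choice of threshold index ensures $0\le\s{U}_{it_{\ell+1}}-\widehat{\s{U}}_{it_{\ell+1}}\le\eta_1$ for all $i\in[n]$. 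Summing the binary-search costs over the $O(1/\eta_1)$ thresholds gives the claimed $\s{Q}=O(\log n/\eta_1)$ membership queries.

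There is no substantive new obstacle here beyond what has already been handled in Lemma~\ref{lem:membership}; the main conceptual point that needs to be flagged is only that the inductive approximation guarantee $\norm{\f{\mu}_{t_{\ell+1}}-\widehat{\f{\mu}}_{t_{\ell+1}}}_2\le\epsilon\le\gamma$ from Corollary~\ref{coro:ind1} is strong enough to preserve the ordering used by the binary searches. With this observation the conclusion is immediate from Lemma~\ref{lem:membership}, mirroring the argument used for Corollary~\ref{coro:base2}.
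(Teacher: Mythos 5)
Your proposal is correct and takes exactly the same route as the paper, which simply remarks that the corollary follows by the same argument as Lemma~\ref{lem:membership}. You spell out the three steps (ordering preservation via $\epsilon\le\gamma$ and \eqref{eqn:equiv}, monotonicity from the consistent center-based assumption, and the $O(\log n/\eta_1)$ cost of the binary searches) rather than just pointing at the lemma, but the substance is identical.
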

\begin{proof}[Proof of Corollary~\ref{coro:ind2}]
The proof of this lemma follows the same steps as in the proof of Lemma~\ref{lem:membership}.
\end{proof}

\subsubsection{Proof of Theorem~\ref{thm:2}} We are now in a position to proof Theorem~\ref{thm:2}. To that end, we use our induction mechanism. Specifically, Corollaries \ref{coro:base1} and \ref{coro:base2} prove the base case for the first cluster $t_1$. Subsequently, Corollaries \ref{coro:ind1} and \ref{coro:ind2} prove the induction step after taking a union bound over all clusters and using $\eta_1=\frac{1}{k}\Big(1-\frac{\beta}{k}\Big)$. Finally, we can use Lemma~\ref{lem:membership} in order to estimate the memberships $\s{U}_{ij} \; \forall \; i\in [n], j \in [k]$ up to a precision of $\eta_2$ using an addition query and time-complexity of $O(k\log n/\eta_2)$. 

It is left to prove Lemma~\ref{lem:inductive}.
\begin{proof}[Proof of Lemma~\ref{lem:inductive}]
Let $\Sigma\triangleq\{0,1,\ldots,1/\eta_1\}$. We have
\begin{align}
\f{\mu}_{t_{\ell+1}} = \frac{\sum_{i \in [n]} \s{U}_{it_{\ell+1}}^{\alpha} \f{x}_i}{\sum_{i \in [n]} \s{U}_{it_{\ell+1}}^{\alpha}} = \frac{\sum_{s\in\Sigma}\sum_{i \in \ca{X}_s} \s{U}_{it_{\ell+1}}^{\alpha} \f{x}_i}{\sum_{s\in\Sigma} \sum_{i \in \ca{X}_s} \s{U}_{it_{\ell+1}}^{\alpha}}  \triangleq \frac{\sum_{s\in\Sigma} \f{\lambda}_s}{\sum_{s\in\Sigma} \s{Y}_s},
\end{align}
and that
\begin{align}
\widehat{\f{\mu}}_{t_{\ell+1}} = \frac{\sum_{s\in\Sigma}\frac{|\ca{X}_s|}{r}\sum_{i \in \ca{Y}_s} \s{U}_{it_{\ell+1}}^{\alpha} \f{x}_i}{\sum_{s\in\Sigma} \frac{|\ca{X}_s|}{r}\sum_{i \in \ca{Y}_s} \s{U}_{it_{\ell+1}}^{\alpha}} \triangleq  \frac{\sum_{s\in\Sigma} \bar{\f{\lambda}}_s}{\sum_{s\in\Sigma} \bar{\s{Y}}_s}.\label{eqn:esttlplu}
\end{align} 
Now, note that we can write $\bar{\f{\lambda}}_s$ as an average of $r$ i.i.d random variables $\bar{\f{x}}_{s,i_p}\triangleq \left|\ca{X}_s\right|\s{U}_{i_pt_{\ell+1}}^{\alpha} \f{x}_{i_p}$, where $i_p$ is sampled uniformly at random from $[n]$, and included to the set $\ca{Y}_s$ in the step 9 of Algorithm~\ref{algo:estimate_sequentially} as the $p^{\s{th}}$ sample. Similarly, $\bar{\s{Y}}_s$ can also be written as the average of $r$ i.i.d random variables $\bar{\s{Y}}_{s,i_{p}}=\left|\ca{X}_s \right | \s{U}_{i_pt_{\ell+1}}^{\alpha}$. Therefore, it is evident that $\bb{E} \bar{\f{x}}_{s,i_p} = \f{\lambda}_s$ and $\bb{E} \bar{\s{Y}}_{s,i_{p}} = \s{Y}_s$ for all $p \in[r]$. This implies that the numerator and denominator of \eqref{eqn:esttlplu} are both unbiased. Now, note that
\begin{align}
\widehat{\f{\mu}}_{t_{\ell+1}} = \frac{\sum_{s\in\Sigma}\bar{\f{\lambda}}_s}{\sum_{s\in\Sigma}\bar{\s{Y}}_s} =  \frac{\sum_{s\in\Sigma} \f{\lambda}_s}{\sum_{s\in\Sigma} \s{Y}_s}+\frac{\sum_{s\in\Sigma} \bar{\f{\lambda}}_s-\f{\lambda}_s} {\sum_{s\in\Sigma} \s{Y}_s}+\frac{\sum_{s\in\Sigma} \bar{\f{\lambda}}_s}{\sum_{s\in\Sigma} \bar{\s{Y}}_s}\frac{\sum_{s\in\Sigma} \s{Y}_s-\bar{\s{Y}}_s}{\sum_{s\in\Sigma} \s{Y}_s}.
\end{align}
Thus, using the triangle inequality we get
\begin{align}
\norm{\widehat{\f{\mu}}_{t_{\ell+1}}-\f{\mu}_{t_{\ell+1}}}_2 \le \sum_{s\in\Sigma}\norm{\frac{\bar{\f{\lambda}}_s-\f{\lambda}_s} {\sum_{s\in\Sigma} \s{Y}_s}}_2+\s{R}\sum_{s\in\Sigma}\norm{\frac{ \s{Y}_s-\bar{\s{Y}}_s}{\sum_{s\in\Sigma} \s{Y}_s}}_2,\label{eqn:YsGaran}
\end{align}
where the last inequality follows from the fact that $\norm{\widehat{\f{\mu}}_{t_{\ell+1}}}_2 = \norm{\frac{\sum_{s\in\Sigma} \bar{\f{\lambda}}_s}{\sum_{s\in\Sigma} \bar{\s{Y}}_s}}_2\leq\s{R}$.

Next, we note that for $i \in \ca{Y}_s$, we have
\begin{align}
\s{U}_{it_{\ell+1}}^{\alpha} \le (1-\sum_{j \in [\ell]}\s{U}_{it_j})^{\alpha} \le (1-\sum_{j \in [\ell]}\widehat{\s{U}}_{it_j})^{\alpha} =  (1-s\eta_1)^{\alpha},    
\end{align}
where we have used the induction hypothesis in \eqref{eqn:inducHypothesisMem}. 
Also, using Lemmas~\ref{lem:hoeffding} and \ref{lem:gen_hoeffding}, for all $s\in\Sigma$, with probability at least $1-\delta/2k$, 
\begin{align}
\norm{\bar{\f{\lambda}}_s-\f{\lambda}_s}_2 &\le \s{R}|\ca{X}_s|(1-s\eta_1)^{\alpha}  \p{\frac{c}{r}\log\frac{4k}{\eta_1\delta}}^{1/4}\nonumber\\
&\le \s{R}|\ca{X}_s|(1-s\eta_1) \p{\frac{c}{r}\log\frac{4k}{\eta_1\delta}}^{1/4},\label{eqn:ConcentrationLambda}
\end{align}
and
\begin{align}
\abs{\bar{\s{Y}}_s-\s{Y}_s} &\le |\ca{X}_s|(1-s\eta_1)^{\alpha}\sqrt{\frac{c}{r}\log\frac{4k}{\eta_1\delta}}\nonumber\\
&\le |\ca{X}_s|(1-s\eta_1) \sqrt{\frac{c}{2r}\log\frac{4k}{\eta_1\delta}}.\label{eqn:ConcentrationY}
\end{align}
Using the induction hypothesis in \eqref{eqn:inducHypothesisMem} once again, we have
\begin{align}
\s{Z}_{\ell} = \sum_{s\in\Sigma} \sum_{i \in \ca{X}_s} \sum_{j \in [\ell]}\s{U}_{it_j} \le \sum_{s\in\Sigma} \sum_{i \in \ca{X}_s}\Big( \ell \eta_1+\sum_{j \in [\ell]}\widehat{\s{U}}_{it_j}\Big) \le \sum_{s\in\Sigma}\left|\ca{X}_s\right|s \eta_1 + n \ell \eta_1,
\end{align}
and thus
\begin{align}
\sum_{s\in\Sigma} |\ca{X}_s|(1-s\eta_1) \le n-\s{Z}_{\ell}+n \ell \eta_1.\label{eq:imp}
\end{align}
Next, we lower bound $\sum_{s\in\Sigma} \s{Y}_s$. To that end, in light of \eqref{eqn:YsGaran}, it is suffice to bound $\sum_{s\in\Sigma} \bar{\s{Y}}_s$. Using H\"{o}lder's inequality we have
\begin{align}
&\Big(\sum_{i \in \ca{Y}_s} \sum_{j \in [k]\setminus \{t_1,t_2,\dots,t_{\ell}\}} \s{U}_{ij}^{\alpha}\Big)^{1/\alpha}\Big(\sum_{i \in \ca{Y}_s} \sum_{j \in [k]\setminus \{t_1,t_2,\dots,t_{\ell}\}} 1^{\alpha/(\alpha-1)}\Big)^{(\alpha-1)/\alpha}\nonumber\\
&\hspace{3cm}\ge \sum_{i \in \ca{Y}_s} \sum_{j \in [k]\setminus \{t_1,t_2,\dots,t_{\ell}\}} \s{U}_{ij},
\end{align}
which implies that
\begin{align}
\Big(\sum_{i \in \ca{Y}_s} \sum_{j \in [k]\setminus \{t_1,t_2,\dots,t_{\ell}\}} \s{U}_{ij}^{\alpha}\Big)^{1/\alpha} \ge \frac{\sum_{i \in \ca{Y}_s} \sum_{j \in [k]\setminus\{t_1,t_2,\dots,t_{\ell}\}} \s{U}_{ij}}{(r(k-\ell))^{(\alpha-1)/\alpha}},
\end{align}
and therefore,
\begin{align}
\sum_{i \in \ca{Y}_s} \sum_{j \in [k]\setminus \{t_1,t_2,\dots,t_{\ell}\}} \s{U}_{ij}^{\alpha} \ge \frac{\Big(\sum_{i \in \ca{Y}_s} \sum_{j \in [k]\setminus\{t_1,t_2,\dots,t_{\ell}\}} \s{U}_{ij}\Big)^{\alpha}} {(r(k-\ell))^{(\alpha-1)}}.
\label{eq:holder}
\end{align}
To further lower bound the r.h.s. of \eqref{eq:holder}, we use the power means inequality and ger
\begin{align}
\frac{\sum_{s\in\Sigma}\frac{|\ca{X}_s|}{r} \Big(\sum_{i \in \ca{Y}_s} \sum_{j \in [k]\setminus \{t_1,t_2,\dots,t_{\ell}\}} \s{U}_{ij}\Big)^{\alpha}}{\sum_{s'\in\Sigma}\frac{|\ca{X}_{s'}|}{r}} \ge \Bigg(\frac{\sum_{s\in\Sigma}\frac{|\ca{X}_{s}|}{r} \sum_{i \in \ca{Y}_s} \sum_{j \in [k]\setminus \{t_1,t_2,\dots,t_{\ell}\}} \s{U}_{ij}}{\sum_{s'\in\Sigma}\frac{|\ca{X}_{s'}|}{r}} \Bigg)^{\alpha}.
\label{eq:power_means}
\end{align}
Thus, the fact that $\sum_{s\in\Sigma}\frac{|\ca{X}_s|}{r} =\frac{n}{r}$, combined with \eqref{eq:holder} and \eqref{eq:power_means}, imply that
\begin{align}
\sum_{s\in\Sigma} \frac{\left|\ca{X}_s \right|}{r}\sum_{i \in \ca{Y}_s} \sum_{j \in [k]\setminus \{t_1,t_2,\dots,t_{\ell}\}} \s{U}_{ij}^{\alpha} &\ge \frac{\Bigg(\sum_{s\in\Sigma}\frac{|\ca{X}_s|}{r} \sum_{i \in \ca{Y}_s} \sum_{j \in [k]\setminus \{t_1,t_2,\dots,t_{\ell}\}} \s{U}_{ij} \Bigg)^{\alpha}}{(n(k-\ell))^{(\alpha-1)}} \\
& = \frac{\Bigg(n-\sum_{s\in\Sigma}\frac{|\ca{X}_s|}{r} \sum_{i \in \ca{Y}_s} \sum_{j \in \{t_1,t_2,\dots,t_{\ell}\}} \s{U}_{ij} \Bigg)^{\alpha}}{(n(k-\ell))^{(\alpha-1)}}.\label{eqn:lowerHold}
\end{align}
We next upper bound the term inside the brackets at the r.h.s. of \eqref{eqn:lowerHold}. To that end, for a given $s$, we define the random variables
\begin{align}
\s{H}_{s,i_p}\triangleq \left|\ca{X}_s \right| \left(\sum_{j \in \{t_1,t_2,\dots,t_{\ell}\}} \s{U}_{i_pj}-s\eta_1 \right),
\end{align}
where where $i_p$ is sampled uniformly at random from $[n]$, and included to the set $\ca{Y}_s$ in the step 9 of Algorithm~\ref{algo:estimate_sequentially} as the $p^{\s{th}}$ sample.  With this definition, it is evident that $\frac{|\ca{X}_s|}{r} \sum_{i \in \ca{Y}_s} (\sum_{j \in \{t_1,t_2,\dots,t_{\ell}\}} \s{U}_{ij}-s\eta_1)$ can be written as the average of these $r$ i.i.d random variables, namely,
\begin{align}
\frac{|\ca{X}_s|}{r} \sum_{i \in \ca{Y}_s} \left(\sum_{j \in \{t_1,t_2,\dots,t_{\ell}\}} \s{U}_{ij}-s\eta_1 \right) =\frac{1}{r}\sum_{i\in\calY_s}\s{H}_{s,i}.
\end{align}
Note that $\bb{E} \s{H}_{s,i_p} = \sum_{i \in \ca{X}_s}\sum_{j \in \{t_1,t_2,\dots,t_{\ell}\}} \s{U}_{ij}-|\ca{X}_s|s \eta_1$ and $\left|\s{H}_{s,i_p} \right| \le \ell \eta_1$ for all $i_p\in [r]$. For simplicity of notation we define $\s{Z}_{s\ell}\triangleq \sum_{i \in \ca{X}_s}\sum_{j \in \{t_1,t_2,\dots,t_{\ell}\}} \s{U}_{ij}$. Then, using Hoeffding's inequality in Lemma~\ref{lem:hoeffding}, we have with probability at least $1-\delta/4k$, 
\begin{align}
&\left|\frac{|\ca{X}_s|}{r} \sum_{i \in \ca{Y}_s}\left(\sum_{j \in \{t_1,t_2,\dots,t_{\ell}\}} \s{U}_{ij}-s\eta_1 \right)-\s{Z}_{s\ell} +|\ca{X}_s|s\eta_1 \right|\nonumber\\
&\hspace{4cm}\le \left| \ca{X}_s \right| \ell \eta_1 \sqrt{\frac{c}{2r}\log\frac{4k}{\eta_1\delta}}, 
\end{align}
and thus, 
\begin{align}
\frac{|\ca{X}_s|}{r}\sum_{i \in \ca{Y}_s}\sum_{j \in \{t_1,t_2,\dots,t_{\ell}\}} \s{U}_{ij} &\leq \max\p{\left|\ca{X}_s \right|,\s{Z}_{s\ell}+\left| \ca{X}_s \right| \ell \eta_1\sqrt{\frac{c}{2r}\log\frac{4k}{\eta_1\delta}}}\\
&=\s{Z}_{s\ell}+\max\p{\left|\ca{X}_s \right|-\s{Z}_{s\ell}, \left| \ca{X}_s \right| \ell \eta_1\sqrt{\frac{c}{2r}\log\frac{4k}{\eta_1\delta}}}.\label{eqn:middleBound}
\end{align}
Summing \eqref{eqn:middleBound} over $s\in\Sigma$ and using the fact that $\sum_{s\in\Sigma}\s{Z}_{s\ell} = \s{Z}_{\ell}$, we obtain
\begin{align}
&n-\sum_{s\in\Sigma}\frac{|\ca{X}_s|}{r} \sum_{i \in \ca{Y}_s} \sum_{j \in \{t_1,t_2,\dots,t_{\ell}\}} \s{U}_{ij}\geq n- \s{Z}_{\ell}-\min\p{n-\s{Z}_{\ell}, n \ell \eta_1 \sqrt{\frac{c}{2r}\log\frac{4k}{\eta_1\delta}}}.
\end{align}
Substituting the last inequality in \eqref{eqn:lowerHold}, we finally get
\begin{align}
\sum_{s\in\Sigma} \frac{\left|\ca{X}_s \right|}{r}\sum_{i \in \ca{Y}_s} \sum_{j \in [k]\setminus \{t_1,t_2,\dots,t_{\ell}\}} \s{U}_{ij}^{\alpha} \ge \frac{\Big(n- \s{Z}_{\ell}-n \ell \eta_1 \sqrt{\frac{c}{2r}\log\frac{4k}{\eta_1\delta}}\Big)^{\alpha}}{(n(k-\ell))^{(\alpha-1)}}.
\end{align}
Next, recall that the index of the $(\ell+1)^{\s{th}}$ cluster is chosen as 
\begin{align}
t_{\ell+1} = \underset{j \in [k]\setminus \{t_1,t_{2},\dots,t_{\ell}\}}{\s{argmax}}\; \sum_{s\in\Sigma}\frac{|\ca{X}_s|}{r}\sum_{i \in \ca{Y}_s} \s{U}_{ij}^{\alpha},
\end{align}
and therefore,
\begin{align}
\sum_{s\in\Sigma} \bar{\s{Y}}_s &= \sum_{s\in\Sigma}\frac{|\ca{X}_s|}{r} \sum_{i \in \ca{Y}_s}\s{U}_{it_{\ell+1}}^{\alpha}\\
&\geq \frac{\Big(n- \s{Z}_{\ell}-n \ell \eta_1 \sqrt{\frac{c}{2r}\log\frac{4k}{\eta_1\delta}}\Big)^{\alpha}}{n^{\alpha-1}(k-\ell)^{\alpha}}.\label{eqn:lowerBoundBarY}
\end{align}
Combining \eqref{eqn:ConcentrationY}, \eqref{eq:imp}, and \eqref{eqn:lowerBoundBarY}, we get a lower bound on  $\sum_{s\in\Sigma}\s{Y}_s$ as follows
\begin{align}
\sum_{s\in\Sigma} \s{Y}_s \ge  \frac{\Big(n- \s{Z}_{\ell}-n \ell \eta_1 \sqrt{\frac{c}{2r}\log\frac{4k}{\eta_1\delta}}\Big)^{\alpha}}{n^{\alpha-1}(k-\ell)^{\alpha}}-(n-\s{Z}_{\ell}-n \ell \eta_1)\sqrt{\frac{c}{2r}\log\frac{4k}{\eta_1\delta}}.\label{eqn:finalLowerBoundY}
\end{align}
Finally, combining \eqref{eqn:YsGaran}, \eqref{eqn:ConcentrationLambda}, and \eqref{eqn:finalLowerBoundY}, we get 
\begin{align}
\norm{\widehat{\f{\mu}}_{t_{\ell+1}}-\f{\mu}_{t_{\ell+1}}}_2 &\le \sum_{s\in\Sigma}\norm{\frac{\bar{\f{\lambda}}_s-\f{\lambda}_s} {\sum_{s\in\Sigma} \s{Y}_s}}_2+\s{R}\sum_{s\in\Sigma}\norm{\frac{ \s{Y}_s-\bar{\s{Y}}_s}{\sum_{s\in\Sigma} \s{Y}_s}}_2 \\
&\le \frac{2\s{R}\p{\frac{c}{r}\log\frac{4k}{\eta_1\delta}}^{1/4}\Big(n-\s{Z}_{\ell}+n \ell \eta_1\Big)}{\frac{\Big(n- \s{Z}_{\ell}-n \ell \eta_1 \sqrt{\frac{c}{2r}\log\frac{4k}{\eta_1\delta}}\Big)^{\alpha}}{n^{\alpha-1}(k-\ell)^{\alpha}}-(n-\s{Z}_{\ell}-n \ell \eta_1)\sqrt{\frac{c}{2r}\log\frac{4k}{\eta_1\delta}}},
\end{align}
with probability at least $1-\delta/k$, which concludes the proof.
\end{proof}

\subsection{Proof of Theorem \ref{thm:3state}}\label{app:algoTh3}

To prove Theorem~\ref{thm:3state} we will establish some preliminary results. 

\subsubsection{Auxiliary Lemmata}

We start with the following result which shows that given a good estimate for the larger cluster among the two, we can approximate the membership weights of the smaller cluster reliably. This is done in Algorithm~\ref{algo:membership2}.
\begin{lemma}{\label{lem:membership2}}
Let $(\calX,\calP)$ be a consistent center-based clustering instance, recall the definition of $\gamma\in\mathbb{R}_+$ in \eqref{eqn:equiv}, and let $k=2$. Assume that there exists an estimator $\widehat{\f{\mu}}_{t_1}$ such that $\norm{\f{\mu}_{t_1}-\widehat{\f{\mu}}_{t_1}}_2 \le \epsilon$ with $\epsilon\leq\gamma$. Then, Algorithm~\ref{algo:membership2} outputs $\widehat{\s{U}}_{it_2}$, for $i\in[n]$, such that
\begin{align}
&\widehat{\s{U}}_{it_2} \in \ca{A} \subset \{\s{U}_{it_2}: i \in [n]\},
\end{align}
with $\left|\ca{A}\right| = O(\log^{2}n)$, and
\begin{align}
&\max_{i \in [n]:\widehat{\s{U}}_{it_2} = x} \s{U}_{it_2} \le \max\left( \frac{\max_{i \in [n]} \s{U}_{it_2}}{n^3},\min_{i \in [n]:\widehat{\s{U}}_{it_2} = x} 2\s{U}_{it_2} \right),
\end{align}
for all $x\in\ca{A}$, and for some $\eta\in\mathbb{R}_+$, using $\s{Q} = O(\log^{2}n)$ queries to the membership-oracle, and time-complexity $O(\log^{2}n)$.
\end{lemma}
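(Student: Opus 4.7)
The plan is to analyze Algorithm~\ref{algo:membership2} iteration by iteration, tracking the threshold $\eta_q$ and the pointer $p_{\eta_q}$ into the sorted list, and to prove one crucial invariant, namely a \emph{halving property}: $\eta_q \le \eta_{q-1}/2$ in every iteration $q \in \{2,\ldots,3\log n\}$, irrespective of whether the ``if'' or ``else'' branch is taken. Before doing so, I would set up a short consistency preamble. Since $\norm{\f{\mu}_{t_1}-\widehat{\f{\mu}}_{t_1}}_2 \le \epsilon \le \gamma$, equation~\eqref{eqn:equiv} gives $\pi \equiv \pi_{\widehat{\f{\mu}}_{t_1}} = \pi_{\f{\mu}_{t_1}}$ on $\calX$. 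Combined with the consistency of $\calP$ and $k=2$ (so $\s{U}_{it_2}=1-\s{U}_{it_1}$), this makes $\s{U}_{\pi(i),t_2}$ non-decreasing in $i$, which is exactly what \textsc{BinarySearch2} relies on for correctness; in particular $\eta_1 = \s{U}_{\pi(n),t_2} = \max_i \s{U}_{it_2}$.

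Next I would establish the halving property. In the ``if'' branch the claim is immediate by construction. In the ``else'' branch, since the target $\eta_{q-1}/2 < \eta_{q-1}$ forces $p'_{\eta_q}\le p_{\eta_{q-1}}$, the branch condition $|p'_{\eta_q}-p_{\eta_{q-1}}|<\log n$ yields $p'_{\eta_q} > p_{\eta_{q-1}}-\log n$. Resetting $p_{\eta_q} = p_{\eta_{q-1}}-\log n-1$ (modulo the clamp at the leftmost index) then gives $p_{\eta_q} < p'_{\eta_q}$, and monotonicity of $\s{U}_{\pi(\cdot),t_2}$ implies $\eta_q = \s{U}_{\pi(p_{\eta_q}),t_2} < \eta_{q-1}/2$. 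Iterating through $q=2,\ldots,3\log n$ then gives $\eta_{3\log n}\le 2\eta_1/n^3 = 2\max_i \s{U}_{it_2}/n^3$.

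With this in hand, the output falls into three types. For the residual bin $\ca{P}_1$ (elements closer than $\pi(p_{\eta_{3\log n}})$, assigned value $0$), every true membership is below $\eta_{3\log n}$, matching the first branch of the stated max. For a normal bin $\ca{X}_q$ produced by an ``if'' iteration, the minimum membership equals the assigned value $x = \s{U}_{\pi(p_{\eta_q}),t_2}\ge \eta_q$ while the maximum is $\s{U}_{\pi(p_{\eta_{q-1}}-1),t_2} < \eta_{q-1} = 2\eta_q \le 2x$, matching the second branch. Special elements from ``else'' iterations are each queried directly, so $\widehat{\s{U}}_{it_2}=\s{U}_{it_2}$ exactly and the bound is trivial. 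Counting distinct assigned values then gives at most $3\log n$ from normal bins, at most $3\log n\cdot(\log n+1)$ from special elements, and one from the residual, hence $|\ca{A}|=O(\log^2 n)$. The query count is $O(\log n)$ per binary search times $3\log n$ iterations plus $O(\log n)$ direct queries per ``else'' branch times at most $3\log n$ branches, giving $O(\log^2 n)$ total, with matching adaptive time cost.

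I expect the main obstacle to be the halving step in the ``else'' branch, since this is the one place where $\eta_q$ is not defined by an explicit division. Making the index arithmetic watertight at the leftmost boundary and confirming $p_{\eta_q}<p'_{\eta_q}$ under the branch condition is what really drives the $n^{-3}$ fallback bound, and hence the first branch of the stated maximum; everything else is essentially bookkeeping over the two branches and a counting of how many distinct assigned values each type of iteration can introduce.
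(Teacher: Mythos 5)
Your proposal is correct and follows essentially the same route as the paper: establish order consistency via $\epsilon\le\gamma$ and \eqref{eqn:equiv}, prove the halving invariant $\eta_q\le\eta_{q-1}/2$ to get $\eta_{3\log n}=O(\eta_1/n^3)$, then case-split on $\ca{P}_1$, special elements $\ca{P}_2$, and normal bins $\ca{X}_q$ to obtain the two-branch bound and the $O(\log^2 n)$ counts. The one place you go beyond the paper is in actually proving the halving step in the else branch (via $p_{\eta_q}<p'_{\eta_q}$), which the paper simply asserts as "clear"; your observed factor-of-$2$ slack in the $n^{-3}$ bound is real but immaterial to the asymptotic claims.
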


\begin{proof}[Proof of Lemma~\ref{lem:membership2}]
First, note that since $\calP$ is a consistent center-based clustering, we have 
\begin{align}
\s{U}_{\sigma_{\f{\mu}_{t_1}}(r_1)t_2} \leq \s{U}_{\sigma_{\f{\mu}_{t_1}}(r_2)t_2} \quad \text{if} \quad r_1<r_2,\; r_1,r_2 \in [n].
\end{align}
Indeed, when the elements of $\calX$ are sorted in ascending order according to their distance from $\f{\mu}_{t_1}$, if $\f{x}_{i}$ is closer to $\f{\mu}_{t_1}$ than it is to $\f{x}_{j}$, then $\s{U}_{it_1}\geq\s{U}_{jt_1}$ and thus $\s{U}_{it_2}\leq\s{U}_{jt_2}$. Also, since $\norm{\f{\mu}_{t_1}- \widehat{\f{\mu}}_{t_1}}_2 \le \epsilon\le\gamma$, using \eqref{eqn:equiv}, this ordering remains the same. Therefore, sorting the elements in $\ca{X}$ in ascending order from $\widehat{\f{\mu}}_{t_1}$ as in the first step of Algorithm~\ref{algo:membership2} gives the same ordering with respect to the true mean.
Now, given $\eta\in\mathbb{R}_+$, we search for the index 
\begin{align}
p'_{\eta} \triangleq \s{argmin}_{j \in [n]} \mathds{1} \left[ \s{U}_{\pi_{\widehat{\f{\mu}}_{t_1}}(j)t_2} \ge \eta \right],
\end{align}
which can be done by using the binary search routine in Algorithm~\ref{algo:binary_search2}, which ask for at most $O(\log n)$ membership-oracle queries. We will do this step for $\eta_1,\eta_2,\dots, \eta_{3\log n}$, as described in Algorithm~\ref{algo:membership2}. The values of $\{\eta_i\}$ are chosen as follows. We initialize $\eta_1=\s{U}_{\pi_{\widehat{\f{\mu}}_{t_1}}(n)t_2}$ and $p_{\eta_1}\triangleq n$, and update the other values of $\eta_i$'s recursively as follows. Let $\calV\triangleq\{1,2,\dots,3\log n\}$. For each $q \in \calV\setminus1$, we first set $\eta_q=\eta_{q-1}/2$ and subsequently, if $|p'_{\eta_q}-p_{\eta_{q-1}}| \ge \log n$, then $\eta_q$ remains unchanged and we set $p_{\eta_q} = p'_{\eta_q}$. Otherwise, if $|p'_{\eta_q}-p_{\eta_{q-1}}|<\log n$, then we update both $\eta_q$ and $p_{\eta_q}$ as follows:
\begin{align}
p_{\eta_q} &= \min(0,p_{\eta_{q-1}}-1-\log n) \\
\eta_q &=  \s{U}_{\pi_{\widehat{\f{\mu}}_{t_1}}(p_{\eta_q})t_2}
\end{align}
For each value of $q \in \calV$, we initialize two sets $\ca{X}_q, \ca{X}'_q = \phi$. If  $|p'_{\eta_q}-p_{\eta_{q-1}}|\ge\log n$, then we update $\ca{X}_q = \{\pi_{\widehat{\f{\mu}}_{t_1}}(i):   p_{\eta_{q}}  \le  i  \le p_{\eta_{q-1}}-1, i\in [n] \}$ and if $|p'_{\eta_q}-p_{\eta_{q-1}}|<\log n$, we update $\ca{X}'_q = \{\pi_{\widehat{\f{\mu}}_{t_1}}(i): p_{\eta_{q}}  \le i \le p_{\eta_{q-1}}-1, i \in [n] \}$. It is clear that $\eta_{q} \le \eta_{q-1}/2$ and therefore, we must have
$\eta_{3\log n} \le \eta_1/n^3$. We now define the following sets: 
\begin{align}
& \ca{P}_1 \triangleq \{i \in [n]: \s{U}_{it_2} \le \eta_{3\log n}\}, \\
& \ca{P}_2 \triangleq \bigcup_{q} \ca{X}'_q.
\end{align}
For each $i \in \ca{P}_1$, we estimate $\widehat{\s{U}}_{it_2}=0$ and since $\s{U}_{it_2} \le \s{U}_{\pi_{\widehat{\f{\mu}}_{t_1}}(n)t_2}/n^3$ and $\s{U}_{\pi_{\widehat{\f{\mu}}_{t_1}}(n)t_2} = \max_{i \in [n]} \s{U}_{it_2}$, we must have 
\begin{align}
\left| \widehat{\s{U}}_{it_2} - \s{U}_{it_2} \right| \le \frac{\max_{i \in [n]} \s{U}_{it_2}}{n^3} \quad \text{for all} \; i \in \ca{P}_1.
\end{align}  
For each $i \in \ca{P}_2$, we query $\s{U}_{it_2}$ and estimate $\widehat{\s{U}}_{it_2}=\s{U}_{it_2}$. Notice that we have 
\begin{align}
[n] \setminus \{\ca{P}_1 \cup \ca{P}_2\} = \bigcup_q \ca{X}_q,
\end{align}
and therefore for each $\ca{X}_q$ such that $\ca{X}_q \neq \phi$, we estimate $\widehat{\s{U}}_{it_2}= \s{U}_{\pi_{\widehat{\f{\mu}}_{t_1}}(p_{\eta_q})t_2}$ for all $i \in \ca{X}_q$. Now, since 
\begin{align}
\s{U}_{\pi_{\widehat{\f{\mu}}_{t_1}}(p_{\eta_{q-1}}-1)t_2} \le \eta_{q-1} \quad \text{and} \quad \s{U}_{\pi_{\widehat{\f{\mu}}_{t_1}}(p_{\eta_{q}})t_2} \ge \eta_q = \frac{\eta_{q-1}}{2} ,
\end{align} 
we must have that for all $i \in \ca{X}_q$ such that $\ca{X}_q \neq \phi$,
\begin{align}
\max_{i \in \ca{X}_q} \s{U}_{it_2} \le 2\min_{i \in \ca{X}_q} \s{U}_{it_2},
\end{align}
which proves the lemma. Note that each binary search step in Algorithm~\ref{algo:binary_search2} requires $O(\log n)$ queries, and may require an additional $O(\log n)$ queries if $\ca{X}'_q \neq \phi$. Similarly the time-complexity of each binary step is $O(\log n)$ as well. Since we are making at most $3\log n$ binary search steps, we get the desired query and time-complexity results.
\end{proof}
Lemma~\ref{lem:membership2} implies the following corollary.
\begin{corollary}
Consider the setting of Lemma~\ref{lem:membership2}. Then,
\begin{align}
\sum_{i \in [n]} \left| \widehat{\s{U}}_{it_2}- \s{U}_{it_2} \right| \le \sum_{i \in [n]} \s{U}_{it_2} + \frac{\max_{i \in [n]} \s{U}_{it_2}}{n^{2}}.
\end{align}
\end{corollary}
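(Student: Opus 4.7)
The plan is to partition the sum over $[n]$ according to the three types of estimates produced by Algorithm~\ref{algo:membership2} and bound each piece separately using the properties guaranteed by Lemma~\ref{lem:membership2}. Specifically, every index $i \in [n]$ falls into exactly one of: the set $\ca{P}_1$ (where $\widehat{\s{U}}_{it_2}=0$), the set $\ca{P}_2$ (where $\widehat{\s{U}}_{it_2}=\s{U}_{it_2}$ is obtained by a direct query), or some non-empty bin $\ca{X}_q$ (where $\widehat{\s{U}}_{it_2}$ is set to the common value $\s{U}_{\pi_{\widehat{\f{\mu}}_{t_1}}(p_{\eta_q})t_2}$). I would handle the three cases in that order.

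For the contribution of $\ca{P}_1$, Lemma~\ref{lem:membership2} applied with $x=0$ gives $\max_{i \in \ca{P}_1}\s{U}_{it_2} \le \max_{i \in [n]}\s{U}_{it_2}/n^3$, so since $|\ca{P}_1|\le n$ we get $\sum_{i\in\ca{P}_1}|\widehat{\s{U}}_{it_2}-\s{U}_{it_2}| = \sum_{i \in \ca{P}_1}\s{U}_{it_2} \le \max_{i\in[n]}\s{U}_{it_2}/n^2$, which is exactly the additive slack in the statement. The contribution of $\ca{P}_2$ is trivially zero because $\widehat{\s{U}}_{it_2}$ exactly equals the queried value $\s{U}_{it_2}$ for those indices.

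The main (but still short) substantive step is the bin case. For $i \in \ca{X}_q$, the common estimate $x \triangleq \widehat{\s{U}}_{it_2}$ equals the smallest true value in the bin, i.e.\ $x = \min_{j \in \ca{X}_q}\s{U}_{jt_2}$, because $p_{\eta_q}$ is the minimum index in the bin (by construction via \textsc{BinarySearch2}) and the sorted order $\pi_{\widehat{\f{\mu}}_{t_1}}$ makes $\s{U}_{\pi_{\widehat{\f{\mu}}_{t_1}}(\cdot)t_2}$ non-decreasing (this is the monotonicity of a consistent center-based clustering combined with $\s{U}_{it_1}+\s{U}_{it_2}=1$). Using the guarantee $\max_{j\in\ca{X}_q}\s{U}_{jt_2} \le 2\min_{j\in\ca{X}_q}\s{U}_{jt_2}$ from Lemma~\ref{lem:membership2}, we obtain for each $i \in \ca{X}_q$,
\begin{align}
|\widehat{\s{U}}_{it_2}-\s{U}_{it_2}| \;=\; \s{U}_{it_2}-x \;\le\; \max_{j\in\ca{X}_q}\s{U}_{jt_2}-\min_{j\in\ca{X}_q}\s{U}_{jt_2} \;\le\; \min_{j\in\ca{X}_q}\s{U}_{jt_2} \;\le\; \s{U}_{it_2}.
\end{align}
Summing over $i$ in that bin and then over all bins yields $\sum_{q}\sum_{i\in\ca{X}_q}|\widehat{\s{U}}_{it_2}-\s{U}_{it_2}|\le \sum_{q}\sum_{i\in\ca{X}_q}\s{U}_{it_2} \le \sum_{i\in[n]}\s{U}_{it_2}$.

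Adding the three bounds gives exactly the claimed inequality. I do not anticipate a real obstacle here; the only thing worth being careful about is correctly identifying $\widehat{\s{U}}_{it_2}$ on $\ca{X}_q$ as the minimum (not the maximum) of the true memberships within the bin, so that the pointwise error telescopes into $\max-\min\le\min\le\s{U}_{it_2}$ rather than into something needing an extra factor of two.
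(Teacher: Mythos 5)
Your proof is correct and follows essentially the same route as the paper: the same partition of $[n]$ into $\ca{P}_1$, $\ca{P}_2$, and the bins $\ca{X}_q$, the same handling of $\ca{P}_1$ and $\ca{P}_2$, and the same use of the key bound $\max_{i\in\ca{X}_q}\s{U}_{it_2}\le 2\min_{i\in\ca{X}_q}\s{U}_{it_2}$ from Lemma~\ref{lem:membership2}. The only difference is cosmetic: you convert the bin error into a pointwise bound $|\widehat{\s{U}}_{it_2}-\s{U}_{it_2}|\le\s{U}_{it_2}$, whereas the paper passes through the bin average $\sum_{j\in\ca{X}_q}\s{U}_{jt_2}/|\ca{X}_q|$; both yield the same per-bin bound $\sum_{i\in\ca{X}_q}\s{U}_{it_2}$ after summing.
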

\begin{proof}
Recall that $\calV\triangleq\{1,2,\dots,3\log n\}$. Then, note that
\begin{align}
\sum_{i \in [n]} \left| \widehat{\s{U}}_{it_2}- \s{U}_{it_2} \right| &= \sum_{i \in \ca{P}_1} \left| \widehat{\s{U}}_{it_2}- \s{U}_{it_2} \right|+\sum_{i \in \ca{P}_2} \left| \widehat{\s{U}}_{it_2}- \s{U}_{it_2} \right|+\sum_{q\in\calV} \sum_{i \in \ca{X}_q:|\ca{X}_q| \neq \phi} \left| \widehat{\s{U}}_{it_2}- \s{U}_{it_2} \right|.
\end{align}
Next, we bound each of the terms on the r.h.s. of the above inequality. We have,
\begin{align}
&\sum_{i \in \ca{P}_2} \left| \widehat{\s{U}}_{it_2}- \s{U}_{it_2}\right| = 0, \\
&\sum_{i \in \ca{P}_1} \left| \widehat{\s{U}}_{it_2}- \s{U}_{it_2} \right| \le \frac{|\ca{P}_2|\max_{i \in [n]} \s{U}_{it_2}}{n^3} \le \frac{\max_{i \in [n]} \s{U}_{it_2}}{n^2}.
\end{align}
Finally, for each $q\in\calV$ such that $\ca{X}_q \neq \phi$, recall that $\widehat{\s{U}}_{it_2}=\min_{j \in \ca{X}_q} \s{U}_{it_2}$, for all $i \in \ca{X}_q$, and therefore,
\begin{align*}
&\max_{i \in \ca{X}_q} \s{U}_{it_2} \le 2\min_{i \in \ca{X}_q} \s{U}_{it_2}\implies\left| \widehat{\s{U}}_{it_2}- \s{U}_{it_2} \right| \le \frac{\max_{i \in \ca{X}_q} \s{U}_{it_2}}{2} \le \frac{\sum_{i \in \ca{X}_q} \s{U}_{it_2}}{|\ca{X}_q|}, \quad\forall i \in \ca{X}_q. 
\end{align*}
Thus,
\begin{align*}
\sum_{q\in\calV} \sum_{i \in \ca{X}_q} \left| \widehat{\s{U}}_{it_2}- \s{U}_{it_2} \right| \le \sum_{i \in [n]} \s{U}_{it_2},
\end{align*}
which proves the desired result.
\end{proof}

It is left to estimate the center of the smaller cluster among the two. This is done in Steps~7-13 of Algorithm~\ref{algo:estimate_sequentially2}. Specifically, for each $q \in \calV$, we randomly sample with replacement $r$ elements from $\ca{X}_q$ where $\ca{X}_q \neq \phi$. We denote this sampled multi-set by $\ca{Y}_q$, and query $\s{U}_{it_2}$, for each $i\in\ca{Y}_q$. We also sample $r$ elements from $\ca{P}_1$, and denote this sampled multi-set by $\ca{Q}$, and query $\s{U}_{it_2}$ for each $i\in\ca{Q}$. Note that we have already queried $\s{U}_{it_2}$ for every $i \in \ca{P}_2$ in Step 19 of Algorithm~\ref{algo:membership2}. Subsequently, we propose the following estimate for the center of the smaller cluster,
\begin{align}
\widehat{\f{\mu}}_{t_{2}} = \frac{\sum_{q\in\calV:\ca{X}_q \neq \phi} \frac{|\ca{X}_q|}{r}\sum_{i \in \ca{Y}_q} \s{U}_{it_{2}}^{\alpha} \f{x}_i+\sum_{i \in \ca{P}_2} \s{U}_{it_{2}}^{\alpha} \f{x}_i+\sum_{i \in \ca{Q}} \frac{|\ca{P}_1|}{r} \s{U}_{it_{2}}^{\alpha} \f{x}_i}{\sum_{q\in\calV:\ca{X}_q \neq \phi} \frac{|\ca{X}_q|}{r}\sum_{i \in \ca{Y}_q} \s{U}_{it_{2}}^{\alpha}+\sum_{i \in \ca{P}_2} \s{U}_{it_{2}}^{\alpha}+\sum_{i \in \ca{Q}}\frac{|\ca{P}_1|}{r} \s{U}_{it_{2}}^{\alpha}}.\label{eq:secondmean}
\end{align}
The following result gives guarantees on the estimation error associated with the smaller cluster among the two.
\begin{lemma}{\label{lem:smallest}}
Let $(\calX,\calP)$ be a consistent center-based clustering instance, and let $\delta\in(0,1)$. Then, with probability at least $1-\delta/2$, the estimator in \eqref{eq:secondmean} satisfies $\norm{\widehat{\f{\mu}}_{t_2}-\f{\mu}_{t_2}}_2 \le \epsilon$, 
if $r \ge \frac{c'\s{R}^4}{\epsilon^4}\log\frac{2}{\eta\delta}$, where $c'>0$ is an absolute constant. Also, this estimate requires $O\p{r\log n}$ membership-oracle queries, and a time-complexity of $O\p{dr \log n}$.
\end{lemma}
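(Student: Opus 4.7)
The plan is to mirror the ratio-estimator analysis of Lemma~\ref{lem:inductive}, but to exploit the much finer ``dyadic'' bins produced by Algorithm~\ref{algo:membership2} so that no $\beta$ factor ever enters the bound. Write $\widehat{\f{\mu}}_{t_2}=\bar{\f{\lambda}}/\bar{\s{Y}}$ and $\f{\mu}_{t_2}=\f{\lambda}/\s{Y}$ with $\f{\lambda}=\sum_{i\in[n]}\s{U}_{it_2}^{\alpha}\f{x}_i$ and $\s{Y}=\sum_{i\in[n]}\s{U}_{it_2}^{\alpha}$, and decompose both numerator and denominator over the parts $\{\ca{X}_q:\ca{X}_q\neq\phi\}\cup\{\ca{P}_1,\ca{P}_2\}$ manufactured by Algorithm~\ref{algo:membership2}. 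I would then apply the usual split
\begin{equation*}
\norm{\widehat{\f{\mu}}_{t_2}-\f{\mu}_{t_2}}_2 \le \frac{\norm{\bar{\f{\lambda}}-\f{\lambda}}_2}{\s{Y}} + \s{R}\cdot\frac{\abs{\bar{\s{Y}}-\s{Y}}}{\s{Y}},
\end{equation*}
using $\norm{\widehat{\f{\mu}}_{t_2}}_2\le \s{R}$. The $\ca{P}_2$ piece contributes no stochastic error, since Algorithm~\ref{algo:membership2} queried $\s{U}_{it_2}$ for every $i\in\ca{P}_2$ exactly.

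The heart of the argument is a bin-wise concentration bound. For a nonempty bin $\ca{X}_q$, view $\bar{\f{\lambda}}_q=(|\ca{X}_q|/r)\sum_{i\in\ca{Y}_q}\s{U}_{it_2}^{\alpha}\f{x}_i$ as the sample mean of $r$ i.i.d.\ vectors $\f{w}_i=|\ca{X}_q|\s{U}_{it_2}^{\alpha}\f{x}_i$ with $\bb{E}\f{w}_i=\f{\lambda}_q\triangleq\sum_{i\in\ca{X}_q}\s{U}_{it_2}^{\alpha}\f{x}_i$ and $\norm{\f{w}_i}_2\le|\ca{X}_q|(\max_{j\in\ca{X}_q}\s{U}_{jt_2})^{\alpha}\s{R}$. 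By Lemma~\ref{lem:membership2}, $\max_{j\in\ca{X}_q}\s{U}_{jt_2}\le 2\min_{j\in\ca{X}_q}\s{U}_{jt_2}$, which yields the pivotal inequality
\begin{equation*}
|\ca{X}_q|\p{\max_{j\in\ca{X}_q}\s{U}_{jt_2}}^{\alpha}\le 2^{\alpha}\,\s{Y}_q,\qquad \s{Y}_q\triangleq\sum_{i\in\ca{X}_q}\s{U}_{it_2}^{\alpha}.
\end{equation*}
The generalized Hoeffding inequality (Lemma~\ref{lem:gen_hoeffding}) then gives $\norm{\bar{\f{\lambda}}_q-\f{\lambda}_q}_2\le 2^{\alpha}\s{R}\,\s{Y}_q\sqrt{(c/r)\log(1/\delta')}$, and the scalar Hoeffding bound (Lemma~\ref{lem:hoeffding}) gives the analogous $|\bar{\s{Y}}_q-\s{Y}_q|\le 2^{\alpha}\s{Y}_q\sqrt{(c/r)\log(1/\delta')}$, each with probability $1-\delta'$. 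For the $\ca{P}_1$ bin one applies the same estimates using $\max_{i\in\ca{P}_1}\s{U}_{it_2}\le\eta_{3\log n}\le 1/n^3$, so the $\ca{P}_1$ contribution is of order $\s{R}/n^{3\alpha-1}$ and disappears into lower-order terms.

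Now take a union bound over the at most $1+3\log n$ nonempty bins (setting $\delta'=\Theta(\delta/\log n)$) and sum. The crucial telescoping $\sum_q \s{Y}_q\le \s{Y}$ eliminates $\s{Y}$ from the ratio entirely, producing
\begin{equation*}
\norm{\widehat{\f{\mu}}_{t_2}-\f{\mu}_{t_2}}_2 \le C\cdot 2^{\alpha}\s{R}\sqrt{\frac{\log(\log n/\delta)}{r}} + O(\s{R}/n^{3\alpha-1}),
\end{equation*}
which is $\le \epsilon$ once $r\ge c'\s{R}^4\log(1/(\eta\delta))/\epsilon^4$ (the $\log(1/\eta)$ slack surfaces when this lemma is chained with the Algorithm~\ref{algo:membership} call at the end of Algorithm~\ref{algo:estimate_sequentially2} and the $\log\log n$ is absorbed into $\log(1/\eta)$). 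The query and time costs are bookkeeping: $r$ samples from each of $O(\log n)$ bins, each triggering one oracle call and one $O(d)$-vector accumulation, giving $O(r\log n)$ queries and $O(dr\log n)$ time.

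The main obstacle, and the reason this lemma cannot be inherited directly from Lemma~\ref{lem:inductive}, is engineering the cancellation that kills $\beta$: one must argue that the per-bin noise scales with the bin's own mass $\s{Y}_q$ rather than with an oblivious bound like $|\ca{X}_q|\s{R}$, so that the sum over bins collapses to a constant multiple of $\s{Y}$ and drops out of the ratio. That is exactly what the factor-two property of Algorithm~\ref{algo:membership2} buys; it is also precisely why the $\ca{P}_1$ tail (where the factor-two property fails) must be treated separately and paid for with the $1/n^3$ multiplicative slack.
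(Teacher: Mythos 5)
Your proof follows essentially the same route as the paper: decompose the ratio estimator over the data-adaptive dyadic bins from Algorithm~\ref{algo:membership2}, exploit the factor-two property to show the per-bin Hoeffding range $|\ca{X}_q|(\max_{j\in\ca{X}_q}\s{U}_{jt_2})^\alpha$ is at most $2^\alpha\s{Y}_q$, telescope $\sum_q\s{Y}_q\le\s{Y}$ against the denominator so that $\beta$ never appears, and treat the $\ca{P}_1$ tail as a negligible $O(\s{R}/n^{\Omega(1)})$ correction. The only discrepancy is cosmetic: you invoked the generalized Hoeffding bound in its standard $\sqrt{\log(1/\delta)/r}$ form, whereas the paper's Lemma~\ref{lem:gen_hoeffding} is stated so that the $\ell_2$ deviation comes out as $\s{R}\,(c\log(1/\delta)/r)^{1/4}$, which is exactly why the lemma's sample requirement is $r\gtrsim\s{R}^4\epsilon^{-4}\log(\cdot)$ rather than $\s{R}^2\epsilon^{-2}\log(\cdot)$; your version is a fortiori sufficient and you already noted this when relaxing to the stated bound.
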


\begin{proof}[Proof of Lemma~\ref{lem:smallest}]
First, note that $[n] = \cup_{q:\ca{X}_q \neq \phi} \ca{X}_q \cup \ca{P}_1 \cup \ca{P}_2$. Therefore,
\begin{align}
\f{\mu}_{t_{2}} &=  \frac{ \sum_{q: \ca{X}_q \neq \phi} \f{\lambda}_q+\sum_{i \in \ca{P}_2} \s{U}_{it_{2}}^{\alpha} \f{x}_i+\f{\rho}}{ \sum_{q: \ca{X}_q \neq \phi} \s{Y}_q+\sum_{i \in \ca{P}_2} \s{U}_{it_{2}}^{\alpha}+\s{B}},
\end{align}
where $\f{\lambda}_q\triangleq\sum_{i \in \ca{X}_q} \s{U}_{it_{2}}^{\alpha} \f{x}_i$, $\f{\rho}\triangleq\sum_{i \in \ca{P}_1} \s{U}_{it_{2}}^{\alpha} \f{x}_i$, $\s{Y}_q\triangleq\sum_{i \in \ca{X}_q} \s{U}_{it_{2}}^{\alpha}$, and $\s{B}\triangleq\sum_{i \in \ca{P}_1} \s{U}_{it_{2}}^{\alpha}$. Similarly, using \eqref{eq:secondmean}, we have
\begin{align}
\widehat{\f{\mu}}_{t_{2}} &= \frac{\sum_{q\in\calV:\ca{X}_q \neq \phi}\bar{\f{\lambda}}_q+\sum_{i \in \ca{P}_2} \s{U}_{it_{2}}^{\alpha}\f{x}_i+\bar{\f{\rho}}}{\sum_{q\in\calV:\ca{X}_q \neq \phi}\bar{\s{Y}}_q+\sum_{i \in \ca{P}_2} \s{U}_{it_{2}}^{\alpha}+\bar{\s{B}}},
\end{align}
where $\bar{\f{\lambda}}_q\triangleq\frac{|\ca{X}_q|}{r}\sum_{i \in \ca{Y}_q} \s{U}_{it_{2}}^{\alpha} \f{x}_i$, $\bar{\f{\rho}}\triangleq\sum_{i \in \ca{Q}} \frac{|\ca{P}_1|}{r} \s{U}_{it_{2}}^{\alpha} \f{x}_i$, $\bar{\s{Y}}_q\triangleq\frac{|\ca{X}_q|}{r}\sum_{i \in \ca{Y}_q} \s{U}_{it_{2}}^{\alpha}$, and $\bar{\s{B}}\triangleq\sum_{i \in \ca{Q}}\frac{|\ca{P}_1|}{r} \s{U}_{it_{2}}^{\alpha}$. Notice that for each $q\in\calV$, the random variable $\bar{\f{\lambda}}_q$ can be written as a sum of $r$ i.i.d. random variables $\bar{\f{\lambda}}_{q,i_p}\triangleq\left|\ca{X}_q \right | \s{U}_{i_pt_{2}}^{\alpha} \f{x}_{i_p}$, where $i_p$ is sampled uniformly over $[n]$. Similarly, $\bar{\s{Y}}_q$ written as a sum of $r$ i.i.d random variables  $\bar{\s{Y}}_{q,i_p}\triangleq\left|\ca{X}_{q} \right | \s{U}_{i_pt_{2}}^{\alpha}$, where again $i_p$ is sampled uniformly over $[n]$. Finally, both $\bar{\f{\rho}}$ and $\bar{\s{B}}$ can also be written as a sum of $r$ i.i.d random variables $\bar{\f{\rho}}_{i_p}\triangleq \left|\ca{P}_1 \right | \s{U}_{i_pt_{2}}^{\alpha} \f{x}_{i_p}$ and $\bar{\s{B}}_{i_p}\triangleq\left|\ca{P}_1 \right | \s{U}_{i_pt_{2}}^{\alpha}$, respectively, where $i_p$ is sampled uniformly over $[n]$. Thus, it is evident that $\bb{E} \bar{\f{\lambda}}_{q,i_p} = \sum_{i \in \ca{X}_q} \s{U}_{it_{2}}^{\alpha} \f{x}_i$ and $\bb{E} \bar{\s{Y}}_{q,i_p} = \sum_{i \in \ca{X}_q} \s{U}_{it_{2}}^{\alpha}$, for all $p\in[r]$. Similarly, $\bb{E} \bar{\f{\rho}}_{i_p} = \sum_{i \in \ca{P}_1} \s{U}_{it_{2}}^{\alpha} \f{x}_i$ and $\bb{E} \bar{\s{B}}_{i_p} = \sum_{i \in \ca{P}_1} \s{U}_{it_{2}}^{\alpha}$, for all $p\in[r]$. Next, we note that
\begin{align}
\widehat{\f{\mu}}_{t_{2}} &= \f{\mu}_{t_{2}} + \sum_{q\in\calV} \frac{\bar{\f{\lambda}}_q-\f{\lambda}_q}{\sum_{q\in\calV:\ca{X}_{q} \neq \phi} \s{Y}_{q}+\sum_{i \in \ca{P}_2} \s{U}_{it_{2}}^{\alpha}+\s{B}}+\frac{\bar{\f{\rho}}-\f{\rho}}{\sum_{q\in\calV:\ca{X}_{q} \neq \phi} \s{Y}_{q}+\sum_{i \in \ca{P}_2} \s{U}_{it_{2}}^{\alpha}+\s{B}} \nonumber\\
&\quad+\widehat{\f{\mu}}_{t_{2}}\cdot\frac{\sum_{q\in\calV:\ca{X}_{q} \neq \phi} \s{Y}_{q}-\sum_{q\in\calV:\ca{X}_{q} \neq \phi} \bar{\s{Y}}_{q}+\s{B}-\bar{\s{B}}}{\sum_{q\in\calV:\ca{X}_{q} \neq \phi} \s{Y}_{q}+\sum_{i \in \ca{P}_2} \s{U}_{it_{2}}^{\alpha}+\s{B}}.
\end{align}
Therefore, using the triangle inequality and the fact that $\norm{\widehat{\f{\mu}}_{t_2}}_2\leq\s{R}$, we get
\begin{align}
&\norm{\widehat{\f{\mu}}_{t_{2}}-\f{\mu}_{t_{2}}}_2 \le \sum_{q\in\calV} \norm{\frac{\bar{\f{\lambda}}_q-\f{\lambda}_q}{\sum_{q\in\calQ:\ca{X}_{q} \neq \phi} \s{Y}_{q}+\sum_{i \in \ca{P}_2} \s{U}_{it_{2}}^{\alpha}+\s{B}}}_2+\norm{\frac{\bar{\f{\rho}}-\f{\rho}}{\sum_{q\in\calV:\ca{X}_{q} \neq \phi} \s{Y}_{q}+\sum_{i \in \ca{P}_2} \s{U}_{it_{2}}^{\alpha}+\s{B}}}_2\nonumber\\
&\hspace{1cm}+\s{R}\sum_{q\in\calV}\norm{\frac{\bar{\s{Y}}_{q}-\bar{Y}_{q}}{\sum_{q\in\calV:\ca{X}_{q} \neq \phi} \s{Y}_{q}+\sum_{i \in \ca{P}_2} \s{U}_{it_{2}}^{\alpha}+\s{B}}}_2+\s{R}\norm{\frac{\bar{\s{B}}-\s{B}}{\sum_{q\in\calV:\ca{X}_{q} \neq \phi} \s{Y}_{q}+\sum_{i \in \ca{P}_2} \s{U}_{it_{2}}^{\alpha}+\s{B}}}_2.\label{eqn:normDevi}
\end{align}
Now, for any $i \in \ca{Y}_q$, we have by definition,
\begin{align}
\s{U}_{it_{2}}^{\alpha} \leq\p{2\s{U}_{\pi_{\widehat{\f{\mu}}_{t_1}}(p_{\eta_q})t_2}}^\alpha,
\end{align}
and for any $i \in \ca{P}_1$,
\begin{align}
\s{U}_{it_{2}}^{\alpha} \le \frac{\max_{j \in [n]}\s{U}_{jt_2}^{\alpha}}{n^3}.
\end{align}
Next, using Lemmas~\ref{lem:hoeffding} and \ref{lem:gen_hoeffding}, we have for all $q\in\calV$, with probability at least $1-\delta$, 
\begin{align}
&\norm{\bar{\f{\lambda}}_q-\f{\lambda}_q}_2 \le \s{R}|\ca{X}_q| (2\s{U}_{\pi_{\widehat{\f{\mu}}_{t_1}}(p_{\eta_q})t_2})^{\alpha}\p{\frac{c}{r}\log\frac{2}{\eta\delta}}^{1/4}, \\ 
&\abs{\bar{\s{Y}}_q-\s{Y}_q} \le |\ca{X}_q| (2\s{U}_{\pi_{\widehat{\f{\mu}}_{t_1}}(p_{\eta_q})t_2})^{\alpha} \sqrt{\frac{c}{2r}\log\frac{2}{\eta\delta}},\\
&\norm{\bar{\f{\rho}}-\f{\rho}}_2 \le  \frac{\s{R}|\ca{P}_1| \max_{j \in [n]}\s{U}_{jt_2}^{\alpha}}{n^3}\p{\frac{c}{r} \log\frac{2}{\eta\delta}}^{1/4},\\
&\abs{\bar{\s{B}}-\s{B}} \le  \frac{|\ca{P}_1| \max_{j \in [n]}\s{U}_{jt_2}^{\alpha}}{n^3}\sqrt{\frac{c}{r}\log\frac{2}{\eta\delta}},
\end{align}
for some $c>0$. Substituting the above results in \eqref{eqn:normDevi}, we get 
\begin{align}
\norm{\widehat{\f{\mu}}_{t_{2}}-\f{\mu}_{t_{2}}}_2 &\le 2\s{R}\p{\frac{c}{r}\log\frac{2}{\eta\delta}}^{1/4}\cdot\frac{ \sum_{q\in\calV:\ca{X}_q \neq \phi} |\ca{X}_q| (2\s{U}_{\pi_{\widehat{\f{\mu}}_{t_1}}(p_{\eta_q})t_1})^{\alpha} }{\sum_{q\in\calV:\ca{X}_{q} \neq \phi} \s{Y}_{q}+\sum_{i \in \ca{P}_2} \s{U}_{it_{2}}^{\alpha}+\s{B}} \nonumber\\
&\quad+2\s{R}\p{\frac{c}{r}\log\frac{2}{\eta\delta}}^{1/4}\cdot\frac{ \frac{|\ca{P}_1| \max_{j \in [n]}\s{U}_{jt_1}^{\alpha}}{n^3} }{\sum_{q\in\calV:\ca{X}_{q} \neq \phi} \s{Y}_{q}+\sum_{i \in \ca{P}_2} \s{U}_{it_{2}}^{\alpha}+\s{B}}.
\end{align}
Noting to the facts that 
\begin{align}
\sum_{q\in\calV:\ca{X}_q \neq \phi} \s{Y}_q = \sum_{i \in \ca{X}_q:\ca{X}_q \neq \phi} \s{U}_{it_2}^{\alpha}  \ge |\ca{X}_q|(\s{U}_{\pi_{\widehat{\f{\mu}}_{t_1}}(p_{\eta_q})t_1})^{\alpha}
\end{align}
and 
\begin{align}
\sum_{q\in\calV:\ca{X}_{q} \neq \phi} \s{Y}_{q}+\sum_{i \in \ca{P}_2} \s{U}_{it_{2}}^{\alpha}+\s{B} &= \sum_{i \in [n]} \s{U}_{it_2}^{\alpha}\sum_{i \in [n]} \s{U}_{it_1}^{\alpha}\ge \max_{i \in [n]} \s{U}_{it_1}^{\alpha}, 
\end{align} 
and finally that $|\ca{P}_1| \le n$, we obtain
\begin{align}
\norm{\widehat{\f{\mu}}_{t_{2}}-\f{\mu}_{t_{2}}}_2 & 
\le 2^{\alpha+1}\s{R}\p{\frac{c}{r}\log\frac{2}{\eta\delta}}^{1/4}.
\end{align}
Therefore, for any $\epsilon>0$, with $r\ge\frac{c\s{R}^4}{\epsilon^4}\log\frac{2}{\eta\delta}$, we obtain $\norm{\widehat{\f{\mu}}_{t_{2}}-\f{\mu}_{t_{2}}}_2\leq\epsilon$, which proves the lemma. 
\end{proof}

\subsubsection{Proof of Theorem~\ref{thm:3state}}
First, Corollary~\ref{coro:base1} implies that a query complexity of $O\p{\frac{\s{R}^4}{\epsilon^4}\log\frac{1}{\delta}}$, and time-complexity of $O\p{\frac{d\s{R}^4}{\epsilon^4}\log\frac{1}{\delta}}$, suffice to approximate the center of the first cluster $t_1$ with probability at least $1-\delta/2$. Then, Lemma~\ref{lem:membership2} allows us to estimate $\s{U}_{it_2}$ for all $i\in[n]$ using a query complexity of $O(\log^2 n)$ and time-complexity of $O(\log^2 n)$. Also, Lemma \ref{lem:smallest} shows that a query complexity of $O\p{\frac{\s{R}^4\log n}{\epsilon^4} \log\frac{1}{\eta\delta}}$, and time-complexity of $O\p{\frac{d\s{R}^4\log n}{\epsilon^4}\log\frac{1}{\eta\delta}}$, suffice to approximate $\widehat{\f{\mu}}_{t_2}$ up to an error of $\epsilon$. Finally, we can use Lemma~\ref{lem:membership} to approximate $\s{U}_{ij}$ up to an error of $\eta$ using query complexity of $O(\log n/\eta)$, and a time-complexity of $O(n\log n+\log n/\eta)$, for all $i\in [n]$ and $j\in\{1,2\}$.

\section{Experiments}\label{app:exper}

\begin{figure*}[tb]
  \begin{subfigure}[t]{0.32\textwidth}
    \centering 
    \includegraphics[scale = 0.3]{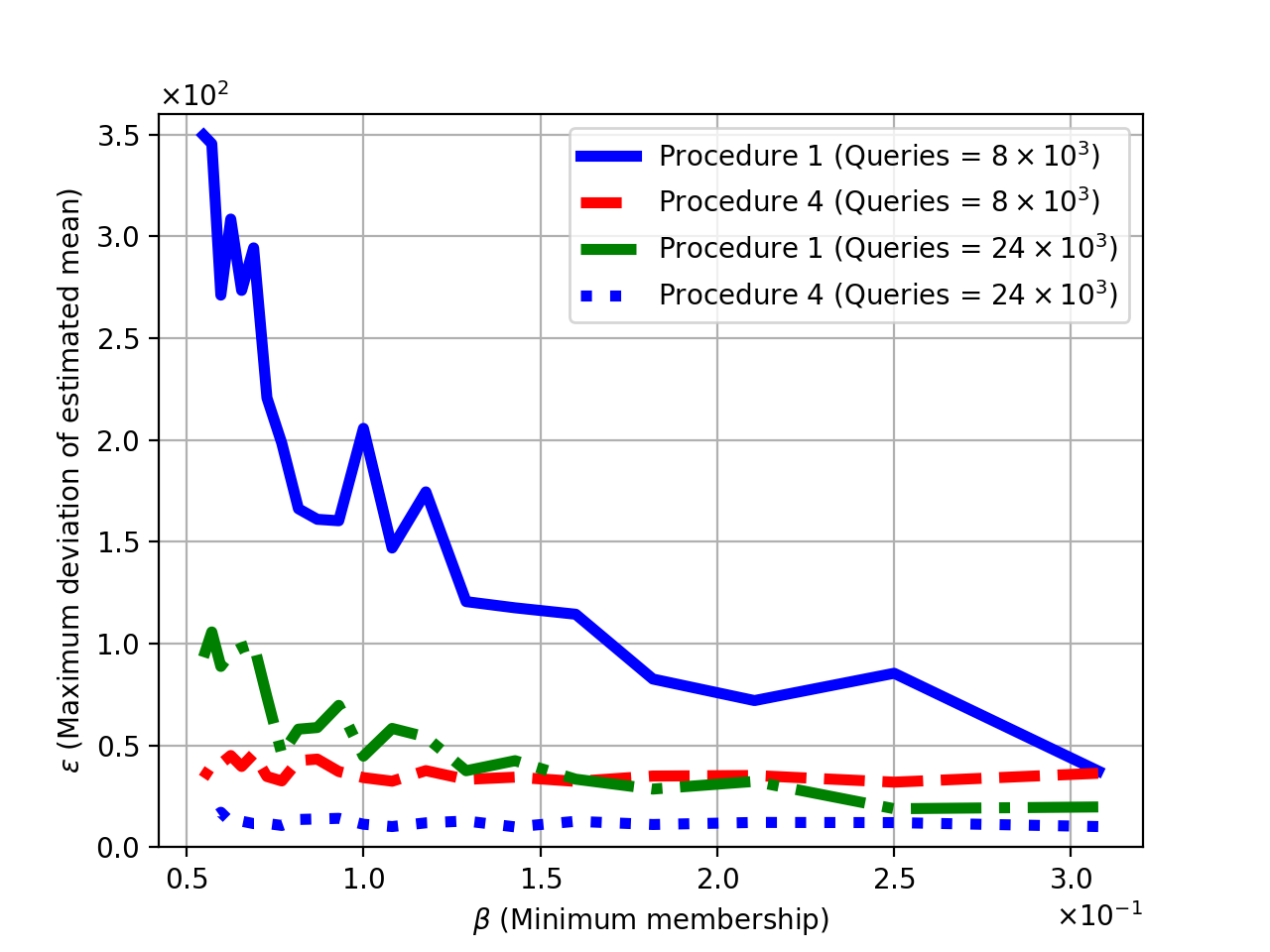}
    \caption{Comparison of two-phase algorithm and the sequential algorithm (see Algorithm \ref{algo:estimate_together} and \ref{algo:estimate_sequentially}. The error in recovery of means is plotted with varying $\beta$.}
          ~\label{fig:varybeta}
  \end{subfigure}
  \hfill
  \begin{subfigure}[t]{0.32\textwidth}
     \includegraphics[scale =  0.3]{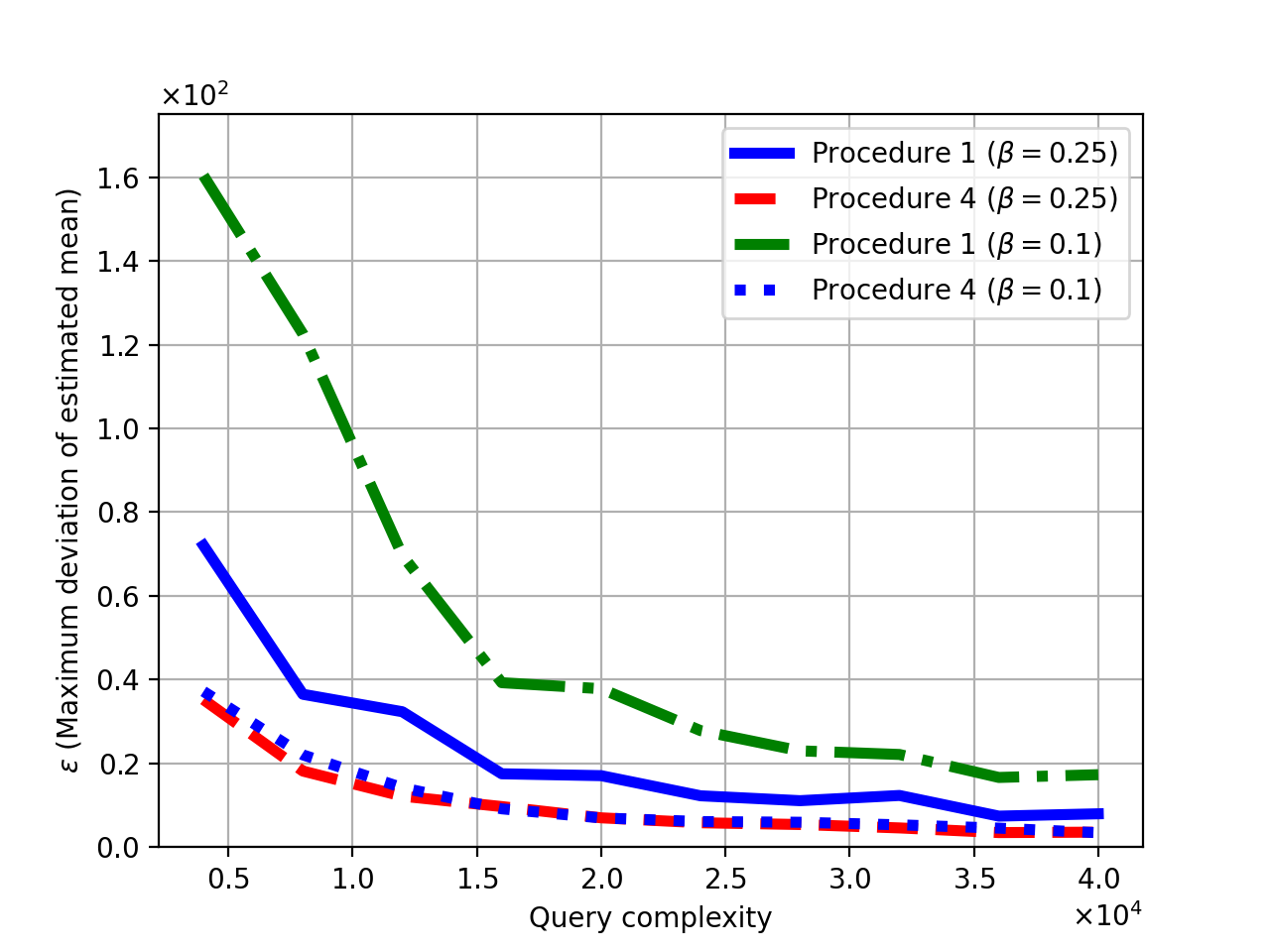}
     \caption{\small The error in recovery of means using the two-phase algorithm and sequential algorithm (see Algorithm \ref{algo:estimate_together} and \ref{algo:estimate_sequentially} with increasing queries keeping $\beta$ fixed.}
          ~\label{fig:varyquery}
  \end{subfigure}\hfill 
\begin{subfigure}[t]{0.32 \textwidth}
   \includegraphics[scale = 0.3]{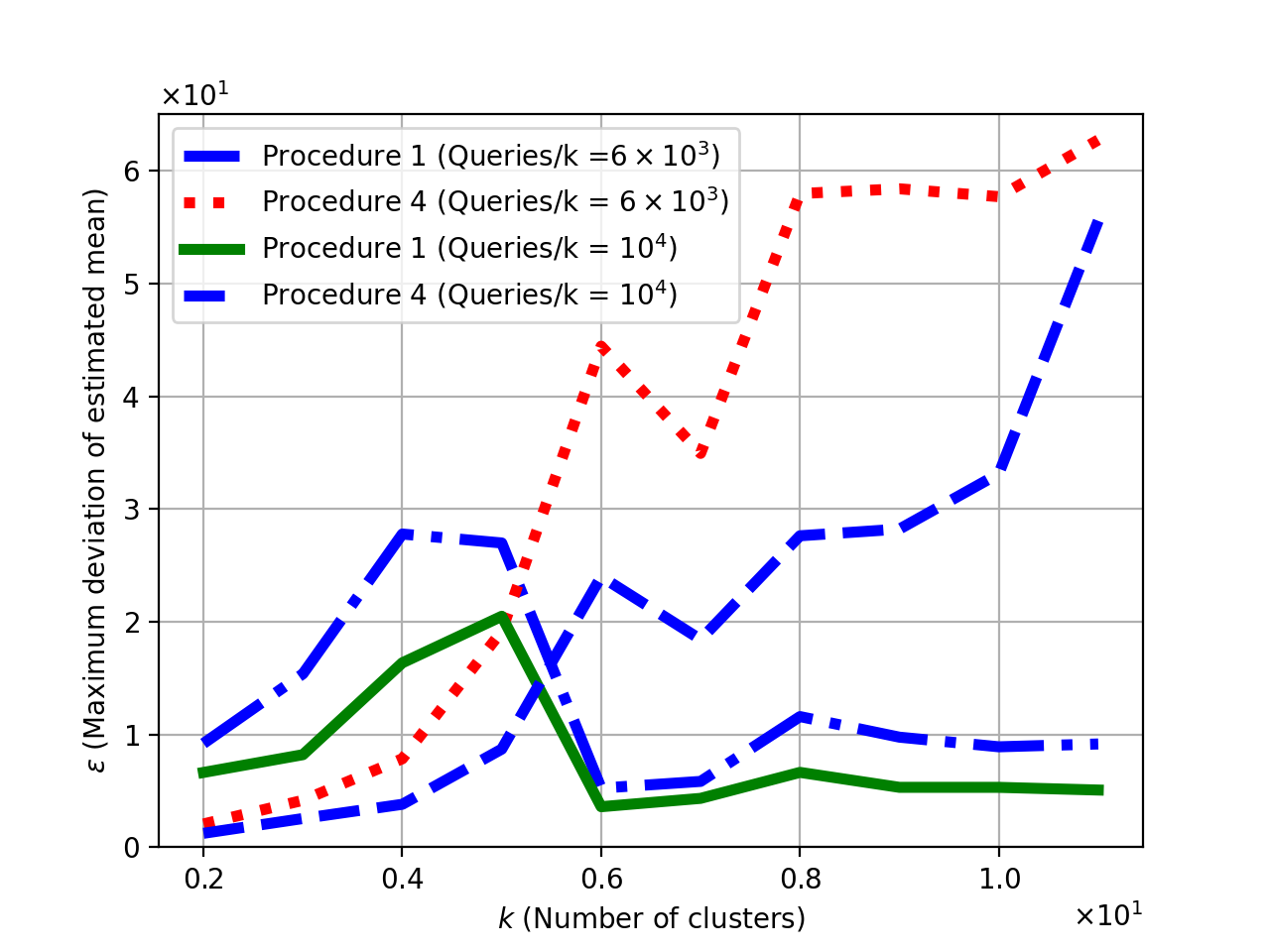}
   \caption{\small Comparison of the two-phase algorithm and the sequential algorithm (see Algorithm \ref{algo:estimate_together} and \ref{algo:estimate_sequentially}. The error in recovery of means is plotted with varying number of clusters ($k$).}
       ~\label{fig:varyk}
 \end{subfigure}%
  
\hfill

 \caption{\small Testing algorithms over synthetic datasets.\label{fig:1}}
\end{figure*}

\begin{figure*}[tb]
  \begin{subfigure}[t]{0.32\textwidth}
    \centering 
    \includegraphics[scale = 0.3]{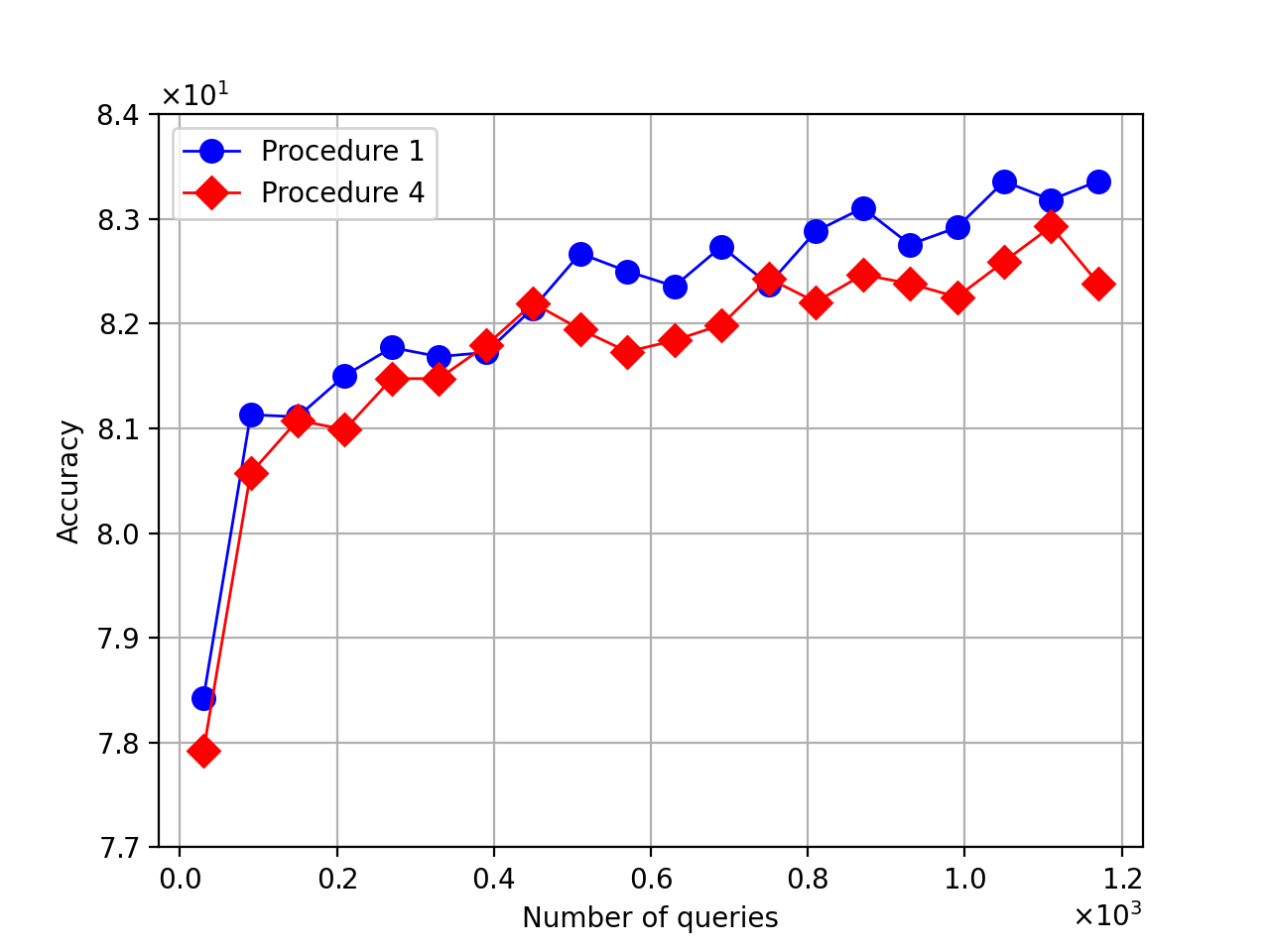}
    \caption{Iris}
          ~\label{fig:irispp}
  \end{subfigure}
  \hfill
  \begin{subfigure}[t]{0.32\textwidth}
     \includegraphics[scale =  0.3]{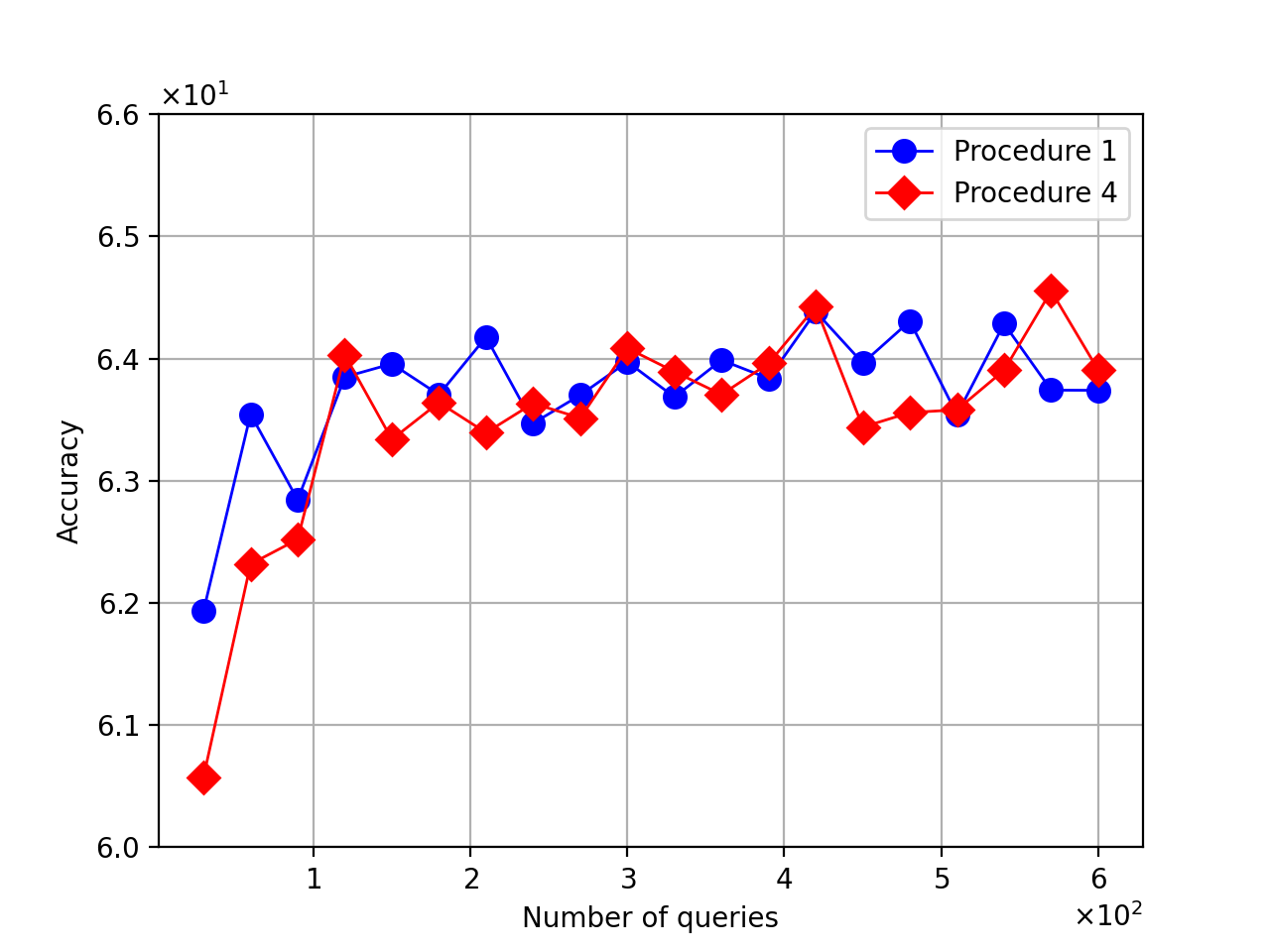}
     \caption{Wine}
          ~\label{fig:wineapp}
  \end{subfigure}\hfill 
\begin{subfigure}[t]{0.32 \textwidth}
   \includegraphics[scale = 0.3]{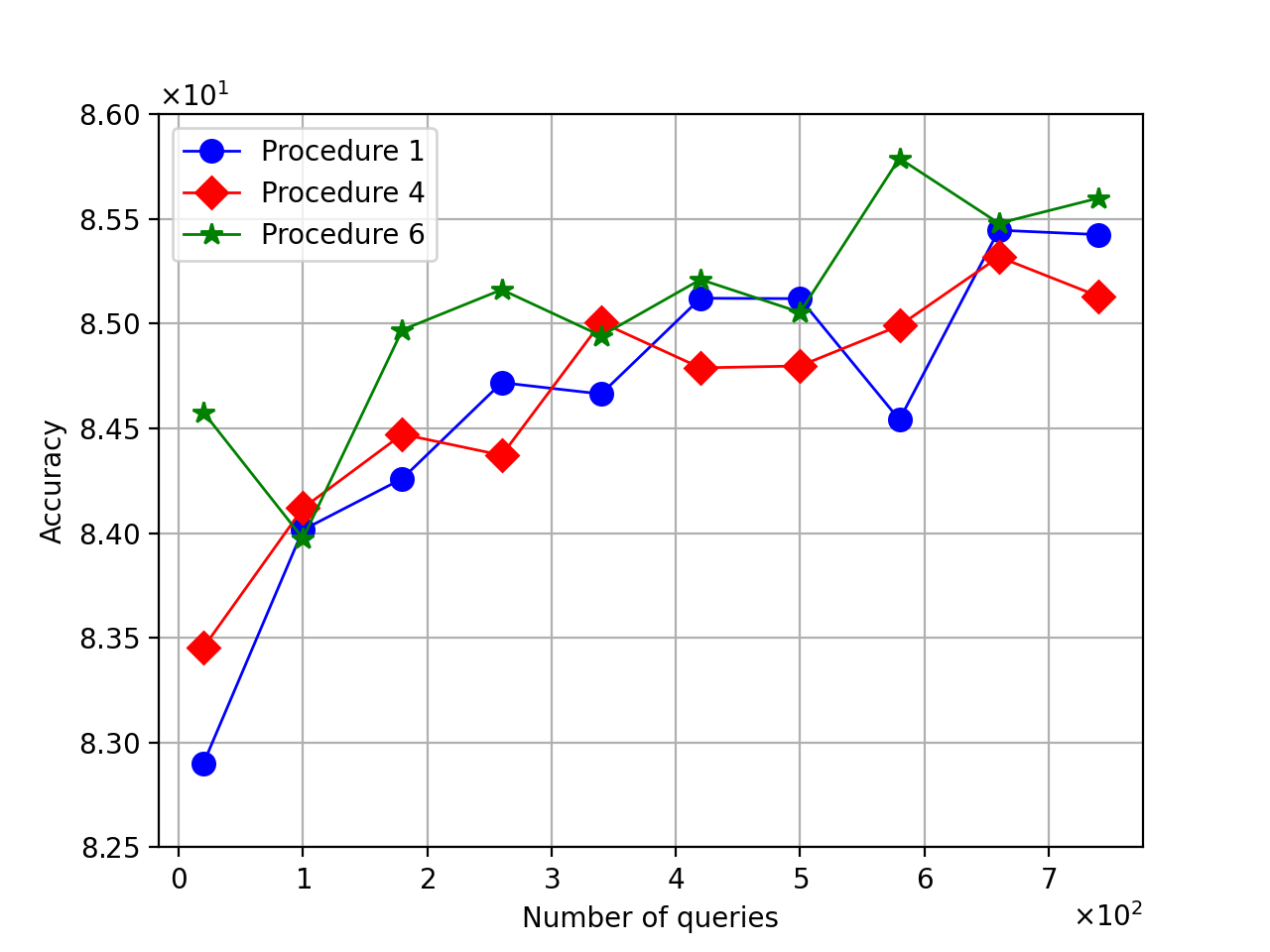}
   \caption{Breast Cancer}
       ~\label{fig:breast_cancerapp}
 \end{subfigure}%

\hfill

 \caption{\small Classification accuracy of algorithms for the Iris, Wine and Breast Cancer datasets.\label{fig:2app}}
\end{figure*}

\subsection{Synthetic Datasets} We conduct in-depth simulations of the proposed techniques over the following synthetic dataset. Specifically, we generate the dataset $\ca{X}$ by choosing $k=4$ centers with dimension $d=10$, such that $\f{\mu}_1$ is significantly separated from the other centers; the distance from each coordinate of $\f{\mu}_1$ to the coordinates of the other means is at least $1000$. Subsequently, for each $i \in \{1,2,3,4\}$ we randomly generate $\s{L}_i$ vectors from a spherical Gaussian distribution, with mean $\f{\mu}_i$, and a standard deviation of $20$ per coordinate. We then run the Fuzzy C-means algorithm\footnote{https://github.com/omadson/fuzzy-c-means}, and obtain a target solution $\ca{P}$ to be used by the oracle for responses. In order to understand the effect of $\beta$, we fix $\s{L}_1=5000$, and vary $\s{L}_2,\s{L}_3,\s{L}_4\in5000\cdot \zeta$, where $\zeta \in \{1,2,\dots,24\}$. It can be checked that $\beta =4/(1+3\zeta)$. We run Algorithms~\ref{algo:estimate_together} and \ref{algo:estimate_sequentially}. For the two-phase algorithm we take $\alpha=2$, $m=\nu$, and $\eta=0.1$, and $\alpha=2$, $m=\nu/2.5$, $\eta_1=0.1$, and $\eta_2=0.1$, for the sequential algorithm, where $\nu \in \{2000,6000\}$. Setting the parameters in this way keeps the same query complexity for both algorithms, so as to keep a fair comparison. We run each algorithm $20$ times. For each algorithm, we evaluate the maximal error in estimating the centers. The results are shown in Fig.~\ref{fig:1}. Specifically, Fig.~\ref{fig:varybeta} presents the estimation error as a function of $\beta$. It can be seen that for small $\beta$'s, the sequential algorithm is significantly better compared to the two-phase algorithm, whereas for larger $\beta$'s, they are comparable. Then, for $\beta=0.25,0.1$, Fig.~\ref{fig:varyquery} shows the estimation error as a function of the number of queries. Finally, to understand the effect of the number of clusters, we generate $k$ clusters using a similar method as above. We take $\s{L}_1=1000$, and $\s{L}_i=12000$, for all $2\leq i\leq k$. We vary $k \in \{2,3,\dots,11\}$. For the two-phase algorithm, we take $\alpha=2$, $m=\nu$, and $\eta=0.1$, and $\alpha=2$, $m=\nu/2.5$, $\eta_1=0.1$, and $\eta_2=0.1$, for the sequential algorithm, where $\nu=\{2000,6000\}$. Fig.~\ref{fig:varyk} shows the estimation error as a function of $k$. We can clearly observe that the two-phase algorithm performs significantly better as $k$ increases but the sequential algorithm works better for small $k$.

\subsection{Real-World Datasets} 

 In our experiments, we use three well-known real-world datasets available in scikit-learn \cite{scikit-learn}: the Iris dataset ($150$ elements, $4$ features, and $3$ classes), the Wine dataset ($178$ elements, $13$ features, and $3$ classes), and the Breast Cancer dataset ($569$ elements, $30$ features, and $2$ classes). For the Iris and Wine datasets, we run the two-phase and sequential algorithms. We take $\alpha=2$, $m=\nu$, and $\eta=0.1$, for the two-phase algorithm, and $\alpha=2$, $m=2\nu/3$, $r = m/\eta_1$, $\eta_1=0.1$, and $\eta_2=0.1$, for the sequential algorithm, where $\nu \in \{10,20,\dots,410\}$, keeping the same query complexity for both algorithms. These values do not necessarily satisfy what is needed by our theoretical results. We run both algorithms with each set of parameters $500$ times to account for the randomness. 
In our experiments, we use a hard cluster assignment as ground truth (or rather the target clustering $\ca{P}$ to be used by the oracle for responses), and use our algorithms to return a fuzzy assignment. We must point out over here that our fuzzy algorithms can be used to solve hard clustering problems as well and therefore, it is not unreasonable to have hard clusters as the target solution.

Subsequently, we estimate the membership weights for all elements, and for each element, we predict the class the element belongs to as the one to which the element has the highest membership weight (i.e., $\s{argmax}_j \widehat{\s{U}}_{ij}$, for element $i$)\footnote{This is similar to rounding in Linear Programming}. Once we have classified all the data-points using our algorithms, we can check the classification accuracy since we possess the ground-truth labels. Note that the ground truth labels can be inconsistent with the best clustering or $\ca{P}^{\star}$, the solution that minimizes the objective in the Fuzzy $k$-means problems (Defintion \ref{def:fuzzy_prim}) but we assume that the ``ground truth'' labels that are given by humans are a good proxy for the best clustering. 

We then plot the classification accuracy as a function of the number of queries. Fig.~\ref{fig:2app} shows the average classification accuracy for the above three data-sets by comparing the predicted classes and the ground truth. For the Breast Cancer dataset, since the number of clusters is two, we additionally compare the two-phase and sequential algorithms to Algorithm~\ref{algo:estimate_sequentially2} with $\alpha=2$, $m=2\nu/3$, $r = m/\eta_1$, and $\eta=0.1$. It turns out that for these real-world datasets, the performance of all algorithms are comparable. It can be seen that the accuracy increases as a function of the number of queries, as expected. \textbf{Further, by using the well-known Lloyd's style iterative Fuzzy C-means algorithm with random initialization~\cite{dias2019fuzzy}, we get an average classification accuracy (over 20 trials) of only $31.33\%$, $35.96\%$ and $14.58\%$ on the Iris, Wine, and Breast Cancer datasets, respectively}. This experiment shows that using a few membership queries increases the accuracy of a poly-time algorithm drastically, corroborating the results of our paper.

\begin{figure*}[t!]
\centering
\includegraphics[width=0.6\textwidth]{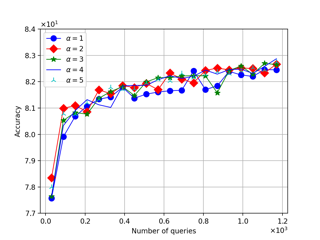}
\caption{Classification accuracy as a function of the fuzzifier $\alpha$ and the number of queries for the Iris dataset.}
\label{fig:compalpha}
\end{figure*}

\subsection{Accuracy as a Function of $\alpha$} As discussed in right after Theorem~\ref{thm:2}, the ``fuzzifier" $\alpha$ is not subject to an optimization. Nonetheless, if we assume the existence of a ground truth, we can compare the clustering accuracy  for different values of $\alpha$. Accordingly, in Fig.~\ref{fig:compalpha} we test the performance of our algorithms on the Iris dataset for a few values of $\alpha$. We calculate the average accuracy over $500$ trials for each set of parameters. We conclude this section by discussing the issue of comparing our semi-supervised fuzzy approach to the semi-supervised hard objective \cite{Ashtiani16}. Generally speaking, in the absence of a ground truth, comparing both approaches is \emph{meaningless}. When the ground truth represents a disjoint clustering, then it is reasonable that following a hard approach (essentially $\alpha=1$) will capture this ground truth better. However, the whole point of using fuzzy clustering in the first place is when the clusters, in some sense, overlap. Indeed, the initial main motivation for studying fuzzy clustering is that it is applicable to datasets where datapoints show affinity to multiple labels, the clustering criteria are vague and data features are unavailable. Nonetheless, in Fig.~\ref{fig:compalpha} we compare the performance of both the fuzzy and hard approaches (essentially $\alpha=1$) on the Iris dataset.

\section{Conclusion and Outlook}\label{sec:conc}
In this paper, we studied the fuzzy $k$-means problem, and proposed a semi-supervised active clustering framework, where the learner is allowed to interact with a membership-oracle, asking for the memberships of a certain set of chosen items. We studied both the query and computational complexities of clustering in this framework. In particular, we provided two probabilistic algorithms (two-phase and sequential) for fuzzy clustering that ask $O(\mathsf{poly}(k)\log n)$ membership queries and run with polynomial-time-complexity. The main difference between these two algorithms is the dependency of their query complexities on the size of the smallest cluster $\beta$. The sequential algorithm exhibits more graceful dependency on $\beta$. Finally, for $k=2$ we were able to remove completely the dependency on $\beta$ (see, Appendix~\ref{app:algoTh3}). We hope our work has opened more doors than it closes. Apart from tightening the obtained query complexities, there are several exciting directions for future work:
\begin{itemize}
\item It is important to understand completely the dependency of the query complexity on $\beta$. Indeed, we showed that for $k=2$ there exists an algorithm whose query complexity is independent of $\beta$, but what happens for $k>2$?
\item It would be interesting to understand to what extent the algorithms and analysis in this paper, can be applied to other clustering problems which depend on different metrics other than the Euclidean one. 
\item Our paper presents upper bounds (sufficient conditions) on the query complexity. It is interesting and challenging to derive algorithm-independent information-theoretic lower bounds on the query complexity.
\item As mentioned in the introduction it is not known yet whether the fuzzy $k$-means problem lies in NP like the hard $k$-means problem. Answering this question will give a solid motivation to the semi-supervised setting considered in this paper. Furthermore, just as the information-theoretic lower bounds, it would be interesting to derive computational lower bounds as well.
\item In this paper we focused on the simplest form of oracle responses. However, there are many other interesting and important settings, e.g., the noisy setting (Appendix~\ref{app:NoisyOracle}). Another interesting problem would be to consider adversarial oracles who intentionally provide corrupted responses. 
\end{itemize}

\paragraph{Acknowledgements.} This work is supported in part by NSF awards 2133484, 2127929, and 1934846.
\bibliographystyle{abbrv}

\newpage
\appendix
\appendixpage

\section{Proof of Lemma~\ref{lem:1}}\label{app:XBmeasure}

To prove Lemma~\ref{lem:1} we start by proving a few inequalities. Since $\calA$ is an $(\epsilon_1,\epsilon_2,\s{Q})$-solver, using Definition~\ref{def:query} and Taylor's expansion, we get for any $i\in[n]$ and $j\in[k]$,
\begin{align}
\widehat{\s{U}}_{ij}^\alpha &\leq (\s{U}_{ij}+\epsilon_2)^\alpha\\
& = \s{U}_{ij}^\alpha+\alpha\s{U}_{ij}^{\alpha-1}\epsilon_2+o(\epsilon_2^2)\\
&\leq \s{U}_{ij}^\alpha+\alpha\epsilon_2+o(\epsilon_2^2),
\end{align}
where the last inequality follows from the fact that $\s{U}_{ij}\leq1$. Also, for any $i\in[n]$ and $j\in[k]$, we have
\begin{align}
\norm{\f{x}_i-\widehat{\f{\mu}}_j}_2^2 &= \norm{\f{x}_i-\f{\mu}_j}_2^2+2\cdot(\f{x}_i-\f{\mu}_j)^T(\f{\mu}_j-\widehat{\f{\mu}}_j)+\norm{\f{\mu}_j-\widehat{\f{\mu}}_j}_2^2\\
&\leq \norm{\f{x}_i-\f{\mu}_j}_2^2+4\cdot\s{R}\cdot\epsilon_1+\epsilon_1^2,\label{eqn:upperBound2norm}
\end{align}
where the inequality follows from the fact that $\calA$ is an $(\epsilon_1,\epsilon_2,\s{Q})$-solver, the Cauchy-Schwarz inequality, and the fact that since $\f{x}_i\in\calB(0,\s{R})$ so as $\bf{\mu}_j\in\calB(0,\s{R})$, and thus $\norm{\f{x}_i-\f{\mu}_j}_2^2\leq 4\cdot\s{R}^2$. Finally, we have for any $i\neq j\in[k]$,
\begin{align}
\norm{\widehat{\f{\mu}}_i-\widehat{\f{\mu}}_j}_2^2&\geq \norm{\f{\mu}_i-\f{\mu}_j}_2^2+\norm{\widehat{\f{\mu}}_i-\f{\mu}_i}_2^2+\norm{\widehat{\f{\mu}}_j-\f{\mu}_j}_2^2+2(\f{\mu}_i-\f{\mu}_j)^T(\widehat{\f{\mu}}_i-\f{\mu}_i)\nonumber\\
&+2(\f{\mu}_i-\f{\mu}_j)^T(\widehat{\f{\mu}}_j-\f{\mu}_j)+2(\widehat{\f{\mu}}_i-\f{\mu}_i)^T(\widehat{\f{\mu}}_j-\f{\mu}_j)\\
&\geq \norm{\f{\mu}_i-\f{\mu}_j}_2^2-2\epsilon_1^2-8\s{R}\epsilon_1-2\epsilon_1^2\\
& = \norm{\f{\mu}_i-\f{\mu}_j}_2^2-8\s{R}\epsilon_1-4\epsilon_1^2.\label{eqn:lowerBoundOnMini}
\end{align}
Now, with the above results, we note that
\begin{align}
\mathsf{J_{fm}}(\calX,\widehat{\calP}) &= \sum_{i=1}^n\sum_{j=1}^k\widehat{\s{U}}^\alpha_{ij}\norm{\f{x}_i-\widehat{\f{\mu}}_j}^2_2\\
&\leq \sum_{i=1}^n\sum_{j=1}^k\s{U}^\alpha_{ij}\norm{\f{x}_i-\widehat{\f{\mu}}_j}^2_2+\alpha\epsilon_2\sum_{i=1}^n\sum_{j=1}^k\norm{\f{x}_i-\widehat{\f{\mu}}_j}^2_2+o(\epsilon_2^2).
\end{align}
Now since $\f{\mu}_j\in\calB(0,\s{R})$, we can say that $\widehat{\f{\mu}}_j\in\calB(0,\s{R}+\epsilon_1)$. Therefore, $\norm{\f{x}_i-\widehat{\f{\mu}}_j}^2_2\leq 2[\s{R^2}+(\s{R}+\epsilon_1)^2]$. Hence,
\begin{align}
\mathsf{J_{fm}}(\calX,\widehat{\calP}) &\leq \sum_{i=1}^n\sum_{j=1}^k\s{U}^\alpha_{ij}\norm{\f{x}_i-\widehat{\f{\mu}}_j}^2_2+2nk\alpha\epsilon_2[\s{R^2}+(\s{R}+\epsilon_1)^2]+o(\epsilon_2^2).
\end{align}
Next, using \eqref{eqn:upperBound2norm}, we get
\begin{align}
\sum_{i=1}^n\sum_{j=1}^k\s{U}^\alpha_{ij}\norm{\f{x}_i-\widehat{\f{\mu}}_j}^2&\leq \sum_{i=1}^n\sum_{j=1}^k\s{U}^\alpha_{ij}\norm{\f{x}_i-\f{\mu}_j}_2^2+\pp{4\cdot\s{R}\cdot\epsilon_1+\epsilon_1^2}\sum_{i=1}^n\sum_{j=1}^k\s{U}^\alpha_{ij}\\
&\leq \sum_{i=1}^n\sum_{j=1}^k\s{U}^\alpha_{ij}\norm{\f{x}_i-\f{\mu}_j}_2^2+n\pp{4\cdot\s{R}\cdot\epsilon_1+\epsilon_1^2}\\
& = \mathsf{J_{fm}}(\calX,\calP)+n\pp{4\cdot\s{R}\cdot\epsilon_1+\epsilon_1^2},
\end{align}
where the second inequality follows from the fact that $\s{U}_{ij}\in[0,1]$, and thus $\sum_{i=1}^n\sum_{j=1}^k\s{U}^\alpha_{ij}\leq \sum_{i=1}^n\sum_{j=1}^k\s{U}_{ij} = n$. Therefore, we obtain
\begin{align}
\mathsf{J_{fm}}(\calX,\widehat{\calP})&\leq  \mathsf{J_{fm}}(\calX,\calP)+n\pp{4\cdot\s{R}\cdot\epsilon_1+\epsilon_1^2}+2nk\alpha\epsilon_2[\s{R^2}+(\s{R}+\epsilon_1)^2]+o(\epsilon_2^2)\\
& \leq \mathsf{J_{fm}}(\calX,\calP)+n\cdot O(\epsilon_1)+nk\cdot O(\epsilon_2)+n\cdot o(\epsilon_1^2)+nk\cdot o(\epsilon_2^2).\label{eqn:JfmEst}
\end{align}
We are now in a position to bound $\s{XB}(\calX,\widehat{\calP})$. Using \eqref{eqn:lowerBoundOnMini} and \eqref{eqn:JfmEst}, we have
\begin{align}
\mathsf{XB}(\calX,\widehat{\calP})&= \frac{\mathsf{J_{fm}}(\calX,\widehat{\calP})}{nk\cdot\min_{i\neq j}\norm{\widehat{\f{\mu}}_i-\widehat{\f{\mu}}_j}_2^2}\\
&\leq \frac{\mathsf{J_{fm}}(\calX,\calP)+n\cdot O(\epsilon_1)+nk\cdot O(\epsilon_2)+n\cdot o(\epsilon_1^2)+nk\cdot o(\epsilon_2^2)}{nk\cdot\pp{\min_{i\neq j}\norm{\f{\mu}_i-\f{\mu}_j}_2^2-8\s{R}\epsilon_1-4\epsilon_1^2}}\\
& = \frac{\mathsf{J_{fm}}(\calX,\calP)}{nk\cdot\min_{i\neq j}\norm{\f{\mu}_i-\f{\mu}_j}_2^2}+\frac{n\cdot O(\epsilon_1)+nk\cdot O(\epsilon_2)+n\cdot o(\epsilon_1^2)+nk\cdot o(\epsilon_2^2)}{nk\cdot\min_{i\neq j}\norm{\f{\mu}_i-\f{\mu}_j}_2^2}\nonumber\\
&+\frac{\mathsf{J_{fm}}(\calX,\calP)}{nk\cdot\min_{i\neq j}\norm{\f{\mu}_i-\f{\mu}_j}_2^2}\frac{O(\epsilon_1)}{\min_{i\neq j}\norm{\f{\mu}_i-\f{\mu}_j}_2^2}+o(\epsilon_1^2+\epsilon_2^2)\\
& = \mathsf{XB}(\calX,\calP)+\mathsf{XB}(\calX,\calP)\cdot\frac{O(\epsilon_1)}{\min_{i\neq j}\norm{\f{\mu}_i-\f{\mu}_j}_2^2}+\frac{O(\epsilon_2)}{\min_{i\neq j}\norm{\f{\mu}_i-\f{\mu}_j}_2^2}\nonumber\\
&\ \ +o\p{\frac{\epsilon_1^2+\epsilon_2^2}{\min_{i\neq j}\norm{\f{\mu}_i-\f{\mu}_j}_2^2}}.
\end{align}
Using the same steps a similar lower bound can be obtained, which concludes the proof.

\section{Auxiliary Lemmata}

In this section we present and prove a few auxiliary results which will be used in the proofs our main results. We start with the following standard concentration inequalities.
\begin{lemma}[Hoeffding's inequality]{\label{lem:hoeffding}}
Let $\s{X}_1,\s{X}_2,\dots,\s{X}_n$ be i.i.d random variables, such that $|\s{X}_i| \le \s{R}$ a.s., and $\bb{E}\s{X}_i =\mu$, for all $i \in [n]$. Then, with probability at least $1-\delta$,
\begin{align}
\left|\frac{1}{n} \sum_{i=1}^n \s{X}_i -\mu \right| \le \s{R}\epsilon,
\end{align}
if $n \ge \frac{c\log (1/\delta)}{2\epsilon^2}$, where $c>0$ is some absolute constant.
\end{lemma}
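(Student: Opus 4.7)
This is the classical two-sided Hoeffding inequality for bounded i.i.d.\ random variables, and the plan is to follow the standard Chernoff-type proof. First, I would center the variables by setting $\s{Y}_i \triangleq \s{X}_i - \mu$, so that $\bb{E}\s{Y}_i = 0$ and $|\s{Y}_i|\leq 2\s{R}$ almost surely. Let $\s{S}_n \triangleq \sum_{i=1}^n \s{Y}_i$. The goal is then to control $\pr(|\s{S}_n|\geq n\s{R}\epsilon)$.

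The key technical ingredient is Hoeffding's lemma, which states that for any zero-mean random variable $\s{Y}$ with $|\s{Y}|\leq b$, the moment generating function satisfies $\bb{E}[e^{t\s{Y}}]\leq e^{t^2 b^2/2}$ for every $t\in\reals$. Applying Markov's inequality to $e^{t\s{S}_n}$, using independence, and plugging in $b=2\s{R}$, I would obtain
\begin{align}
\pr(\s{S}_n \geq n\s{R}\epsilon) \leq e^{-tn\s{R}\epsilon}\prod_{i=1}^n \bb{E}[e^{t\s{Y}_i}] \leq \exp\p{-tn\s{R}\epsilon + 2nt^2\s{R}^2}.
\end{align}
Minimizing the right-hand side in $t$ (choosing $t = \epsilon/(4\s{R})$) yields $\pr(\s{S}_n\geq n\s{R}\epsilon)\leq \exp(-n\epsilon^2/8)$. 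The same bound applies to $\pr(\s{S}_n\leq -n\s{R}\epsilon)$ by symmetry (or by repeating the argument with $-\s{Y}_i$), and a union bound gives $\pr(|\s{S}_n|\geq n\s{R}\epsilon)\leq 2\exp(-n\epsilon^2/8)$.

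Finally, I would invert this tail bound: requiring $2\exp(-n\epsilon^2/8)\leq \delta$ is equivalent to $n \geq (8/\epsilon^2)\log(2/\delta)$, which has exactly the form $n\geq c\log(1/\delta)/(2\epsilon^2)$ for a suitable absolute constant $c>0$ (absorbing the factor of $2$ inside the logarithm into $c$). Since this is a classical and purely mechanical derivation, the ``main obstacle'' is essentially only invoking Hoeffding's lemma correctly; once that MGF bound is in hand, all remaining steps are a straightforward Chernoff optimization and inversion.
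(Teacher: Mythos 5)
Your proposal is correct and is the standard proof of Hoeffding's inequality via the MGF/Chernoff route. The paper itself does not prove this lemma — it states it (in the auxiliary-lemmata appendix) as a classical concentration result and simply invokes it — so there is no alternative argument in the paper to compare against. One minor point: the final step of ``absorbing'' the factor of $2$ inside $\log(2/\delta)$ into $c$ implicitly uses that $\delta$ is bounded away from $1$ (e.g., $\delta\leq 1/2$ gives $\log(2/\delta)\leq 2\log(1/\delta)$, so $c=32$ works); this is the standard convention and harmless here, but worth being aware that the stated threshold cannot hold with a $\delta$-independent $c$ as $\delta\to 1^-$.
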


\begin{lemma}[Generalized Hoeffding's inequality]{\label{lem:gen_hoeffding}}
Let $\mathbf{X}_1,\mathbf{X}_2,\dots,\mathbf{X}_n$ be i.i.d random vectors, such that $\norm{\mathbf{X}_i}_2 \le \s{R}$ a.s., and $\bb{E}\mathbf{X}_i =\f{\mu}$, for all $i \in [n]$. Then, with probability at least $1-\delta$,
\begin{align}
\norm{\frac{1}{n} \sum_{i=1}^n \mathbf{X}_i -\f{\mu} }_2^2 \le \s{R}^2 \epsilon,
\end{align}
if $n \ge \frac{c \log (1/\delta)}{\epsilon^2}$, where $c>0$ is some absolute constant.
\end{lemma}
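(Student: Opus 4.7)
The plan is to bound the random variable $f(\mathbf{X}_1,\ldots,\mathbf{X}_n) \triangleq \bigl\|\tfrac{1}{n}\sum_{i=1}^n \mathbf{X}_i - \f{\mu}\bigr\|_2$ by controlling its expectation and its fluctuations around the expectation separately, and then squaring at the end. The approach is standard and rests on two classical tools: (i) a moment/variance calculation to control $\mathbb{E}[f]$, and (ii) McDiarmid's bounded differences inequality to control the deviation $f - \mathbb{E}[f]$.

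First I would bound the expectation. Note that $\mathbf{X}_i - \f{\mu}$ are i.i.d. centered vectors with $\|\mathbf{X}_i - \f{\mu}\|_2 \leq 2\s{R}$ almost surely. By independence and Jensen's inequality,
\begin{align*}
\bigl(\mathbb{E}[f]\bigr)^2 \;\leq\; \mathbb{E}\!\left[\left\|\tfrac{1}{n}\sum_{i=1}^n (\mathbf{X}_i-\f{\mu})\right\|_2^2\right] \;=\; \tfrac{1}{n}\,\mathbb{E}\|\mathbf{X}_1-\f{\mu}\|_2^2 \;\leq\; \tfrac{4\s{R}^2}{n},
\end{align*}
where the second equality uses that cross terms vanish by independence and centering. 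Hence $\mathbb{E}[f]\leq 2\s{R}/\sqrt{n}$.

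Next I would apply McDiarmid's inequality. Replacing a single $\mathbf{X}_i$ with an independent copy $\mathbf{X}_i'$ changes the empirical mean by at most $\|\mathbf{X}_i-\mathbf{X}_i'\|_2/n \leq 2\s{R}/n$, so by the reverse triangle inequality $f$ has bounded differences $c_i \leq 2\s{R}/n$. McDiarmid then gives, for any $t>0$,
\begin{align*}
\mathbb{P}\bigl(f \geq \mathbb{E}[f] + t\bigr) \;\leq\; \exp\!\left(-\tfrac{2t^2}{\sum_{i=1}^n c_i^2}\right) \;\leq\; \exp\!\left(-\tfrac{n t^2}{2\s{R}^2}\right).
\end{align*}
Setting the right-hand side equal to $\delta$ and solving yields $t = \s{R}\sqrt{2\log(1/\delta)/n}$, so with probability at least $1-\delta$,
\begin{align*}
f \;\leq\; \tfrac{2\s{R}}{\sqrt{n}} + \s{R}\sqrt{\tfrac{2\log(1/\delta)}{n}}.
\end{align*}

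Finally, squaring and using $(a+b)^2\leq 2a^2+2b^2$ gives $f^2 \leq \tfrac{8\s{R}^2}{n} + \tfrac{4\s{R}^2\log(1/\delta)}{n}$. For $n \geq c\log(1/\delta)/\epsilon^2$ with $c$ a sufficiently large absolute constant (and assuming without loss of generality $\log(1/\delta)\geq 1$, else the tail bound is vacuous), both terms are at most $\s{R}^2\epsilon/2$, yielding $f^2 \leq \s{R}^2\epsilon$. I do not anticipate a real obstacle here; the only mild subtlety is handling the additive $2\s{R}/\sqrt{n}$ term from the expectation, which is absorbed cleanly by enlarging $c$.
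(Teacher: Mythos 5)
The paper states Lemma~\ref{lem:gen_hoeffding} in its auxiliary-lemmata appendix as a standard concentration inequality and gives no proof, so there is nothing internal to compare against; your argument is the canonical way to establish it and it is correct. The two-step structure --- bounding $\bb{E}[f]\le 2\s{R}/\sqrt{n}$ via the vanishing cross terms, then applying McDiarmid with bounded differences $2\s{R}/n$ to get the $\exp(-nt^2/2\s{R}^2)$ tail --- checks out, and the final absorption of both the $8\s{R}^2/n$ and $4\s{R}^2\log(1/\delta)/n$ terms into $\s{R}^2\epsilon$ works for a large enough absolute constant $c$ in the regime $\epsilon\le 1$, $\log(1/\delta)\gtrsim 1$, which is the only regime in which the lemma is ever invoked in the paper; you correctly flag this mild caveat yourself.
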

The following locality lemma states that the fuzzy $k$-means function is strictly increasing.
\begin{lemma}\label{lem:fuzyy_prop}
Let $(\calX,\calP^\star)$ be a clustering instance, where $\calP^\star$ refers to the optimal solution for the fuzzy $k$-mean problem (namely, minimizes the objective in \eqref{eqn:FuzzyObjective}). Then, for any $i,j\in[n]$ and $\ell\in[k]$ with $   \norm{\f{x}_i-\f{\mu}^\star_{\ell}}_2^2\leq\norm{\f{x}_j-\f{\mu}^\star_{\ell}}_2^2$, we have $\s{U}_{i\ell}\geq\s{U}_{j\ell}$.
\end{lemma}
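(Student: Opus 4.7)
The plan is to invoke the first-order optimality condition for the membership weights. Holding $\bmu^\star$ fixed, for each $i$ the row $\s{U}^\star_{i,\cdot}$ solves the strictly convex program $\min_{\s{U}_i}\sum_m \s{U}_{im}^\alpha d_{im}^2$ subject to $\sum_m \s{U}_{im}=1$, where $d_{im}\triangleq\norm{\f{x}_i-\f{\mu}^\star_m}_2$. A Lagrange multiplier calculation (already written as equation \eqref{eqn:U_solve}) gives the closed form
\[
\s{U}^\star_{i\ell} \;=\; \frac{d_{i\ell}^{-\beta}}{\sum_{m=1}^{k} d_{im}^{-\beta}}, \qquad \beta \triangleq \frac{2}{\alpha-1}.
\]
Inverting and peeling off the $m=\ell$ term, the desired monotonicity $\s{U}^\star_{i\ell}\ge\s{U}^\star_{j\ell}$ is equivalent to
\[
\sum_{m\ne\ell}\p{d_{i\ell}/d_{im}}^{\beta} \;\le\; \sum_{m\ne\ell}\p{d_{j\ell}/d_{jm}}^{\beta}.
\]

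The plan is to establish this by contradiction via a swap perturbation of $\s{U}^\star$. Suppose $d_{i\ell}\le d_{j\ell}$ yet $\s{U}^\star_{i\ell}<\s{U}^\star_{j\ell}$. Since $\sum_m \s{U}^\star_{im}=\sum_m \s{U}^\star_{jm}=1$, there must exist some $m^\star\ne\ell$ with $\s{U}^\star_{im^\star}>\s{U}^\star_{jm^\star}$. Form $\tilde{\s{U}}$ from $\s{U}^\star$ by exchanging the four entries at positions $(i,\ell),(j,\ell),(i,m^\star),(j,m^\star)$; the row-sum (simplex) constraints are preserved. The change in the fuzzy objective is
\[
\Delta \;=\; \big[(\s{U}^\star_{j\ell})^\alpha-(\s{U}^\star_{i\ell})^\alpha\big]\big(d_{i\ell}^{2}-d_{j\ell}^{2}\big)\;+\;\big[(\s{U}^\star_{im^\star})^\alpha-(\s{U}^\star_{jm^\star})^\alpha\big]\big(d_{jm^\star}^{2}-d_{im^\star}^{2}\big).
\]
Under the contradiction hypothesis the first term is non-positive. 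If $m^\star$ can additionally be chosen so that $d_{jm^\star}\le d_{im^\star}$, the second term is non-positive as well, and strictness in at least one of these yields $\Delta<0$, contradicting optimality of $\calP^\star$.

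The main obstacle is precisely to guarantee that a swap coordinate $m^\star$ compatible in both $\s{U}$-order and $d$-order exists. Here the second optimality condition \eqref{eqn:mu_solve} is essential: it forces each $\f{\mu}^\star_m$ to be the $(\s{U}^\star_{\cdot m})^\alpha$-weighted centroid of the data, which couples the distance profile $(d_{im})_{m}$ to the membership profile $(\s{U}^\star_{im})_{m}$. My plan is to combine this coupling with a pigeonhole argument over the $k-1$ candidate coordinates; an equivalent route is to derive a Chebyshev-type rearrangement inequality between the sorted sequences $(d_{im}^{-\beta})_m$ and $(d_{jm}^{-\beta})_m$ evaluated at the optimum, as worked out in \cite{Kathrin17}. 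Boundary cases with ties (e.g.\ $d_{i\ell}=d_{j\ell}$, or swaps that produce $\Delta=0$ rather than $\Delta<0$) are handled by the symmetry $i\leftrightarrow j$ applied to the same argument, which forces the corresponding memberships to coincide and hence still satisfy the claimed inequality.
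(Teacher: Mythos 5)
Your route is quite different from the paper's. The paper gives a short column-local swap argument: it asserts that at $\calP^\star$ the column $\{\s{U}^\star_{i\ell}\}_{i}$ minimizes $\sum_{i}\s{U}_{i\ell}^\alpha\norm{\f{x}_i-\f{\mu}^\star_\ell}_2^2$, so exchanging the two entries $\s{U}^\star_{i\ell},\s{U}^\star_{j\ell}$ cannot decrease this cost, which factors as $(\s{U}_{i\ell}^\alpha-\s{U}_{j\ell}^\alpha)\big(\norm{\f{x}_i-\f{\mu}^\star_\ell}_2^2-\norm{\f{x}_j-\f{\mu}^\star_\ell}_2^2\big)\le 0$ and the conclusion follows. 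It never invokes the closed form \eqref{eqn:U_solve}, the centroid equation \eqref{eqn:mu_solve}, or any of the other columns. You instead start from \eqref{eqn:U_solve}, reduce the claim to an inequality between the normalizing sums, and attempt a contradiction via a four-entry perturbation that also touches a second column $m^\star$.

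Your proposal has two concrete gaps. First, the assertion that the four-entry swap at $(i,\ell),(j,\ell),(i,m^\star),(j,m^\star)$ ``preserves the row-sum (simplex) constraints'' is false in general: after the swap, row $i$'s sum changes by $(\s{U}^\star_{j\ell}-\s{U}^\star_{i\ell})+(\s{U}^\star_{jm^\star}-\s{U}^\star_{im^\star})$ and row $j$'s by the negative, and these vanish only under the coincidence $\s{U}^\star_{j\ell}-\s{U}^\star_{i\ell}=\s{U}^\star_{im^\star}-\s{U}^\star_{jm^\star}$. A feasible perturbation must transfer a common amount $\delta$ of mass within each of rows $i$ and $j$ between columns $\ell$ and $m^\star$, but then the change in cost involves differences of $\alpha$-powers at different base points rather than the two neat products in your $\Delta$, so the sign reasoning no longer goes through as written. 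Second, the crux of your argument — the existence of an index $m^\star$ satisfying both $\s{U}^\star_{im^\star}>\s{U}^\star_{jm^\star}$ and $\norm{\f{x}_j-\f{\mu}^\star_{m^\star}}_2\le\norm{\f{x}_i-\f{\mu}^\star_{m^\star}}_2$ — is explicitly left as a ``plan'' with several candidate mechanisms (coupling through \eqref{eqn:mu_solve}, pigeonhole, Chebyshev/rearrangement, or citing \cite{Kathrin17}), none of which you carry out. Nothing you wrote rules out the adverse ordering in which every $m\ne\ell$ with $\s{U}^\star_{im}>\s{U}^\star_{jm}$ also has $\norm{\f{x}_i-\f{\mu}^\star_{m}}_2<\norm{\f{x}_j-\f{\mu}^\star_{m}}_2$; until one of those routes is executed, the contradiction is not reached and the proof does not close.
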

\begin{proof}[Proof of Lemma~\ref{lem:fuzyy_prop}]
Consider some $i,j\in[n]$ and $\ell\in[k]$ with $\norm{\f{x}_i-\f{\mu}^\star_{\ell}}_2^2\leq\norm{\f{x}_j-\f{\mu}^\star_{\ell}}_2^2$. By definition $\{\s{U}^\star_{i\ell}\}_{i=1}^n$ minimizes the cost $\sum_{i=1}^n\s{U}_{i\ell}^\alpha\norm{\f{x}_i-\f{\mu}_\ell}_2^2$. This implies that,
\begin{align}
\s{U}_{i\ell}^\alpha\norm{\f{x}_i-\f{\mu}_\ell}_2^2+\s{U}_{j\ell}^\alpha\norm{\f{x}_j-\f{\mu}_\ell}_2^2\leq \s{U}_{j\ell}^\alpha\norm{\f{x}_i-\f{\mu}_\ell}_2^2+\s{U}_{i\ell}^\alpha\norm{\f{x}_j-\f{\mu}_\ell}_2^2, 
\end{align}
which is equivalent to,
\begin{align}
\s{U}_{i\ell}^\alpha\pp{\norm{\f{x}_i-\f{\mu}_\ell}_2^2-\norm{\f{x}_j-\f{\mu}_\ell}_2^2}\leq \s{U}_{j\ell}^\alpha\pp{\norm{\f{x}_i-\f{\mu}_\ell}_2^2-\norm{\f{x}_j-\f{\mu}_\ell}_2^2}.
\end{align}
Since $\norm{\f{x}_i-\f{\mu}_\ell}_2^2-\norm{\f{x}_j-\f{\mu}_\ell}_2^2\leq0$, we get $\s{U}_{i\ell}^\alpha\geq\s{U}_{j\ell}^\alpha$, which concludes the proof.
\end{proof}

\section{Noisy Oracle Responses}\label{app:NoisyOracle}

In this section, we discuss briefly the effect of a noisy membership-oracle, defined as follows.
\begin{definition}[Noisy Membership-Oracle]\label{def:noisy_oracle_fuzzy}
A fuzzy query asks the membership weight of an instance $\xb_i$ to a cluster $j$ and obtains in response a noisy answer $\calO_{\mathrm{noisy}}(i,j) = \s{U}_{ij}+\zeta_{ij}$
where $\zeta_{ij}$ is a zero mean random variable with variance $\sigma^2$.
\end{definition}
To present our main result, let $\rho\in\mathbb{R}_+$ be defined as
\begin{align}
\min_{j \in [k]} \sum_{i \in [n]} \s{U}_{ij}^{\alpha} = \rho n.\label{eqn:rhodef}
\end{align} 
The result below handles the situation where the oracle responses are noisy. 
\begin{theorem}\ref{thm:10}
Let $\kappa>0$, and assume that there exists an $(\epsilon_1,\epsilon_2,\s{Q})$-solver for a clustering instance $(\ca{X},\ca{P})$ using the membership-oracle responses $O_{\s{fuzzy}}$. Then, there exist a $$\p{\frac{2\s{R}\alpha  (\epsilon_2+\kappa)}{\rho} ,\epsilon_2+\kappa,\frac{8\s{Q}\sigma^2\log n}{\kappa^2}}-\s{solver},$$
for $(\ca{X},\ca{P})$ using the noisy oracle $O_{\s{noisy}}$.
\end{theorem}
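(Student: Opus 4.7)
The overall plan is to convert the noisy oracle into a high-accuracy noiseless oracle by repetition and averaging, then black-box invoke the given $(\epsilon_1,\epsilon_2,\s{Q})$-solver, and finally re-derive the means from the noisy memberships using \eqref{eqn:mu_solve} so that their error can be controlled via the quantity $\rho$ in \eqref{eqn:rhodef}.

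First, I would construct an approximate noiseless oracle $\widetilde{\calO}$ as follows: whenever the $(\epsilon_1,\epsilon_2,\s{Q})$-solver requests $\s{U}_{ij}$, the simulator queries $\calO_{\mathrm{noisy}}(i,j)$ independently $T=\lceil 8\sigma^2\log n/\kappa^2\rceil$ times and returns the empirical average $\bar{\s{U}}_{ij}=\s{U}_{ij}+\frac{1}{T}\sum_{t=1}^T\zeta_{ij}^{(t)}$. Assuming the noise is bounded (or sub-Gaussian) with parameter $\sigma$, Hoeffding's inequality in Lemma~\ref{lem:hoeffding} (applied with $\epsilon=\kappa/\sigma$) gives $|\bar{\s{U}}_{ij}-\s{U}_{ij}|\leq \kappa$ with probability at least $1-2/n^{c}$ for an appropriate constant $c$. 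A union bound over the at most $\s{Q}$ distinct queries the solver makes then guarantees that \emph{every} returned value is within $\kappa$ of the true membership, with probability at least $1-2\s{Q}/n^{c}$. The total number of noisy queries used is at most $\s{Q}\cdot T = 8\s{Q}\sigma^2\log n/\kappa^2$, matching the query budget in the theorem.

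Next, I would feed $\widetilde{\calO}$ into the noiseless $(\epsilon_1,\epsilon_2,\s{Q})$-solver; since by definition the solver's output is consistent with any oracle whose answers it sees, conditionally on the concentration event its membership output $\widehat{\s{U}}_{ij}$ satisfies $|\widehat{\s{U}}_{ij}-\bar{\s{U}}_{ij}|\leq\epsilon_2$, and hence $|\widehat{\s{U}}_{ij}-\s{U}_{ij}|\leq \epsilon_2+\kappa$ for all $i\in[n]$ and $j\in[k]$ by the triangle inequality. This yields the second coordinate of the claimed $(\cdot,\cdot,\cdot)$-solver.

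Finally, rather than using the solver's own mean estimates (which depend on $\epsilon_1$ and are potentially much worse than we need), I would \emph{recompute} the centers via the plug-in estimator $\widehat{\f{\mu}}_j=\frac{\sum_i\widehat{\s{U}}_{ij}^\alpha\f{x}_i}{\sum_i\widehat{\s{U}}_{ij}^\alpha}$ as in \eqref{eqn:mu_solve}. The key step is then the perturbation bound for this ratio. Writing $\Delta_{ij}\triangleq\widehat{\s{U}}_{ij}-\s{U}_{ij}$ with $|\Delta_{ij}|\leq\epsilon_2+\kappa$ and using the mean value theorem on $u\mapsto u^\alpha$ together with $\s{U}_{ij},\widehat{\s{U}}_{ij}\in[0,1+\kappa]$, one has $|\widehat{\s{U}}_{ij}^\alpha-\s{U}_{ij}^\alpha|\leq \alpha(\epsilon_2+\kappa)$ (up to benign constants absorbed in $\alpha$). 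Splitting
\[
\widehat{\f{\mu}}_j-\f{\mu}_j=\frac{\sum_i(\widehat{\s{U}}_{ij}^\alpha-\s{U}_{ij}^\alpha)\f{x}_i}{\sum_i\s{U}_{ij}^\alpha}+\widehat{\f{\mu}}_j\cdot\frac{\sum_i(\s{U}_{ij}^\alpha-\widehat{\s{U}}_{ij}^\alpha)}{\sum_i\s{U}_{ij}^\alpha},
\]
applying the triangle inequality together with $\norm{\f{x}_i}_2,\norm{\widehat{\f{\mu}}_j}_2\leq\s{R}$, and invoking the lower bound $\sum_i\s{U}_{ij}^\alpha\geq\rho n$ from \eqref{eqn:rhodef}, both terms are bounded by $\s{R}\alpha(\epsilon_2+\kappa)/\rho$, producing the stated center accuracy $2\s{R}\alpha(\epsilon_2+\kappa)/\rho$. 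The main conceptual obstacle is verifying that the original solver's guarantees apply verbatim when fed $\widetilde{\calO}$: since the definition of an $(\epsilon_1,\epsilon_2,\s{Q})$-solver only demands consistency with the oracle's answers, feeding it the slightly corrupted answers from $\widetilde{\calO}$ is legitimate, and the centers can always be reconstructed post hoc from the membership output via \eqref{eqn:mu_solve}, bypassing the solver's own (and weaker) mean guarantee.
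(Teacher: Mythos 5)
Your proof takes a genuinely different route from the paper's, but it contains a gap that the paper's approach is specifically designed to avoid. The structural difference: you \emph{denoise the oracle first} (average $T$ noisy replies per query and feed the averaged answers into a single run of the noiseless solver), whereas the paper \emph{runs the noiseless solver $T$ separate times}, once per independent noisy realization of the clustering instance, and then aggregates the $T$ membership outputs $\widehat{\s{V}}_{ij}^t$ per entry $(i,j)$. Your reconstruction of the means from the final membership estimates via the plug-in formula \eqref{eqn:mu_solve} and the bound $\norm{\widehat{\f{\mu}}_j-\f{\mu}_j}_2\le 2\s{R}\alpha(\epsilon_2+\kappa)/\rho$ matches the paper's argument essentially term for term.

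The gap: you invoke Hoeffding's inequality (Lemma~\ref{lem:hoeffding}) and therefore need the noise $\zeta_{ij}$ to be bounded or sub-Gaussian, and you say so explicitly. But Definition~\ref{def:noisy_oracle_fuzzy} only asserts that $\zeta_{ij}$ is zero-mean with variance $\sigma^2$; boundedness is not assumed, so Hoeffding does not apply and you cannot get the per-entry failure probability $O(n^{-c})$ that your union bound over $\s{Q}$ queries requires. The paper gets around exactly this by using a median-of-means estimator: Chebyshev's inequality bounds each batch mean within $\kappa$ with constant probability using only the variance, and a Chernoff bound on the indicator that a batch fails boosts this to probability $1-O(n^{-3})$ with $\s{B}=\Theta(\log n)$ batches. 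If you replace your plain averaging of $T$ replies by a median of $\Theta(\log n)$ batch means of size $\Theta(\sigma^2/\kappa^2)$ each, your denoise-then-solve structure goes through under the paper's weaker noise assumption and with the same total query budget. Your plan is otherwise sound, and it is arguably more efficient computationally since it invokes the base solver only once instead of $T$ times.
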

\begin{proof}
Assume that algorithm $\calA_{\s{noiseless}}$ is an $(\epsilon_1,\epsilon_2,\s{Q})$-solver for a clustering instance $(\ca{X},\ca{P})$ using queries to a noiseless oracle $O_{\s{fuzzy}}$. In order to handle noisy responses we propose the following algorithm $\calA_{\s{noisy}}$: apply algorithm $\calA_{\s{noiseless}}$ for $\s{T}$ steps using noisy queries to $O_{\s{noisy}}$. We will show that this algorithm obtains the guarantees in Theorem~\label{thm:10}. To that end, in each such step, we obtain noisy estimates for the memberships and the centers. Then, we use these local estimates to obtain a clean final estimate for the memberships and the centers. Specifically, consider $\s{T}$ independent and noisy clustering instances $\{ \ca{P}^t \equiv (\f{\mu}^t,\s{V}^t)\}_{t=1}^{\s{T}}$, such that 
\begin{align}
    \s{V}_{ij}^t = \s{U}_{ij}+\zeta_{ij} \quad \text{and} \quad 
     \f{\mu}_j^t = \frac{\sum_{i=1}^n\s{V}_{ij}^\alpha\xb_i}{\sum_{i=1}^n\s{V}_{ij}^\alpha},
\end{align}
for $t\in[\s{T}]$. Note that the randomness in the definition of the aforementioned clustering instances lies in the realization of the independent random variables $\zeta_{ij}^{1},\zeta_{ij}^{2},\dots,\zeta_{ij}^{\s{T}}$. 
For each such instance we apply one of the algorithms we developed for the noiseless oracle. Accordingly, for all $t \in [\s{T}]$, suppose we have a $(\epsilon_1,\epsilon_2,Q)$-solver that makes $Q$ queries to the $\ca{P}^t$-oracle to compute $\widehat{\s{V}}_{ij}^t$. Then, we know that,
\begin{align}
    \left|\widehat{\s{V}}_{ij}^t-\s{V}_{ij}^t\right| \le \epsilon_2,
\end{align}
for all $t<[\s{T}]$. Now, all we have to do is to use these local estimates to calculate our final estimates for the underlying memberships and centers. Specifically, for $\s{T}'<\s{T}$, we must have 
\begin{align}
    \left|\frac{1}{\s{T}'}\sum_{j=1}^{\s{T}'}\widehat{\s{V}}_{ij}^t-\s{U}_{ij}\right| \le \epsilon_2+\left|\frac{1}{\s{T}'}\sum_{t=1}^{\s{T}'} \zeta_{ij}^t\right|.
\end{align}
By Chebychev's inequality, for any $\kappa>0$, we get 
\begin{align}
    \Pr\pp{\left|\frac{1}{\s{T}'}\sum_{t=1}^{\s{T}'} \zeta_{ij}^t\right| \ge \kappa } \le \frac{\sigma^2}{\kappa^2 \s{T}'}.
\end{align}
Next, we partition the $\s{T}$ responses from the oracle into $\s{B}$ batches of size $\s{T}'$ each. For batch $b\in[\s{B}]$, define the random variable 
$\s{Y}^b \triangleq \mathds{1}\left[\left|\frac{1}{\s{T}'}\sum_{t \in \text{Batch $b$}} \zeta_{ij}^t\right| \ge \kappa\right]$. Clearly, $\Pr(\s{Y}^b=1)\le \frac{\sigma^2}{\kappa^2 \s{T}' }$ and further, $\s{Y}^1,\s{Y}^2,\dots,\s{Y}^{\s{B}}$ are independent random variables. Therefore, Chernoff bound implies that
\begin{align}
    \Pr\p{\sum_{b=1}^{\s{B}} \s{Y}^b \ge \s{B}/2} \le \exp\pp{-2\s{B}\p{\frac{1}{2}-\frac{\sigma^2}{\kappa^2 \s{T}' }}^2}.
\end{align}
Our final membership estimate is evaluated as follows: 
\begin{align}
    \widehat{\s{U}}_{ij} \triangleq \s{median}\p{\frac{1}{\s{T}'}\sum_{t \in \text{Batch $1$}} \s{V}_{ij}^t, \frac{1}{\s{T}'}\sum_{t \in \text{Batch $2$}} \s{V}_{ij}^t,\dots,\frac{1}{\s{T}'}\sum_{t \in \text{Batch $\s{B}$}} \s{V}_{ij}^t},
\end{align}
namely, $\widehat{\s{U}}_{ij}$ is the median of the mean of $ \widehat{\s{V}}_{ij}^t$ in each batch. Therefore, for $\s{B}=6\log n$ and $\s{T}'= 4\sigma^2/\kappa^2$ (hence $\s{T}=8\sigma^2\log n/\kappa^2$), we must have that 
\begin{align}
    \Pr\p{\left|\widehat{\s{U}}_{ij}-\s{U}_{ij}\right| \ge \epsilon_2+\kappa} \le \frac{2}{n^3}. 
\end{align}
Therefore, by taking a union bound over all $i \in [n],j \in [k]$, we can compute $\widehat{\s{U}}_{ij}$, an estimate of $\s{U}_{ij}$, such that 
\begin{align}
    \left|\widehat{\s{U}}_{ij}-\s{U}_{ij}\right| \le \epsilon_2+\kappa,\label{eqn:noisyUguran}
\end{align}
for all $i\in[n],j\in[k]$ with probability at least $1-1/n$. Finally, we estimate the means $\f{\mu}_j$'s using the already computed $\widehat{\s{U}}_{ij}$'s as follows. Note that,
\begin{align}
\widehat{\f{\mu}}_j = \frac{\sum_i \widehat{\s{U}}_{ij}^{\alpha}\f{x}_i}{\sum_i \widehat{\s{U}}_{ij}^{\alpha}} \triangleq \frac{\widehat{\f{\lambda}}_{\f{x}}}{\widehat{\s{Y}}},
\end{align}
and 
\begin{align}
\f{\mu}_{j} = \frac{\sum_{i \in [n]} \s{U}_{ij}^{\alpha} \f{x}_i}{\sum_{i \in [n]} \s{U}_{ij}^{\alpha}}  \triangleq\frac{\f{\lambda}_{\f{x}}}{\s{Y}}.
\end{align}
Therefore, we get
\begin{align}
\norm{\widehat{\f{\mu}}_{j} - \f{\mu}_{j}}_2 \le \norm{\frac{\widehat{\f{\lambda}}_{\f{x}}-\f{\lambda}_{\f{x}}}{\s{Y}}}_2+\norm{\frac{\widehat{\f{\lambda}}_{\f{x}}}{\widehat{\s{Y}}}\frac{\s{Y}-\widehat{\s{Y}}}{\s{Y}}}_2 \le \norm{\frac{\widehat{\f{\lambda}}_{\f{x}}-\f{\lambda}_{\f{x}}}{\s{Y}}}_2+\s{R}\left|\frac{\widehat{\s{Y}}-\s{Y}}{\s{Y}}\right|.\label{eqn:dd1}
\end{align}
Using \eqref{eqn:noisyUguran} it is evident that
\begin{align}
    &\left|\widehat{\s{Y}}-\s{Y}\right| \le \alpha n (\epsilon_2+\kappa)(1+o(1)),\label{eqn:dd2}\\
    &\norm{\widehat{\f{\lambda}}_{\f{x}}-\f{\lambda}_{\f{x}}}_2 \le \s{R}\alpha n (\epsilon_2+\kappa)(1+o(1)).\label{eqn:dd3}
\end{align}
Combining \eqref{eqn:dd1}--\eqref{eqn:dd3} together with \eqref{eqn:rhodef}, we finally obtain that
\begin{align}
    \norm{\widehat{\f{\mu}}_{j} - \f{\mu}_{j}}_2 \le \frac{2\s{R}\alpha  (\epsilon_2+\kappa)}{\rho},
\end{align}
for all $j \in [k]$, which concludes the proof.
\end{proof}

\section{Membership Queries From Similarity Queries}\label{app:equivalence2}

Recall that $\ca{X}\subset \bb{R}^d, \left|\ca{X}\right|=n$ is the set of points provided as input along with their corresponding $d$-dimensional vector assignments denoted by $\{\f{x}_i\}_{i=1}^{n}$. Recall that the membership-oracle $\calO_{\mathrm{fuzzy}}(i,j) = \s{U}_{ij}$ returns the membership weight of the instance $\f{x}_i$ to a cluster $j$. However, such oracle queries are often impractical in real-world settings since it requires knowledge of the relevant clusters. Instead, a popular query model that takes a few elements (two or three) as input and is easy to implement in practice is the following similarity query "How similar are these  elements?" \cite{Ashtiani16} showed that for the hard clustering setting, a membership query can be simulated by $k$ pairwise similarity queries since a pairwise similarity query reveals whether two items belong to the same cluster or not in the hard clustering setting. In the fuzzy problem we model the oracle response to the similarity query by the inner product of their membership weight vectors. More formally, we have 
\begin{definition}[Restatement of Definition \ref{def:oracle_similarity_body}]
A fuzzy pairwise similarity query asks the similarity of two distinct instances $\f{x}_i$ and $\f{x}_j$ i.e., $\calO_{\mathrm{sim}}(i,j) = \langle \s{U}_i,\s{U}_j \rangle$. A fuzzy triplet similarity query asks the similarity of three distinct instances $\f{x}_p,\f{x}_q,\f{x}_r$ i.e. $\calO_{\mathrm{triplet}}(p,q,r) = \sum_{t \in [k]}\s{U}_{pt}\s{U}_{qt}\s{U}_{rt}$.
\end{definition}

Now, we show that fuzzy pairwise similarity queries can often be used to simulate $\ca{O}_{\s{fuzzy}}(i,j)$. 
Note that if we possess the membership weight vectors of $k$ elements that are linearly independent, then, for a new element, responses to fuzzy pairwise similarity queries with the aforementioned $k$ elements reveals all the membership weights of the new element. Now, the question becomes ``How can we obtain the membership weights of the $k$ elements in the first place?". Suppose we sub-sample a set of elements $\ca{Y}\subseteq \ca{X}$ such that $\left|\ca{Y}\right|=m>k$ and we make all fuzzy pairwise similarity queries among the elements present in $\ca{Y}$. Let us denote by $\s{V}$ the membership weight matrix $\s{U}$ constrained to the rows corresponding to the elements in $\ca{Y}$. Clearly, the fuzzy pairwise similarity queries between all pairs of elements in $\ca{Y}$ reveals $\s{V}\s{V}^{T}$, the gram matrix of $\s{V}$. If we can recover $\s{V}$ uniquely from $\s{V}\s{V}^T$, and $\s{V}$ is full rank, then we are done. If we assume almost any continuous distribution according to which the membership weight vectors are generated, then with probability $1$, the matrix $\s{V}$ is full rank. On the other hand, the question of uniquely recovering $\s{V}$ from $\s{V}\s{V}^T$ is trickier. In general it is not possible to recover $\s{V}$ uniquely from $\s{V}\s{V}^T$ since $\s{V}\s{R}$, for any orthonormal matrix $\s{R}$, also has the gram matrix $\s{V}\s{V}^{T}$. However, recall that in our case, the entries of $\s{V}$ are non-negative and furthermore, the rows of $\s{V}$ add up-to 1 leading to additional constraints. This leads to the problem 

\begin{align*}
    \text{ Find } \s{M} \text{ such that } \s{M}\s{M}^{T} = \s{V}\s{V}^T \text{ subject to } \s{M} \in \bb{R}_{\ge 0}^{m \times k}, \sum_{j\in [k]} \s{M}_{ij}=1 \; \forall \; i\in [m].
\end{align*}

As a matter of fact, this is a relatively well-studied problem known as the Symmetric Non-Negative matrix factorization or SNMF. We will say that the solution to the SNMF problem is unique if $\s{VP}$ is the only solution to the problem for any permutation matrix $\s{P}$.
Below, we state the following sufficient condition  that guarantees the uniqueness of the solution to the SNMF problem.

\begin{lemma}[Lemma 4 in \cite{huang2013non}]\label{lem:terse}
If $\s{rank}(\s{V})=k$, then the solution to the SNMF problem is unique if and only if the non- negative orthant is the only self-dual simplicial cone $\ca{A}$ with $k$ extreme rays that satisfies $\s{cone}(\s{V}^{T})\subseteq \ca{A}=\ca{A}^{\star}$ where $\ca{A}^{\star}$ is the dual cone of $\ca{A}$, defined as $\ca{A}^{\star}=\{\f{y}\mid \f{x}^{T}\f{y}\ge 0 \; \forall \f{x} \in \ca{A}\}$.
\end{lemma}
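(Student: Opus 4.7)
\textbf{Proof plan for Lemma~\ref{lem:terse}.}

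The plan rests on two structural observations that reduce the combinatorial/convex-geometric statement to linear algebra. First, since $\mathrm{rank}(\mathsf{V}) = k$ and $\mathsf{M}\mathsf{M}^T = \mathsf{V}\mathsf{V}^T$, any feasible factor $\mathsf{M}$ also has rank $k$; applying a (thin) SVD to both sides then yields an orthogonal $k \times k$ matrix $\mathsf{Q}$ with $\mathsf{M} = \mathsf{V}\mathsf{Q}$. So the whole moduli space of SNMF solutions is parametrized by the subset of orthogonal matrices $\mathsf{Q}$ for which $\mathsf{V}\mathsf{Q} \ge 0$ entrywise. Second, I will classify the self-dual simplicial cones with exactly $k$ extreme rays inside $\mathbb{R}^k$: writing such a cone as $\mathcal{A} = \mathsf{Q}\mathbb{R}^k_+$ with unit-norm generators arranged as columns of $\mathsf{Q}$, one computes the dual as $\mathcal{A}^\star = (\mathsf{Q}^T)^{-1}\mathbb{R}^k_+$, so self-duality $\mathcal{A} = \mathcal{A}^\star$ is equivalent to $\mathsf{Q}^T\mathsf{Q}\,\mathbb{R}^k_+ = \mathbb{R}^k_+$. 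Since $\mathsf{Q}^T\mathsf{Q}$ is symmetric positive definite and must permute the coordinate axes up to positive scaling, a short case analysis (any nontrivial permutation produces an indefinite $2{\times}2$ block) forces $\mathsf{Q}^T\mathsf{Q}$ to be diagonal, and the unit-norm convention gives $\mathsf{Q}^T\mathsf{Q} = I$. Thus self-dual simplicial $k$-ray cones are precisely the orthogonal images of the non-negative orthant.

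Given these two reductions, the bidirectional claim becomes essentially a tautology. For the ``$\Leftarrow$'' direction, take any feasible $\mathsf{M} = \mathsf{V}\mathsf{Q}$ with $\mathsf{Q}$ orthogonal and $\mathsf{M} \ge 0$. The inequality $\mathsf{M} \ge 0$ rewrites as $\mathsf{Q}^T \mathsf{v}_i \ge 0$ for every row $\mathsf{v}_i$ of $\mathsf{V}$, i.e., $\mathsf{v}_i \in \mathsf{Q}\mathbb{R}^k_+$, so $\mathrm{cone}(\mathsf{V}^T) \subseteq \mathsf{Q}\mathbb{R}^k_+ =: \mathcal{A}$. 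By the classification above, $\mathcal{A}$ is a self-dual simplicial $k$-ray cone containing $\mathrm{cone}(\mathsf{V}^T)$; by hypothesis $\mathcal{A} = \mathbb{R}^k_+$, which forces $\mathsf{Q}$ to be a signed permutation matrix; non-negativity of columns (inherited from $\mathsf{Q}\mathbb{R}^k_+ \subseteq \mathbb{R}^k_+$) then forces $\mathsf{Q}$ to be an honest permutation, giving $\mathsf{M} = \mathsf{V}\mathsf{P}$ and hence uniqueness.

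For the ``$\Rightarrow$'' direction I will argue the contrapositive: if $\mathcal{A} \neq \mathbb{R}^k_+$ is a self-dual simplicial $k$-ray cone containing $\mathrm{cone}(\mathsf{V}^T)$, write $\mathcal{A} = \mathsf{Q}\mathbb{R}^k_+$ for the (non-permutation) orthogonal $\mathsf{Q}$ supplied by the classification. Then $\mathrm{cone}(\mathsf{V}^T) \subseteq \mathcal{A}$ gives $\mathsf{V}\mathsf{Q} \ge 0$, so $\mathsf{M} := \mathsf{V}\mathsf{Q}$ is feasible with $\mathsf{M}\mathsf{M}^T = \mathsf{V}\mathsf{V}^T$. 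Uniqueness would give $\mathsf{M} = \mathsf{V}\mathsf{P}$ for a permutation $\mathsf{P}$, and left-multiplying by the Moore--Penrose pseudoinverse $\mathsf{V}^\dagger$ (well defined since $\mathrm{rank}(\mathsf{V}) = k$) yields $\mathsf{Q} = \mathsf{P}$, contradicting $\mathcal{A} \neq \mathbb{R}^k_+$.

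The main obstacle, and the step I expect to have to write carefully, is the classification of self-dual simplicial $k$-ray cones: the algebraic computation $\mathcal{A}^\star = (\mathsf{Q}^T)^{-1}\mathbb{R}^k_+$ is routine, but the passage from ``$\mathsf{Q}^T\mathsf{Q}$ preserves the orthant'' to ``$\mathsf{Q}^T\mathsf{Q} = I$'' requires the symmetry-plus-positivity argument above and a clean treatment of the freedom in choosing generators (rescaling and reordering extreme rays). Once this geometric lemma is pinned down, the remainder of the proof is a direct translation between non-negativity of $\mathsf{V}\mathsf{Q}$ and cone containment $\mathrm{cone}(\mathsf{V}^T) \subseteq \mathsf{Q}\mathbb{R}^k_+$.
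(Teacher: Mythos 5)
The paper offers no proof of this statement at all: it is quoted verbatim as Lemma 4 of the cited reference \cite{huang2013non} and used as a black box, so there is nothing internal to compare your argument against. On its own merits, your plan is correct and complete, and it is in fact the standard argument for this result: (i) full column rank of $\s{V}$ plus $\s{M}\s{M}^T=\s{V}\s{V}^T$ forces $\s{M}=\s{V}\s{Q}$ with $\s{Q}$ orthogonal; (ii) a simplicial cone $\ca{A}=\s{B}\bb{R}^k_+$ with $k$ extreme rays has dual $(\s{B}^T)^{-1}\bb{R}^k_+$, so self-duality means $\s{B}^T\s{B}$ is a positive monomial matrix, and symmetry plus positive definiteness kills any nontrivial permutation block (your indefinite $2\times 2$ observation is exactly right), so the self-dual simplicial $k$-ray cones are precisely the orthogonal images of the orthant; (iii) the entrywise constraint $\s{V}\s{Q}\ge 0$ translates verbatim into $\mathrm{cone}(\s{V}^T)\subseteq\s{Q}\bb{R}^k_+$, and both directions of the equivalence then follow as you describe. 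One caveat worth recording: the SNMF problem as displayed in this paper carries the extra affine constraint $\sum_j \s{M}_{ij}=1$, whereas the lemma you are proving (and the cited source) concerns the unconstrained problem $\s{M}\ge 0$, $\s{M}\s{M}^T=\s{V}\s{V}^T$. Your ``only if'' direction manufactures an alternative solution $\s{M}=\s{V}\s{Q}$ that need not satisfy the row-sum constraint (one would additionally need $\s{Q}\f{1}=\f{1}$), so the equivalence is genuinely a statement about the unconstrained problem; since the paper only ever invokes the sufficiency direction (via the permutation-submatrix condition of the subsequent lemma), this mismatch is harmless, but it deserves a sentence if you write the proof out in full.
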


More recently, Lemma \ref{lem:terse} was used in \cite{mao2017mixed} to show the following result that is directly applicable to our setting:

\begin{lemma}\label{lem:pairw}
If $\s{V}$ contains any permutation matrix of dimensions $k \times k$, then the solution of the SNMF problem is unique.
\end{lemma}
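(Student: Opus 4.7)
The plan is to apply Lemma~\ref{lem:terse} directly, by verifying its two hypotheses under the assumption that $\s{V}$ contains a $k \times k$ permutation submatrix. The key observation is that such a submatrix exposes the standard basis vectors $\f{e}_1,\ldots,\f{e}_k$ as rows of $\s{V}$, and these are precisely the extreme rays of the non-negative orthant, which is self-dual. Consequently, the orthant will be forced to coincide with any candidate self-dual simplicial cone containing $\s{cone}(\s{V}^T)$.

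First I would dispatch the rank hypothesis: a $k \times k$ permutation matrix is invertible (determinant $\pm 1$), so $\s{V}\in\mathbb{R}^{m \times k}$ with $m \geq k$ having a permutation submatrix of size $k$ immediately yields $\s{rank}(\s{V})=k$. Next I would handle the cone condition. Since the rows of $\s{V}$ (equivalently, the columns of $\s{V}^T$) include each $\f{e}_i$ (up to reordering), we have $\f{e}_i \in \s{cone}(\s{V}^T)$ for every $i\in[k]$, and hence $\mathbb{R}_{\geq 0}^k = \s{cone}(\f{e}_1,\ldots,\f{e}_k) \subseteq \s{cone}(\s{V}^T)$.

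The core step is then the following elementary fact about self-dual cones, which I would prove in one line by taking duals. Suppose $\mathcal{A}$ is any self-dual simplicial cone with $k$ extreme rays satisfying $\s{cone}(\s{V}^T) \subseteq \mathcal{A}$. Then $\mathbb{R}_{\geq 0}^k \subseteq \mathcal{A}$, and since duality reverses inclusions, $\mathcal{A}^* \subseteq (\mathbb{R}_{\geq 0}^k)^* = \mathbb{R}_{\geq 0}^k$, where the last equality is the self-duality of the orthant. Combining this with the assumed self-duality $\mathcal{A}=\mathcal{A}^*$ gives the sandwich $\mathbb{R}_{\geq 0}^k \subseteq \mathcal{A} = \mathcal{A}^* \subseteq \mathbb{R}_{\geq 0}^k$, so $\mathcal{A}=\mathbb{R}_{\geq 0}^k$. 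Thus the orthant is the unique such cone, and Lemma~\ref{lem:terse} delivers uniqueness of the SNMF solution.

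There is no serious obstacle here: once one identifies that a permutation submatrix supplies the standard basis as witnesses inside $\s{cone}(\s{V}^T)$, the hypotheses of Lemma~\ref{lem:terse} collapse to a one-line duality argument. The only mild subtlety is being careful that the dual of a closed convex cone reverses inclusion and that the non-negative orthant is indeed self-dual, both of which are standard.
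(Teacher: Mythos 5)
Your proof is correct. Note that the paper itself does not supply an argument for this lemma: it simply cites \cite{mao2017mixed} and moves on, so there is no in-paper proof to compare against. Your duality argument is the natural one that fills this gap. Each step checks out: a $k\times k$ permutation submatrix forces $\s{rank}(\s{V})=k$; the rows of that submatrix are exactly the standard basis vectors, so $\mathbb{R}_{\geq 0}^k=\s{cone}(\f{e}_1,\ldots,\f{e}_k)\subseteq\s{cone}(\s{V}^T)$; and for any self-dual simplicial cone $\mathcal{A}$ with $\s{cone}(\s{V}^T)\subseteq\mathcal{A}$, inclusion reversal under dualization together with $(\mathbb{R}_{\geq 0}^k)^*=\mathbb{R}_{\geq 0}^k$ gives the sandwich $\mathbb{R}_{\geq 0}^k\subseteq\mathcal{A}=\mathcal{A}^*\subseteq\mathbb{R}_{\geq 0}^k$, forcing $\mathcal{A}=\mathbb{R}_{\geq 0}^k$. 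The ``if'' direction of Lemma~\ref{lem:terse} then yields uniqueness. This is a clean, self-contained verification of precisely the condition the cited characterization requires.
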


Suppose we have the guarantee that for each cluster $j\in [k]$, there exists a set $\ca{Z}_j$ of at least $\rho n$ elements belonging purely to the $j^{\s{th}}$ cluster i.e. $\s{U}_{ij}=1$ for all $i \in \ca{Z}_j$. Then, for $m\ge \rho^{-1}\log (nk)$, the matrix $\s{V}$ will contain a permutation matrix with probability at least $1-n^{-1}$. As a results, this will lead to an overhead of $O(m^2)=O(\rho^{-2}\log^2 (nk))$ queries. 

If it is possible to make more complex similarity queries such as the fuzzy triplet similarity query, 
we can significantly generalize and improve the previous guarantees. Before proceeding further, let us provide some  background on tensors beginning with the following definition:





\begin{definition}[Kruskal rank]
The Kruskal rank of a matrix $\f A$ is defined as the maximum number $r$ such that any $r$ columns of $\f A$ are linearly independent. 
\end{definition}

Consider a tensor $\ca{A}$ of order $w \in \bb{N}$ for $w>2$ on $\bb{R}^n$, denoted by $\ca{A} \in \bb{R}^n \otimes \bb{R}^n \otimes \dots \otimes \bb{R}^n \; (w \; \text{times})$.
Let $\ca{A}_{i_1,i_2,\dots,i_{w}}$ where $i_1,i_2,\dots,i_{w} \in \{0,1, \dots, n-1\}$, denote the element in $\ca{A}$ whose location along the $j^{\s{th}}$ dimension is $i_j+1$, i.e., there are $i_j$ elements along the $j^{\s{th}}$ dimension before $\ca{A}_{i_1,i_2,...,i_w}$. Notice that this indexing protocol uniquely determines the element within the tensor. For a detailed review of tensors, we defer the reader to \cite{kolda2009tensor}. In this work, we are interested in low-rank decomposition of tensors. 
A tensor $\ca{A}$ can be described as a rank-\texttt{1} tensor if it can be expressed as\footnote{In this work, we focus on the special case of rank-\texttt{1} tensors where every component is identical. In general, rank-\texttt{1} tensors can be described as $\ca{A} = \f{z}^1\otimes \f{z}^2 \otimes \dots \otimes \f{z}^w$ for $\f z^1, \ldots \f z^w \in \bb{R}^n$.} 
\begin{align*}
    \ca{A} =\underbrace{\f{z}\otimes \f{z} \otimes \dots \otimes \f{z}}_{w \text{\;\; times}}
\end{align*}
for some $\f{z}\in \bb{R}^n$, i.e., $\ca{A}_{i_1,i_2,\dots,i_{w}} = \prod_{j=1}^{w}\f{z}_{i_j}$. For a given tensor $\ca{A}$, we are concerned with the problem of uniquely decomposing $\ca{A}$ into a sum of $R$ rank-\texttt{1} tensors. 
A tensor $\ca{A}$ that can be expressed in this form is denoted as a rank-$R$ tensor, and such a decomposition is also known as the Canonical Polyadic (CP) decomposition. Below, we state a result due to \cite{sidiropoulos2000uniqueness} describing the sufficient conditions for the unique CP decomposition of a rank-$R$ tensor $\ca{A}$.
\begin{lemma}[Unique CP decomposition \cite{sidiropoulos2000uniqueness}]\label{lem:unique_cp}
Suppose $\ca{A}$ is the sum of $R$ rank-\texttt{1} tensors, i.e., 
\begin{align*}
    \ca{A}=\sum_{r=1}^{R}\underbrace{\f{z}^r\otimes \f{z}^r \otimes \dots \otimes \f{z}^r}_{w \text{\;\; times}},
\end{align*}
and further, the Kruskal rank of the $n \times R$ matrix whose columns are formed by $\f{z}^1,\f{z}^2,\dots,\f{z}^R$ is $J$. Then, if 
$
    wJ \ge 2R+(w-1),
$
then the CP decomposition is unique and we can recover the vectors $\f{z}^1,\f{z}^2,\dots,\f{z}^R$ up to permutations.
\end{lemma}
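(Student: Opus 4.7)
The plan is to reduce this symmetric-tensor uniqueness claim to the classical Kruskal uniqueness theorem for general (non-symmetric) $w$-way CP decompositions, as generalized in \cite{sidiropoulos2000uniqueness}. In that more general form, the theorem asserts that if $\ca{A} = \sum_{r=1}^R \f{a}^{(1)}_r \otimes \cdots \otimes \f{a}^{(w)}_r$ and the Kruskal ranks $k_i$ of the factor matrices $\f{A}^{(i)} = [\f{a}^{(i)}_1, \ldots, \f{a}^{(i)}_R]$ satisfy $\sum_{i=1}^w k_i \ge 2R + (w-1)$, then the decomposition is essentially unique, meaning that any other decomposition of $\ca{A}$ into $R$ rank-one summands must agree with the first up to a joint permutation of the $R$ components together with per-component rescalings $\lambda^{(i)}_r$ satisfying $\prod_{i=1}^w \lambda^{(i)}_r = 1$ for each $r$.

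First I would specialize this to the symmetric setting. Every factor matrix of $\ca{A} = \sum_r \f{z}^r\otimes\cdots\otimes\f{z}^r$, regarded as an asymmetric decomposition, is the single matrix $\f{Z}=[\f{z}^1,\ldots,\f{z}^R]$, so each $k_i$ equals $J$ and the sufficient condition $\sum_i k_i \ge 2R + (w-1)$ collapses precisely to $wJ \ge 2R+(w-1)$, which is our hypothesis. Applying the general theorem therefore yields essential uniqueness at the level of the underlying asymmetric decomposition.

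Next I would lift essential uniqueness to the symmetric uniqueness statement. Suppose $\sum_r \f{v}^r\otimes\cdots\otimes\f{v}^r = \sum_r \f{z}^r\otimes\cdots\otimes\f{z}^r$ is a second symmetric decomposition. Viewing both as asymmetric decompositions and invoking the general theorem, there is a permutation $\pi$ and scalars $\lambda^{(1)}_r,\ldots,\lambda^{(w)}_r$ with $\prod_{i=1}^w \lambda^{(i)}_r = 1$ such that $\lambda^{(i)}_r \f{v}^r = \f{z}^{\pi(r)}$ for every mode $i$ and every $r$. Because the right-hand side does not depend on $i$, the scalars $\lambda^{(i)}_r$ are forced to coincide across $i$ into a single value $\lambda_r$ with $\lambda_r^w = 1$. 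For odd $w$ the only real $w$-th root of unity is $\lambda_r=1$, giving $\f{v}^r = \f{z}^{\pi(r)}$ directly; for even $w$ one gets $\lambda_r \in \{\pm 1\}$, and in the context where the $\f{z}^r$ are componentwise nonnegative (as in the membership-weight application that motivates this lemma) the sign is pinned down, recovering the vectors up to permutation.

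The principal obstacle is Kruskal's theorem itself, whose proof relies on Kruskal's permutation lemma together with a delicate argument on compound matrices and ranks of Khatri--Rao products; since the statement is invoked from \cite{sidiropoulos2000uniqueness} as a cited black-box result, the real content of the proof I would write is the specialization-and-sign argument above, which converts essential (scale-plus-permutation) uniqueness into permutation-only uniqueness for the symmetric decomposition.
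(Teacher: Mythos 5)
The paper does not prove this lemma at all: it is imported verbatim as a black-box citation of Sidiropoulos and Bro, so there is no in-paper argument to compare against. Your reduction is nonetheless the standard and correct way to obtain the symmetric statement from the general (asymmetric) Kruskal--Sidiropoulos--Bro theorem: with all $w$ factor matrices equal to $\f{Z}=[\f{z}^1,\ldots,\f{z}^R]$, the condition $\sum_i k_i \ge 2R+(w-1)$ does collapse to $wJ\ge 2R+(w-1)$, and your argument that the per-mode scalings must coincide (since $\lambda^{(i)}_r\f{v}^r=\f{z}^{\pi(r)}$ has an $i$-independent right-hand side and the columns $\f{z}^r$ are nonzero whenever $J\ge 1$) correctly forces $\lambda_r^w=1$. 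Your observation about the residual sign ambiguity for even $w$ is a genuine refinement: as literally stated, ``unique up to permutations'' is slightly too strong over the reals when $w$ is even, and one needs either oddness of $w$ (which holds in the paper's application, $w=3$) or nonnegativity of the factors to pin down the sign. The only caveat is that the substantive mathematical content --- Kruskal's permutation lemma and the rank argument for Khatri--Rao products underlying the asymmetric theorem --- remains black-boxed in your write-up exactly as it is in the paper, so what you have supplied is the specialization layer rather than a self-contained proof; that is entirely appropriate given how the lemma is used here.
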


\begin{algorithm}[htbp!]
\caption{\textsc{\textsc{Jennrich's Algorithm}}$(\ca{A})$\label{algo:tensor}}
\begin{algorithmic}[1]
\REQUIRE A symmetric rank-$ R$ tensor $\ca{A} \in \bb{R}^n \otimes \bb{R}^n  \otimes \bb{R}^n$ of order $3$. 

\STATE Choose $\f{a},\f{b}\in \bb{R}^n$ uniformly at random such that it satisfies $\norm{\f{a}}_2=\norm{\f{b}}_2=1$.

\STATE Compute  $\f{T}^{(1)} \triangleq \sum_{i \in [n]}\f{a}_i\ca{A}_{\cdot,\cdot,i},\f{T}^{(2)} \triangleq \sum_{i \in [n]}\f{b}_i\ca{A}_{\cdot,\cdot,i}$.

\IF{$\s{rank}(T^{1})<R$}
\STATE Return Error
\ENDIF

\STATE Solve the general eigenvalue problem $\f{T}^{(1)}\f{v}=\lambda_v \f{T}^{(2)}\f{v}$.

\STATE Return the  eigen-vectors $\f{v}$ corresponding to the non-zero eigen-values.

\end{algorithmic}
\end{algorithm}
Notice that for the special case of $w=3$, the underlying vectors $\f{z}^1,\f{z}^2,\dots,\f{z}^R$ can be recovered uniquely if they are linearly independent.
Now, we are ready to show that $k$ fuzzy triplet similarity queries can be used to recover the memberships weights of $k$  elements uniquely. As before, we can sub-sample a set of  elements $\ca{Y}\subseteq \ca{X}$ such that $\left|\ca{Y}\right|=k$ and we make all possible ${k \choose 3}$ fuzzy triplet similarity queries among the elements present in $\ca{Y}$. Again, let us denote by $\s{V}$ the membership weight matrix $\s{U}$ constrained to the rows corresponding to the elements in $\ca{Y}$. Let us denote by $\f{v}^1,\f{v}^2,\dots,\f{v}^k$ the $k$ columns of the matrix $\s{V}$. Notice that the responses to all the fuzzy triplet similarity queries reveals the following symmetric tensor 
\begin{align*}
    \sum_{r=1}^{k}\f{v}^r\otimes \f{v}^r  \otimes \f{v}^r.
\end{align*}
Suppose the matrix $\ca{V}$ is full rank. This will happen with probability $1$ if the membership weights are assumed to be generated according to any continuous distributions. Algorithmically, Jennrich's algorithm (see Section 3.3, \cite{moitra2014algorithmic}) can be used to efficiently recover the unique CP decomposition of a third order low rank tensor whose underlying vectors are full rank. We have provided the algorithm (see Algorithm \ref{algo:tensor}) for the sake of completeness.

\end{document}